\documentclass[final, 12pt]{alt2023preprint} % for preprint submission
%\documentclass[final, 12pt]{alt2023} % Anonymized submission (pre)
%\documentclass[anon, 12pt]{alt2023} % Anonymized submission 
%\documentclass[12pt]{alt2023} % Include author names

% The following packages will be automatically loaded:
% amsmath, amssymb, natbib, graphicx, url, algorithm2e

% ***** user added from here *****
%\usepackage{amsmath} % t
%\usepackage{amssymb} % t
% \usepackage{ascmac}
% \usepackage{authblk}  % remove for final version
\usepackage{mathtools}
\usepackage{stmaryrd} % t 
\usepackage{tgtermes}  % t

% % packages
%\usepackage[ruled,vlined,linesnumbered]{algorithm2e}
% \newcommand\mycommfont[1]{\footnotesize\ttfamily\textcolor{blue}{#1}}

\SetCommentSty{mycommfont}
\usepackage{algorithm}  % for ALT
\usepackage{amsmath}
\usepackage{amssymb}
\usepackage{bbm}
\usepackage{bm}
\usepackage[normalem]{ulem}
\usepackage{booktabs} % for professional tables
\usepackage{eucal}  % good mathcal
\usepackage{mathrsfs}
\usepackage{comment}
%\usepackage{refcheck}

% \usepackage{showkeys} % show label name

% \usepackage{xcolor}
% \usepackage[dvipsnames]{xcolor}
% config
%\usepackage{hyperref}
%\hypersetup{
%  colorlinks=true,
%%   linkcolor=blue,
%%   citecolor=MidnightBlue,
%%   linkcolor=magenta,
%%   citecolor=MidnightBlue,
%  linkcolor=blue,
%  citecolor=blue,
%    % colorlinks=false,
%    % citebordercolor=green,
%    % linkbordercolor=red,
%    % urlbordercolor=cyan,
%}
% \counterwithout{footnote}{chapter}

% theorems

% *** joint numbering (separate only definition) *** removed this
% \newtheorem{theorem}{Theorem}
% \newtheorem*{theorem*}{Theorem}
%\newtheorem*{proof*}{proof}
% ====
\newenvironment{proofof}[1]%
{%
 \par\noindent{\bfseries\upshape Proof of {#1}. }%
}%
{\hfill\BlackBox\\[2mm]}
% ====
% ====
\newenvironment{proofsketchof}[1]%
{%
 \par\noindent{\bfseries\upshape Proof sketch of {#1}. }%
}%
{\hfill\BlackBox\\[2mm]}
% ====
% \renewenvironment{proof}{{\bfseries Proof} \ }{proof}
\let\oldproofname=\proofname
\renewcommand{\proofname}{\rm\bf{\oldproofname}}
% \renewenvironment{proof}[1][\proofname]{\noindent\rm{\bfseries #1.}}{\qed}
% \newtheorem{assumption}[theorem]{Assumption}
% \newtheorem*{assumption*}{Assumption}
% \newtheorem{lemma}[theorem]{Lemma}
% \newtheorem*{lemma*}{Lemma}
% \newtheorem{proposition}[theorem]{Proposition}
% \newtheorem*{proposition*}{Proposition}
% \newtheorem{corollary}[theorem]{Corollary}
% \newtheorem*{corollary*}{Corollary}
% \newtheorem{fact}[theorem]{Fact}
% \newtheorem*{fact*}{Fact}
%\newtheorem{theorem}{Theorem}
%\newtheorem{lemma}{Lemma}
%\newtheorem{proposition}{Proposition}
%\newtheorem{corollary}{Corollary}
%\newtheorem{observation}{Observation}
%\newtheorem{assumption}{Assumption}

%\theoremstyle{definition} 
%\newtheorem{definition}{Definition}
%\newtheorem*{definition*}{Definition}

% *** separate numbering ***
% \newtheorem{theorem}{Theorem}
% \newtheorem*{theorem*}{Theorem}
% \newtheorem*{proof*}{proof}
% \newtheorem{definition}{Definition}
% \newtheorem*{definition*}{Definition}
% \newtheorem{assumption}{Assumption}
% \newtheorem*{assumption*}{Assumption}
% \newtheorem{lemma}{Lemma}
% \newtheorem*{lemma*}{Lemma}
% \newtheorem{proposition}{Proposition}
% \newtheorem*{proposition*}{Proposition}
% \newtheorem{corollary}{Corollary}
% \newtheorem*{corollary*}{Corollary}

%\theoremstyle{definition}  % make remark bold style not italic
%% \theoremstyle{remark}  % make remark roman style not italic
%\newtheorem*{remark}{Remark}
%\newtheorem*{example}{Example}

% algorithm2e
% \DontPrintSemicoloni
\SetArgSty{textnormal}
\SetKwInOut{Input}{Input}
\SetKwInOut{Output}{Output}
\SetKwComment{Comment}{$\triangleright$\ }{}

% universal
%\renewcommand{\tilde}{\widetilde}
%\renewcommand{\hat}{\widehat}
%\renewcommand{\bar}{\widebar}
\def\e{\mathrm{e}}
\def\E{\mathbb{E}}

\def\R{\mathbb{R}}
 % R with non-negative

\def\calA{\mathcal{A}}
\def\calB{\mathcal{B}}
\def\calC{\mathcal{C}}

\def\calE{\mathcal{E}}

\def\calG{\mathcal{G}}
\def\calH{\mathcal{H}}

\def\calL{\mathcal{L}}

\def\calR{\mathcal{R}}
\def\calP{\mathcal{P}}

\def\calT{\mathcal{T}}

\newcommand{\one}{\mbox{(i)}}
\newcommand{\two}{\mbox{(i\hspace{-.1em}i)}}

\DeclareMathOperator*{\argmin}{arg\,min}

  % closure
\DeclareMathOperator*{\minimize}{minimize}

\newcommand{\ep}{\epsilon}

\newcommand{\ind}[1]{\mathbbm{1}{\left[#1\right]}}

 % explicit probability
% \newcommand{\expr}[2]{\P_{#2}}{\left\{#1\right\}}} % explicit probability
  % \pi( . )
  % event

\newcommand{\Expect}[1]{\mathbb{E}{\left[#1\right]}}

  % E_{r.v. X} [main]
  % to refactor
\newcommand{\innerprod}[2]{\left\langle{#1},{#2}\right\rangle}
  % useful \relmiddle|
% \newcommand{\KL}[2]{\mathcal{D}_{\mathrm{KL}}\left({#1} \relmiddle|\relmiddle| {#2}\right)}
% \newcommand{\KL}[2]{\mathcal{D}_{\mathrm{KL}}\left({#1} \middle\| {#2}\right)}
% \newcommand{\KL}[2]{{\mathrm{KL}}\left({#1} \middle\| {#2}\right)}  % KL(a||b)
% \newcommand{\KL}[2]{{\mathcal{D}}\left({#1} \middle\| {#2}\right)}  % D(a||b)
  % D(a||b)
  % D_t(a||b) for FTRL

% for formatting
% on trial

  % \By only for math input

  % \Since only for math input

\newcommand{\nn}{\nonumber\\}
\newcommand{\n}{\nonumber}
\newcommand{\per}{\,.}
\newcommand{\com}{\,,}

% misc
  % now renewed!

 % w/o bracket ver. of \edit

%\newcommand{\new}[1]{{\color{blue}{NEW: {#1}}}} % new
% ==== tmp ===
\usepackage{xcolor}
%\newcommand{\new}[1]{\red{[[[ ***new***}{\textcolor{-yellow}{: {#1}}}\red{]]]}} % new
%\newcommand{\new}[1]{\red{[[[ ***new***}{\textcolor{OliveGreen}{: {#1}}}\red{]]]}} % new
 % new
 % fix problems

%%%%%%%%%%%%%%%%%%%%%%%%%%%%%%%%%%%%%%%%%%%%%%%%%%%%%%%%%%%%%%%%%%%%%%%%%%%%%%%%
%%%%%%%%%%%%%%%%%%%%%%%%%%%%%%%%%%%%%%%%%%%%%%%%%%%%%%%%%%%%%%%%%%%%%%%%%%%%%%%%
%%%%%%%%%%%%%%%%%%%%%%%%%%%%%%%%%%%%%%%%%%%%%%%%%%%%%%%%%%%%%%%%%%%%%%%%%%%%%%%%
%%%%%%%%%%%%%%%%%%%%%%%%%%%%%%%%%%%%%%%%%%%%%%%%%%%%%%%%%%%%%%%%%%%%%%%%%%%%%%%%
%%%%%%%%%%%%%%%%%%%%%%%%%%%%%%%%%%%%%%%%%%%%%%%%%%%%%%%%%%%%%%%%%%%%%%%%%%%%%%%%
%%%%%%%%%%%%%%%%%%%%%%%%%%%%%%%%%%%%%%%%%%%%%%%%%%%%%%%%%%%%%%%%%%%%%%%%%%%%%%%%

% \newcommand{\vs}[1]{\vspace{#1}}

% Math delimiters
\DeclarePairedDelimiter{\abs}{\lvert}{\rvert} %
\DeclarePairedDelimiter{\brk}{[}{]}
\DeclarePairedDelimiter{\crl}{\{}{\}}
\DeclarePairedDelimiter{\prn}{(}{)}
\DeclarePairedDelimiter{\nrm}{\|}{\|}
\DeclarePairedDelimiter{\tri}{\langle}{\rangle}

\DeclareMathOperator{\tilTheta}{\tilde{\Theta}}

% to refactor

  % depliated
\newcommand{\prx}[1]{\left(#1\right)}

\newcommand{\brx}[1]{\left\{#1\right\}}

%%%%%%%% for TSPM %%%%%%%%%%%
% newly added
% \DeclareMathOperator{\tr}{tr}
% \DeclareMathOperator{\rank}{rank}

% \newcommand{\Regret}{\mathrm{Reg}}

% games

% \newcommand{\lossmat}{\mathbf{L}}
% \newcommand{\fbmat}{\mathbf{H}}
\newcommand{\lossmat}{\calL}
\newcommand{\fbmat}{\Phi}
% \newcommand{\fbmat}{\calS}

% rejection sampling
% \newcommand{\unif}[1]{\calU({#1})}  % defined above

%% sth, sth
% \newcommand{\psub}{p_{1:M-1}}
% \newcommand{\hsub}{h_{1:M-1}}

% \newcommand{\zeromat}{0}

\newcommand{\onemat}{\mathbf{1}}

\newcommand{\sumT}{\sum_{t=1}^T}

\newcommand{\sumak}{\sum_{a=1}^k}

% \newcommand{\pact}{\acute{p}_t}  % p acute t % replace of phatt 
  % p acute t % replace of phatt 

  % independent from t
  % independent from t

% \newcommand{\Aaci}{\acute{\calA}_i(t)}
% \newcommand{\Aacicomp}{\acute{\calA}_i^c(t)}

% \newcommand{\istar}{{i^*}}

  % istarhat-minus
  % istarhat-plus

% bootstrap NEW
% \newcommand{\Bone}{\calB_1(t)}
% \newcommand{\Btwo}{\calB_2(t)}
% \newcommand{\Bonecomp}{\calB_1^c(t)}
% \newcommand{\Btwocomp}{\calB_2^c(t)}

% k

% \newcommand{\Aack}{\acute{\calA}_k(t)}
% \newcommand{\Aackcomp}{\acute{\calA}_k^c(t)}

  % note that actually depending on i

% for proof

% \newcommand{\By}[1]{\; (by {#1}) }

% problem setup

% experiments
%for subfigure
% get from neurips15 paper
%\usepackage{subfigure}

\newlength{\subfigcolsep}
\setlength{\subfigcolsep}{0.5\tabcolsep}

% latin lang.
\newcommand{\ie}{\textit{i.e., }}
\newcommand{\eg}{\textit{e.g., }}

  % identify matrix

  % interior of set
  % interior of set

% tmp
% \newcommand{\epts}{\Delta}

% for the appendix: Properties of dp-easy Games

% \newcommand{\Ldel}{L_\delta}

%%%% new command for this pdf (basically for linear)

%%%

%%%

%%%

% distribution

% \newcommand{\uniform}{\mathsf{Uni}}  

  % beta dist.
  % binaomial dist.

% true theta
% \newcommand{\truetheta}{{\theta^*}}   % if we do {{\theta^*}} then it becomes a bit strange
   % if we do {{\theta^*}} then it becomes a bit strange

% optimization problem

% (not approximated)
   % feasible set of alpha
% \newcommand{\optcone}[1]{C_{#1}^*}
   % optimal coefficient of E[Reg(n)]/log(n)
   % optimal set of alpha

% (approximated by Theta_latest)
   % feasible set of alpha
% \newcommand{\optcone}[1]{C_{#1}^*}
   % optimal coefficient of E[Reg(n)]/log(n)
   % optimal set of alpha

% ===================================
% =====  for fixed-budget =====
% ===================================

% ===================================
% =====  for Tsallis-INF =====
% ===================================

  % \ell_{t,1} \ell_{t,2}

% ===================================
% =====  for code description =======
% ===================================
% \usepackage{listings, jlisting, color}
\usepackage{listings, color}
\definecolor{OliveGreen}{rgb}{0.0,0.6,0.0}
\definecolor{Orenge}{rgb}{0.89,0.55,0}
\definecolor{SkyBlue}{rgb}{0.28, 0.28, 0.95}
%\lstset{
%%   language={C++}, 
%  language={Python}, 
%  basicstyle={\ttfamily},
%  identifierstyle={\small},
%  commentstyle={\smallitshape},
%  keywordstyle={\small\bfseries},
%  ndkeywordstyle={\small},
%  stringstyle={\small\ttfamily},
%  frame={tb},
%  breaklines=true,
%  columns=[l]{fullflexible},
%  numbers=left,
%  xrightmargin=0zw,
%  xleftmargin=3zw,
%  numberstyle={\scriptsize},
%  stepnumber=1,
%  numbersep=1zw,
%  lineskip=-0.5ex,
%  keywordstyle={\color{SkyBlue}},    
%  commentstyle={\color{OliveGreen}},  
%  stringstyle=\color{Orenge}         
%}

\newcommand{\linner}{\left\langle}
\newcommand{\rinner}{\right\rangle}

% for BOBW for partial monitoring

% \newcommand{\proofof}[1]{\textbf\textbf{Proof of {#1}}}
\newcommand{\bias}{\operatorname{bias}}
\newcommand{\opt}{\operatorname{opt}}
% \newcommand{\opthp}{\operatorname{opthp}}

% related FTRL
% at
\newcommand{\at}{A_t}
% p
\newcommand{\pt}{p_t}
\newcommand{\pa}{p_a}  % todo want to change index
\newcommand{\pta}{p_{t,a}}
\newcommand{\ptb}{p_{t,b}}
\newcommand{\ptAt}{p_{t,\at}}
% q
\newcommand{\qt}{q_t}
\newcommand{\qa}{q_a}  % todo want to change index
\newcommand{\qta}{q_{t,a}}
\newcommand{\qtb}{q_{t,b}}

\newcommand{\dent}{d} % divergence induced by log-barrier
\newcommand{\cG}{c_\mathcal{G}}

\newcommand{\kpi}{k_{\Pi}}  % number of Pareto optimal actions

\newcommand{\Deltamin}{\Delta_{\min}}

\title[Best-of-Both-Worlds Algorithms for Partial Monitoring]{Best-of-Both-Worlds Algorithms for Partial Monitoring}
\usepackage{times}
% Use \Name{Author Name} to specify the name.
% If the surname contains spaces, enclose the surname
% in braces, e.g. \Name{John {Smith Jones}} similarly
% if the name has a "von" part, e.g \Name{Jane {de Winter}}.
% If the first letter in the forenames is a diacritic
% enclose the diacritic in braces, e.g. \Name{{\'E}louise Smith}

% Two authors with the same address
% \coltauthor{\Name{Author Name1} \Email{abc@sample.com}\and
%  \Name{Author Name2} \Email{xyz@sample.com}\\
%  \addr Address}

% Three or more authors with the same address:
% \coltauthor{\Name{Author Name1} \Email{an1@sample.com}\\
%  \Name{Author Name2} \Email{an2@sample.com}\\
%  \Name{Author Name3} \Email{an3@sample.com}\\
%  \addr Address}

% Authors with different addresses:
\altauthor{%
 \Name{Taira Tsuchiya} \Email{tsuchiya@sys.i.kyoto-u.ac.jp}\\
 \addr Kyoto University and RIKEN AIP
 \AND
 \Name{Shinji Ito} \Email{i-shinji@nec.com}\\
 \addr NEC Corporation
 \AND
 \Name{Junya Honda} \Email{honda@i.kyoto-u.ac.jp}\\
 \addr Kyoto University and RIKEN AIP
}

\begin{document}

\maketitle

\begin{abstract}
This study considers the partial monitoring problem with $k$-actions and $d$-outcomes and provides the first best-of-both-worlds algorithms, whose regrets are favorably bounded both in the stochastic and adversarial regimes. In particular, we show that for non-degenerate locally observable games, the regret is $O(m^2 k^4 \log(T) \log(k_{\Pi} T) / \Delta_{\min})$ in the stochastic regime and $O(m k^{2/3} \sqrt{T \log(T) \log k_{\Pi}})$ in the adversarial regime, where $T$ is the number of rounds, $m$ is the maximum number of distinct observations per action, $\Delta_{\min}$ is the minimum suboptimality gap, and $k_{\Pi}$ is the number of Pareto optimal actions. Moreover, we show that for globally observable games, the regret is $O(c_{\mathcal{G}}^2 \log(T) \log(k_{\Pi} T) / \Delta_{\min}^2)$ in the stochastic regime and $O((c_{\mathcal{G}}^2 \log(T) \log(k_{\Pi} T))^{1/3} T^{2/3})$ in the adversarial regime, where $c_{\mathcal{G}}$ is a game-dependent constant. We also provide regret bounds for a stochastic regime with adversarial corruptions. Our algorithms are based on the follow-the-regularized-leader framework and are inspired by the approach of exploration by optimization and the adaptive learning rate in the field of online learning with feedback graphs. 
%In addition, for the analysis of locally observable games, we refine the bound in exploitation by optimization for achieving the best-of-both-worlds guarantee. In addition, the algorithm in the globally observable case does not need to solve the convex optimization problem in every round and can be implemented efficiently.
\end{abstract}

\begin{keywords}%
%  List of keywords%
  partial monitoring, best-of-both-worlds, follow-the-regularized-leader, stochastic regime with adversarial corruptions
\end{keywords}

\section{Introduction}

Partial monitoring (PM) is a general sequential decision-making problem with limited feedback, which can be seen as a generalization of the bandit problem.
A PM game $\calG = (\lossmat, \fbmat)$ is defined by the pair of a loss matrix $\lossmat \in [0,1]^{k \times d}$ and feedback matrix $\fbmat \in \Sigma^{k \times d}$, 
where $k$ is the number of actions, $d$ is the number of outcomes, and $\Sigma$ is a set of feedback symbols.
The game is sequentially played by a learner and opponent for $T \geq 3$ rounds.
At the beginning of the game, the learner observes $\calL$ and $\Phi$.
At every round $t \in [T]$, 
the opponent chooses an outcome $x_t \in [d]$, and then the learner chooses an action $\at \in [k]$, suffers an unobserved loss $\lossmat_{\at x_t}$, and receives a feedback symbol $\sigma_t = \fbmat_{\at x_t}$, where $\lossmat_{a x}$ is the $(a,x)$-th element of $\lossmat$.
In general, the learner cannot directly observe the outcome and loss, and can only observe the feedback symbol.
The learner's goal is to minimize their cumulative loss over all rounds.
The performance of the learner is evaluated by the regret $R_T$, which is defined as the difference between the cumulative loss of the learner and the single optimal action $a^*$ fixed in hindsight, that is,
%\begin{align}
$
  a^* = \argmin_{a \in [k]} \E \big[ \sumT \calL_{a x_t} \big]
$
and
$
  R_T
  =
  \E \big[ \sumT \prn[\big]{\lossmat_{\at x_t} - \lossmat_{a^* x_t}} \big]
  =
  \E \big[ \sumT \innerprod{\ell_{\at} - \ell_{a^*}}{e_{x_t}} \big]
  ,
%  \com
$
%$
%  R_T(a) 
%  = 
%  \E \big[ \sumT \prn[\big]{\lossmat_{\at, x_t} - \lossmat_{a,x_t}} \big]
%  =
%  \E \big[ \sumT \innerprod{\ell_{\at} - \ell_{a}}{e_{x_t}} \big]
%  \com
%$
%  \quad
%and
%\end{align}
% (\ie the action whose expected loss is the smallest).
where $\ell_a \in \R^d$ is the $a$-th row of $\lossmat$, and $e_x \in \{0,1\}^d$ is the $x$-th orthonormal basis of $\R^d$.

PM has been investigated in two regimes: the \textit{stochastic} and \textit{adversarial} regimes.
In the stochastic regime, outcomes $(x_t)_{t=1}^T$ are sampled from a fixed distribution $\nu^*$
in an i.i.d.~manner,
whereas in the adversarial regime, the outcomes are arbitrarily decided from the set of outcomes $[d]$ possibly depending on the history of the actions $(A_s)_{s=1}^{t-1}$.

Some of the first investigations on PM originate from work by~\citet{Rustichini99general, Piccolboni01FeedExp3}.
The seminal work was conducted by~\citet{CesaBianchi06regret, Bartok11minimax}, the latter of which showed that all PM games can be classified into four classes based on their minimax regrets.
They classified PM games into trivial, easy, hard, and hopeless games,
for which their minimax regrets are $0$, $\tilTheta(\sqrt{T})$, $\Theta(T^{2/3})$, and $\Theta(T)$, respectively.
The easy and hard games are also called~\textit{locally observable} and~\textit{globally observable} games, respectively.

PM algorithms have been established for both the stochastic and adversarial regimes.
In the adversarial regime, 
%many algorithms have been developed to achieve the aforementioned minimax regret, and 
the most common form of algorithms is an \textit{Exp3-type} one~\citep{auer2002nonstochastic}.
Recently, \citet{lattimore20exploration} showed that an Exp3-type algorithm with the approach of \textit{exploration by optimization} obtains the aforementioned minimax bounds.
Notably, they proved the regret bounds of 
$O(m k^{3/2} \sqrt{T \log k})$ 
%$O(\sqrt{T})$
for non-degenerate locally observable games, and 
$O((c_{\calG} T)^{2/3} (\log k)^{1/3})$ 
%$O(T^{2/3})$ 
for globally observable games,
where $m \leq \min\{|\Sigma|, d\}$ is the maximum number of distinct observations per action 
and $c_{\mathcal{G}}$ is a game-dependent constant defined in Section~\ref{sec:global}.
PM has also been investigated in the stochastic regime and
some algorithms exploiting the stochastic structure of the problem can achieve $O(\log T)$ regret bounds~\citep{Vanchinathan14BPM, Komiyama15PMDEMD, Tsuchiya20analysis}.

Algorithms assuming the stochastic model for losses can suffer linear regret in the adversarial regime, whereas algorithms for the adversarial regime tend to perform poorly in the stochastic regime.
Since knowing the underlying regime is difficult in practice, obtaining favorable performance for both the stochastic and adversarial regimes \textit{without} knowing the underlying regime is desirable.

%In the purpose of 
To achieve this goal, particularly in the classical multi-armed bandits, the Best-of-Both-Worlds (BOBW) algorithms that perform well in both stochastic and adversarial regimes have been developed.
The first BOBW algorithm was developed in a seminal paper by~\citet{bubeck2012best}, and the celebrated Tsallis-INF algorithm was recently proposed by~\citet{zimmert2021tsallis}.
BOBW algorithms have also been developed 
%for more general online-decision making problems 
beyond the multi-armed bandits:
(\eg\citealt{gaillard2014second, luo2015achieving, erez2021best, zimmert2019beating, lee2021achieving, jin2020simultaneously, huang22adaptive, saha22versatile}),
whereas such BOBW algorithms have never been investigated in PM.

Some BOBW algorithms are known to perform well also in the \textit{stochastic regime with adversarial corruptions}~\citep{lykouris2018stochastic}, which is an intermediate regime between the stochastic and adversarial regimes. 
This regime is advantageous in practice, since the stochastic assumption on outcomes is too strong whereas the adversarial assumption is too pessimistic.
Therefore it is also practically important to develop BOBW algorithms that cover this intermediate regime.

\begin{table}
  \caption{Regret upper bounds for PM.
%  stoc. is the stochastic regime, adv. is the adversarial regime, and corrup. is the stochastic regime with adversarial corruption.
  The constant $C \geq 0$ is the corruption level, and
  $\calR^{\mathrm{loc}}$ and $\calR^{\mathrm{glo}}$ are the regret upper bounds of the proposed algorithm in the stochastic regime for locally and globally games, respectively.
  ``observ.'' means observability.
  TSPM is the bound by~\citet{Tsuchiya20analysis}; refer to the paper for the definition of $\Lambda'$.
  ExpPM is by~\citet{lattimore20exploration}.
  }
  \label{table:regret_PM}
  \centering
  \small
%  \footnotesize
  \begin{tabular}{lllll}
    \toprule
    observ.
    & algorithm & stochastic  & adversarial (adv.) & adv. w/ corruptions \\
    \midrule
    locally
    &
    TSPM & $O\left( \frac{mk^2d \log(T)}{\Lambda'^2} \right)$  & --  & --  \\
    obs.
    &
    ExpPM & -- & $O(m k^{3/2} \sqrt{T \log k})$ & -- \\
    &
    \textbf{Proposed} &  $O\left(\frac{m^2 k^4 \log(T) \log(\kpi T)}{\Deltamin}\right)$ & $O(m k^{2/3} \sqrt{T \log(T) \log \kpi})$   & $\calR^{\mathrm{loc}} + \sqrt{C \calR^{\mathrm{loc}}}$  
     \\
    \midrule
    globally 
    & ExpPM & -- & $O((c_{\calG} T)^{2/3} (\log k)^{1/3})$ & -- \\
    obs.
    &
    \textbf{Proposed} & $O\left(\frac{c_{\mathcal{G}}^2 \log(T) \log(\kpi T)}{\Deltamin^2}\right)$ & $O((c_{\mathcal{G}} T)^{2/3} (\log(T)\log(\kpi T))^{1/3})$ & $\calR^{\mathrm{glo}} + (C^2 \calR^{\mathrm{glo}})^{1/3}$     
    \\
    \bottomrule
  \end{tabular}
\end{table}

\subsection{Contribution of This Study}
This study establishes new BOBW algorithms for PM based on the Follow-the-Regularized-Leader (FTRL) framework~\citep{mcmahan2011follow}.
We rely on two recent theoretical advances:  \one\, the Exp3-type algorithm for PM developed with the approach of exploration by optimization~\citep{lattimore20exploration}
and \two\, the adaptive learning rate for online learning with feedback graphs~\citep{ito22adversarially}, for which BOBW algorithms have been developed~\citep{erez2021best,ito2022nearly,rouyer2022near,kong22simultaneously}.
Note that it is known that the FTRL with the (negative) Shannon entropy regularizer corresponds to the Exp3 algorithm.

The regret bounds of the proposed algorithms are as follows.
We define 
the number of Pareto optimal actions by $\kpi \leq k$,
and the minimum suboptimality gap by $\Deltamin = \min_{a\in[k]} \Delta_a$,
where
$\Delta_a = (\ell_a - \ell_{a^*})^\top \nu^* \ge 0$ for $a \in [k]$ is the loss gap between action $a$ and optimal action $a^*$.
We show that for non-degenerate locally observable games, the regret is $O(m^2 k^4 \log(T) \log(\kpi T) / \Deltamin)$ in the stochastic regime and $O(m k^{2/3} \sqrt{T \log(T) \log \kpi})$ in the adversarial regime. %  where $\kpi \leq k$ is the number of Pareto optimal actions (defined in Section~\ref{sec:background}).
We also show that for globally observable games, the regret is  $O(c_{\mathcal{G}}^2 \log(T) \log(\kpi T) / \Deltamin^2)$ in the stochastic regime and $O((\cG T)^{2/3} (\log(T)\log(\kpi T))^{1/3})$ in the adversarial regime.
In addition, we also consider some intermediate regimes, such as the stochastic regime with adversarial corruptions~\citep{lykouris2018stochastic}, which we define in PM based on the corruptions on outcomes.
To our knowledge, the proposed algorithms are the first BOBW algorithms for PM.
Table~\ref{table:regret_PM} lists the regret bounds provided in this study and summarizes comparisons with existing work.

%This is the first regret bound for the FTRL-based algorithm for the stochastic globally observable PM games.
%while the bound for the adversarial regime is a factor of $(\log(T) \log(\kpi T) /\log k)^{1/3}$ worse than the algorithm by~\citet{lattimore20exploration} since their bound is $O(c_{\calG}^{2/3} (\log k)^{1/3} T^{2/3})$.
%We note that the stochastic regime with adversarial corruptions with $C = 0$ corresponds to the stochastic regime.
%We note that the bound~\eqref{eq:bound_global_arsbc} with $C = 0$ yields the bound in the stochastic regime.
% ==================================================
%We note that the bound for the stochastic regime with adversarial corruptions will also be provided.

\subsection{Technical Summary}
For locally observable games, we develop the algorithm based on the approach of \textit{exploration by optimization}~\citep{lattimore20exploration} with the Shannon entropy regularizer.
% (hereafter called \textit{negentropy}).
This approach is promising especially in locally observable games for bounding a component of regret, in which we consider a certain optimization problem with respect to the action selection probability. 
To obtain BOBW guarantees, we consider using a self-bounding technique~\citep{zimmert2021tsallis}. % , which is common in proving BOBW guarantees.
%In the self-bounding technique, we first derive the upper and lower bounds of regret using a random variable $P$ as $R(T) \leq O(\mathrm{polylog}(T)\sqrt{P})$ and $R(T) \geq O(P)$, respectively.
%Then we derive the (poly-)logarithmic regret by using these bounds as 
%$R(T) = 2 R(T) - R(T) \leq O(\mathrm{polylog}(T)\sqrt{P} - P) \leq O(\mathrm{polylog}(T))$.
In the self-bounding technique, we first derive upper and lower bounds of regret using a random variable depending on the action selection probability, and then derive a regret bound by combining the upper and lower bounds.
However, using the exploration by optimization may make some action selection probabilities extremely small, preventing deriving a meaningful lower bound.
To handle this problem, we consider an optimization over a \textit{restricted} feasible set. 
This restriction enables us to lower bound the regret such that the self-bounding technique is applicable, and we show that even with the optimization over the restricted feasible set, the component of regret is favorably bounded.
In addition, we consider the upper truncation of the learning rate developed by~\citet{ito2022nearly} to collaborate with the theory of exploration by optimization.

For globally observable games, we develop the algorithm using the Shannon entropy regularizer as for locally observable games.
To derive BOBW guarantees, we use the technique of adaptive learning rate developed in online learning with feedback graphs by~\citet{ito2022nearly}, but in a modified way. Their work uses a regularization called hybrid regularizers, which combines a Shannon entropy of the compensation of the action selection probability with typical regularizers~\citep{zimmert2019beating, ito22adversarially, ito2022nearly}. We think that naively applying this regularization also yields BOBW guarantees, but it loses the closed form of the action selection probability and requires solving an optimization problem each round.
This study shows that we can obtain the BOBW guarantee even only with the standard Shannon entropy regularization, and consequently, the proposed algorithm does not need to solve the optimization problem every round and can be implemented efficiently.

\subsection{Related Work}
In the adversarial regime, %  many algorithms have been developed and
FeedExp3 is an first Exp3-type algorithm, which has a first non-asymptotic regret bound~\citep{Piccolboni01FeedExp3} % , 
and is known to achieve a minimax regret of $O(T^{2/3})$~\citep{CesaBianchi06regret}.
Since then, Exp3-type algorithms have been used in many contexts.
\citet{bartok13near} relied on an Exp3-type algorithm as a subroutine of their algorithm.
\citet{lattimore19cleaning} showed that for a variant of the locally observable game (point-locally observable games), an Exp3-type algorithm achieves an $O(\sqrt{T})$ regret.
Recently, \citet{lattimore20exploration} showed that an Exp3-type algorithm using exploration by optimization can obtain bounds with good leading constants for both easy and hard games.
There are also a few algorithms that are not Exp3-type~\citep{Bartok11minimax,foster12no}.

PM has also been investigated in the stochastic regime, although less extensively than the adversarial regime~\citep{Bartok12CBP}.
One study~\citep{Komiyama15PMDEMD} is based on DMED~\citep{Honda11dmed}, in which the algorithm % mainly relies on the asymptotic argument and
 heavily exploits the stochastic structure, and the algorithm was shown to achieve an $O(\log T)$ regret with a distribution-optimal constant factor for globally observable games.
Two other approaches~\citep{Vanchinathan14BPM, Tsuchiya20analysis} are based on Thompson sampling~\citep{Thompson1933likelihood}.
They focus on another variant of locally observable games (strongly locally observable games), and the algorithms presented a strong empirical performance in the stochastic regime with an $O(\log T)$ regret bound~\citep{Tsuchiya20analysis}.  %  of $O(\log T)$ is known~\citep{Tsuchiya20analysis}.

\section{Background}\label{sec:background}
\paragraph{Notation}
Let $\nrm{x}$, $\nrm{x}_1$, and $\nrm{x}_\infty$ be the Euclidian, $\ell_1$-, and $\ell_\infty$-norms for a vector $x$ respectively,
and $\nrm{A}_\infty = \max_{i,j} \abs{A_{ij}}$ be the maximum norm for a matrix $A$.
Let $\calP_{k} = \{ p \in [0,1]^k : \nrm{p}_1 = 1 \}$ be the $(k-1)$-dimensional probability simplex.
A vector $e_{a} \in \{0,1\}^k$ is the $a$-th orthonormal basis of $\R^k$,
and $\onemat$ is the all-one vector.

\paragraph{Partial Monitoring}
Consider any PM game $\calG = (\lossmat, \fbmat)$.
Let $m \le |\Sigma|$ be the maximum number of distinct symbols in a single row of $\fbmat \in \Sigma^{k \times m}$ over all rows.
In the following, we introduce several concepts in PM.
Different actions $a$ and $b$ are \textit{duplicate} if $\ell_a = \ell_b$.
We can decompose possible distributions of $d$ outcomes in $\calP_d$ based on the loss matrix: % as follows.
%\begin{definition}[Cell decomposition]
%  For every action $a\in[k]$, 
%  \textit{cell}
%  $\calC_a = \crl{u \in \calP_{d} : \max_{b\in[k]} (\ell_a - \ell_b)^\top u \le 0}$
%  % $\Ci \coloneqq \{p \in \calP_M : (L_i - L_j)^\top p \leq 0,\,\forall j \neq i\}$ 
%  is the set of probability vectors in $\calP_d$ for which action $a$ is optimal.
%\end{definition}
for every action $a\in[k]$, \textit{cell}
$\calC_a = \crl{u \in \calP_{d} : \max_{b\in[k]} (\ell_a - \ell_b)^\top u \le 0}$
is the set of probability vectors in $\calP_d$ for which action $a$ is optimal.
Each cell is a convex closed polytope.
Let $\dim(\calC_a)$ be the dimension of the affine hull of $\calC_a$.
%Then we can define the Pareto optimality of the action as follows.
%\begin{definition}[Pareto optimality]
%  If $\calC_a = \emptyset$, action $a$ is \textit{dominated}.
%  For non-dominated actions, if $\dim(\calC_a) = d-1$ then action $a$ is \textit{Pareto optimal}, and if $\dim(\calC_a) < d-1$ then action $a$ is \textit{degenerate}.
%\end{definition}
If $\calC_a = \emptyset$, action $a$ is \textit{dominated}.
For non-dominated actions, if $\dim(\calC_a) = d-1$ then action $a$ is \textit{Pareto optimal}, and if $\dim(\calC_a) < d-1$ then action $a$ is \textit{degenerate}.
We denote the set of Pareto optimal actions by $\Pi$, 
and the number of Pareto optimal actions by $\kpi = |\Pi|$.
%Next, we define~\textit{neighbors} between two Pareto optimal actions, which is used to define the difficulty of PM games.
%\begin{definition}[Neighbors] % [Neighbors and neighborhood action]
%  Two Pareto optimal actions $a, b \in \Pi$ are \textit{neighbors} if $\dim(\calC_a \cap \calC_b) = d-2$.
%\end{definition}
Two Pareto optimal actions $a, b \in \Pi$ are \textit{neighbors} if $\dim(\calC_a \cap \calC_b) = d-2$,
and this notion is used to define the difficulty of PM games.
It is known that the undirected graph induced by the above neighborhood relations is connected 
(see \eg \citealt{Bartok12CBP}, \citealt[Lemma 37.7]{lattimore2020book}), and this is useful for loss difference estimations between distinct Pareto optimal actions.
A PM game is called non-degenerate if it has no degenerate actions.
From hereon, we assume that PM game $\calG$ is non-degenerate and contains no duplicate actions.
The following~\textit{observability} conditions characterize the difficulty of PM games.
\begin{definition} % [Local and global observability]
Neighbouring actions $a$ and $b$ are \textit{globally observable} if there exists function $w_e: [k] \times \Sigma \rightarrow \R$ such that
\begin{align}\label{eq:observability}
  \sum_{c=1}^k w_e(c, \fbmat_{c,x}) = \lossmat_{a,x} - \lossmat_{b,x} 
  \text{ for all } x \in [d]
  \per
\end{align}
Neighbouring actions $a$ and $b$ are \textit{locally observable} if there exists $w_e = w_{ab}$ satisfying~\eqref{eq:observability} and $w_e(c,\sigma) = 0$ for $c \not\in \{a,b\}$.
A PM game is called globally (resp.~locally) observable if all neighboring actions are globally (resp.~locally) observable.
\end{definition}
It is easy to see from the above definition that any locally observable games are globally observable, and this paper assumes that $\calG$ is globally observable.

\paragraph{Loss Difference Estimation}
Next, we introduce a method of loss difference estimations used in PM.
We recall that for globally observable games we can estimate loss differences between any Pareto optimal actions using~\eqref{eq:observability}.
Let $\calH = \{ G : [k] \times \Sigma \rightarrow \R^k \}$ be the class of loss estimators.
\begin{lemma}[Lemma 4 of~\citealt{lattimore20exploration}]\label{lem:Gdiff_Ldiff}
Consider any globally observable game. Then there exists a function $G \in \calH$ such that for all $b, c \in \Pi$, we have
\begin{align}
  \sumak (G(a, \fbmat_{ax})_b - G(a, \fbmat_{ax})_c) = \calL_{bx} - \calL_{cx}
  \ 
  \mbox{ for all }\ 
  x \in [d]
  \per
  \label{eq:Gdiff_Ldiff}
\end{align}
\end{lemma}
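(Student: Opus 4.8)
The plan is to construct $G$ by telescoping the edge‑wise observability weights along paths in the neighbourhood graph on $\Pi$. Since $\calG$ is globally observable, for every pair of neighbouring Pareto optimal actions $u,v\in\Pi$ there is a weight function $w_{uv}\colon[k]\times\Sigma\to\R$ with $\sumak w_{uv}(a,\fbmat_{ax})=\calL_{ux}-\calL_{vx}$ for all $x\in[d]$; reversing orientation, $w_{vu}:=-w_{uv}$ serves for the opposite pair. Using the quoted fact that the undirected neighbourhood graph on $\Pi$ is connected, I would fix a reference Pareto optimal action $r\in\Pi$ (equivalently, a spanning tree rooted at $r$), and for each $b\in\Pi$ fix a path $r=v_0,v_1,\dots,v_{\ell_b}=b$ in that graph, each consecutive pair being neighbours.

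Next I would define $G\colon[k]\times\Sigma\to\R^k$ coordinatewise by
\[
  G(a,\sigma)_b \;:=\; -\sum_{j=1}^{\ell_b} w_{v_{j-1}v_j}(a,\sigma)
  \qquad \text{for } b\in\Pi,\ (a,\sigma)\in[k]\times\Sigma\com
\]
and $G(a,\sigma)_b:=0$ for the remaining coordinates $b\in[k]\setminus\Pi$; then $G\in\calH$. It remains to verify \eqref{eq:Gdiff_Ldiff} by a routine telescoping computation: for any $b\in\Pi$ and $x\in[d]$,
\[
  \sumak G(a,\fbmat_{ax})_b
  = -\sum_{j=1}^{\ell_b}\sumak w_{v_{j-1}v_j}(a,\fbmat_{ax})
  = -\sum_{j=1}^{\ell_b}\bigl(\calL_{v_{j-1},x}-\calL_{v_j,x}\bigr)
  = \calL_{bx}-\calL_{rx}\per
\]
where the second equality is the defining property of each $w_{v_{j-1}v_j}$ and the third is a telescoping sum collapsing to the endpoints $v_0=r$ and $v_{\ell_b}=b$. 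Subtracting this identity for $c\in\Pi$ from the one for $b\in\Pi$ cancels the common term $\calL_{rx}$, giving $\sumak\bigl(G(a,\fbmat_{ax})_b-G(a,\fbmat_{ax})_c\bigr)=\calL_{bx}-\calL_{cx}$ for all $x\in[d]$, which is exactly \eqref{eq:Gdiff_Ldiff}.

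The only genuinely nontrivial ingredient is the connectivity of the neighbourhood graph, which supplies the paths from $r$ to each $b\in\Pi$ and is the conceptual heart of the statement (loss differences between \emph{any} two Pareto optimal actions, not just neighbours, are recovered by composing neighbour‑estimators along a path); this is the classical fact cited in the excerpt (e.g.\ \citealt[Lemma 37.7]{lattimore2020book}). Everything else is bookkeeping: tracking edge orientations via $w_{vu}=-w_{uv}$, and remembering that $G$ must be a full $\R^k$‑valued map, which is handled by zero‑padding the coordinates outside $\Pi$ — harmless since \eqref{eq:Gdiff_Ldiff} only concerns $b,c\in\Pi$. Note also that although $G$ depends on the chosen paths, the row‑sum $\sumak G(a,\fbmat_{ax})_b$ always equals $\calL_{bx}-\calL_{rx}$ independently of the path, so mere existence — all that the lemma asserts — is unproblematic.
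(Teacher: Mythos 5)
Your construction is exactly the paper's own: the paper sketches the proof via connectivity of the neighbourhood graph over $\Pi$ and the path-sum estimator in~\eqref{eq:G_0}, which coincides with your $G$ up to the orientation convention $w_{vu}=-w_{uv}$, and your telescoping verification fills in the routine details correctly. No gaps.
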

This result straightforwardly follows from the fact that the graph induced by the set of Pareto optimal actions is connected.
\citet{lattimore20exploration} provides the following example of $G$:
\begin{align}\label{eq:G_0}
  G(a, \sigma)_b = \sum_{e \in \mathrm{path}_\calT(b)} w_e(a, \sigma)
  \com
\end{align}
where $\mathrm{path}_\calT(b)$ is the set of edges from $b \in \Pi$
to an arbitrarily determined root $a \in \Pi$ on the connected graph over $\Pi$.

\paragraph{Intermediate Regimes between Stochastic and Adversarial Regimes}
Here, we discuss intermediate regimes between the stochastic and adversarial regimes: the stochastic regime with adversarial corruptions and an adversarial regime with a self-bounding constraint.

The stochastic regime with adversarial corruptions was originally considered by~\citet{lykouris2018stochastic} in the classical multi-armed bandits.
We define this regime in PM by considering the corruptions on the sequence of outcomes $(x_t)_{t=1}^T$.
In this regime, a temporary outcome $x'_t \in [d]$ is sampled from an unknown distribution $\nu^*$, and the adversary then corrupts $x'_t$ to $x_t$ without knowing $A_t$.
We define the corruption level by $C = \E\big[\sum_{t=1}^T \nrm{\calL e_{x_t} - \calL e_{x'_t}}_\infty \big] \geq 0$.
If $C = 0$, this regime corresponds to the stochastic regime, and if $C \ge T$, this regime corresponds to the adversarial regime.
As we will see, the proposed algorithms work without knowing the corruption level $C$. 
We also define another intermediate regime, a \textit{stochastically constrained adversarial regime}, in Appendix~\ref{sec:intermediate}.

%\todo{good intro?}
In this work, we consider an \textit{adversarial regime with a self-bounding constraint}, developed in the multi-armed bandits~\citep{zimmert2021tsallis} and includes the regimes that appeared so far.
\begin{definition}\label{def:ARSBC}
  Let $\Delta \in [0, 1]^k$ and $C \geq 0$.
  The environment is in an \textit{adversarial regime with a} $(\Delta, C, T)$ self-bounding constraint if it holds for any algorithm that
%  \begin{align}
%    \label{eq:defARSB}
$
    R_T \geq 
    \E \big[
      \sum_{t=1}^T \Delta_{A_t} - C
    \big]
%    \per 
$.
%  \end{align}
\end{definition}
We can show that the regimes that have appeared so far are included in the adversarial regime with a self-bounding constraint; the details are discussed in Appendix~\ref{sec:intermediate}.

In this study, we assume that there exists a unique optimal action.
This assumption has been employed by many studies aiming to develop BOBW algorithms~\citep{gaillard2014second,luo2015achieving,wei2018more,ito2021hybrid,zimmert2021tsallis}.

\section{Follow-the-Regularized-Leader}\label{sec:ftrl}
This section introduces the FTRL framework and provides some fundamental bounds used in the analysis.
We recall that $\Pi$ is the set of Pareto optimal actions.
In the FTRL framework, a probability vector $\pt \in \calP_{k}$ over the action set $[k]$ is given as
\begin{align}\label{eq:def_q}
  \qt \in \argmin_{q \in \calP(\Pi)}
  \,
  \brk[\bigg]{
  \tri[\bigg]{\sum_{s=1}^{t-1} \hat y_s,\, q}
  +
  \psi_t(q)
  }
  \com
  \quad
  \pt = \calT_t(\qt)
  \com
\end{align}
where
the set $\calP(\calB) \coloneqq \{p \in \calP_k : p_a = 0 \mbox{ for } a \not\in \calB \}$ for $\calB \subset [k]$ is a convex closed polytope on the probability simplex with nonzero elements at indices in $\calB$,
$\hat y_s \in \R^k$ is an estimator of the loss at round $t$,
$\psi_t : \calP_k \rightarrow \R$ is a convex regularizer,
and
$\calT_t : \calP(\Pi) \rightarrow \calP_k$ is a map from $\qt$ to an action selection probability vector $\pt$.
%$\gamma_t \in [0, 1/2]$.
We use the Shannon entropy for $\psi_t$, which is defined as
\begin{align}\label{eq:defpsilocal}
  \psi_t(p) 
  = 
  \frac{1}{\eta_t} \sumak p_a \log(p_a) 
  = 
  - \frac{1}{\eta_t} H(p)
  \per
\end{align}
We can easily check that if we use the Shannon entropy with learning rate $\eta_t$,
% \ie, $\psi_t(p) = (1/\eta_t) \sumak p_a \log(p_a)$, 
$\qt \in \calP(\Pi)$ is expressed as
%has the following explicit form:
\begin{align}
  \qta 
  = 
  \frac{\ind{a \in \Pi} \exp\prx{-\eta_t \sum_{s=1}^{t-1} \hat{y}_{sa}}}
  {\sum_{b \in \Pi}\exp\prx{-\eta_t \sum_{s=1}^{t-1} \hat{y}_{sb}}}  
  \quad
  \mbox{ for }
  % \quad
  a \in [k]
  \per 
  \label{eq:q_for_entropy}
\end{align}
We set an estimator to $\hat{y}_t = G_t(A_t, \sigma_t)/\ptAt$ \citep{lattimore20exploration},
where for locally observable games, $G_t$ is obtained by minimizing a certain optimization problem, whereas for globally observable games $G_t$ is set to \eqref{eq:G_0}.
%Sections~\ref{sec:local} and~\ref{sec:global}.
%
The regret analysis of FTRL boils down to the evaluation of
$\sumT \sumak \pta (\hat y_{ta} - \hat y_{t a^*})$.
We can decompose this quantity into
\begin{align}
  \sumT \sumak \pta (\hat y_{ta} - &\hat y_{t a^*}) 
  \leq 
  \dashuline{
  \sumT
  \prn[\Big]{
  \psi_{t}(q_{t+1})
  -
  \psi_{t+1}(q_{t+1})
  }
  +
  \psi_{T+1} (e_{a^*}) 
  -
  \psi_1 (q_1)
  }
  \nn 
  &
  +
  \uwave{
  \sumT
  \prn[\Big]{
  \innerprod{\qt - q_{t+1}}{\hat{y}_t}
  -
  D_t(q_{t+1}, \qt)
  }
  }
  +
  \uline{
  \sumT \sumak (\qta - \pta) (\hat y_{ta} - \hat y_{t a^*})
  }
  \label{eq:lemFTRL}
  \com
\end{align}
where 
the inequality follows from the standard analysis of the FTRL framework~\citep[see \textit{e.g.,}][Exercise 28.12]{lattimore2020book}, 
and $D_t: \R^k \times \R^k \rightarrow \R_+$ is the \textit{Bregman divergence} induced by $\psi_t$, \ie 
$
  D_t(p, q) 
  = 
  \psi_t(p) - \psi_t(q) - \innerprod{\nabla \psi_t(q)}{p - q}
$.
We refer to the terms with 
\dashuline{dashed}, \uwave{wavy}, and \uline{straight} underlines in~\eqref{eq:lemFTRL}
as the \textit{penalty}, \textit{stability}, and \textit{transformation} terms, respectively.

We use a self-bounding technique to bound the regret in the stochastic regime, which requires a lower bound of the regret.
To this end, we introduce parameters $Q(a^*)$ and $\bar{Q}(a^*)$ given by
\begin{align}\label{eq:defQ}
  Q(a^*) = \sum_{t=1}^T (1 - q_{t,a^*})
%  \com
  \quad
  \mbox{and}
  \quad
  \bar{Q}(a^*)
  =
  \E \left[
  Q(a^*)
  \right]
  \per
\end{align}
Note that
$0 \le \bar{Q}(a^*) \leq T$ for any $a^* \in [k]$.
Based on quantity $\bar{Q}(a^*)$, the regret in the adversarial regime with a self-bounding constraint can be bounded from below as follows.
\begin{lemma}\label{lem:selfQ}
%In the stochastic regime, 
In the adversarial regime with a self-bounding constraint,
% (Definition~\ref{def:ARSBC}), 
%\fix{
if there exists $c \in (0,1]$ such that $\pta \geq c \, \qta$ for $t \in [T]$ and $a \in [k]$,
%}
the regret is bounded as % from below as
$
%\begin{align}
%  R_T \ge \frac{\Deltamin}{2} \bar{Q}(a^*)
  R_T \ge c\,\Deltamin \bar{Q}(a^*)
  -
  C
  \per
%\end{align}
$
\end{lemma}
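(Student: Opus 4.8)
The plan is to chain the abstract lower bound from Definition~\ref{def:ARSBC} with a round-wise lower bound on the conditional expected gap $\sum_{a} p_{t,a}\Delta_a$, re-expressed through $1-q_{t,a^*}$, and then to recognize $\bar{Q}(a^*)$ after summing over $t$ and taking expectations.

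First I would simply invoke the self-bounding constraint: since it holds for any algorithm, in particular for the one under consideration, we get $R_T \geq \E\big[\sum_{t=1}^T \Delta_{A_t}\big] - C$, recalling that $C$ is a fixed nonnegative number and not random. Since the learner draws $A_t \sim p_t$ conditionally on the history, the tower rule gives $\E\big[\sum_{t=1}^T \Delta_{A_t}\big] = \E\big[\sum_{t=1}^T \sum_{a\in[k]} p_{t,a}\Delta_a\big]$, so it remains to lower bound the per-round quantity $\sum_{a} p_{t,a}\Delta_a$ by $c\,\Deltamin(1-q_{t,a^*})$.

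For the per-round bound I would use that, under the uniqueness assumption, $\Delta_{a^*}=0$ while $\Delta_a \geq \Deltamin$ for every $a \neq a^*$, hence $\sum_{a} p_{t,a}\Delta_a = \sum_{a\neq a^*} p_{t,a}\Delta_a \geq \Deltamin\,(1-p_{t,a^*})$. The key step is then to pass from $p_t$ to $q_t$: applying the hypothesis $p_{t,a}\geq c\,q_{t,a}$ only on the suboptimal coordinates, $1-p_{t,a^*} = \sum_{a\neq a^*} p_{t,a} \geq c\sum_{a\neq a^*} q_{t,a} = c\,(1-q_{t,a^*})$. Combining the two displays yields $\sum_{a} p_{t,a}\Delta_a \geq c\,\Deltamin\,(1-q_{t,a^*})$.

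Finally I would sum over $t\in[T]$, take expectations, and identify $\bar{Q}(a^*)=\E\big[\sum_{t=1}^T(1-q_{t,a^*})\big]$ from~\eqref{eq:defQ}, which gives $\E\big[\sum_{t=1}^T\Delta_{A_t}\big]\geq c\,\Deltamin\,\bar{Q}(a^*)$ and therefore $R_T \geq c\,\Deltamin\,\bar{Q}(a^*) - C$. I do not anticipate a real obstacle: the only points needing care are (i) that the condition $p_{t,a}\geq c\,q_{t,a}$ is used \emph{only} on the suboptimal arms, so that no upper bound on $p_{t,a^*}$ is required, and (ii) keeping clear that the relevant sampling distribution is $p_t$, not the FTRL iterate $q_t$.
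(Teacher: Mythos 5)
Your proposal is correct and follows essentially the same route as the paper's proof: invoke the self-bounding constraint, rewrite $\E[\sum_t \Delta_{A_t}]$ via $A_t \sim p_t$ as $\E[\sum_t \sum_a p_{t,a}\Delta_a]$, apply $p_{t,a}\ge c\,q_{t,a}$ together with $\Delta_a\ge\Deltamin$ for $a\neq a^*$, and identify $\bar{Q}(a^*)$. The only (immaterial) difference is that you apply the gap bound before the $p\ge c\,q$ step and hence use the hypothesis only on suboptimal coordinates, whereas the paper applies $p\ge c\,q$ coordinate-wise first; both yield the same bound.
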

All omitted proofs are given in Appendix~\ref{sec:proof}.
This lemma is used to derive poly-logarithmic regret bounds in the adversarial regime with a self-bounding constraint.

\section{Locally Observable Case}\label{sec:local}
This section provides a BOBW algorithm for locally observable games and derives its regret bounds.

\subsection{Exploration by Optimization in PM}
We first briefly explain the approach of exploration by optimization by~\citet{lattimore20exploration},
based on which our algorithm for locally observable games is developed.
The key idea behind the approach is to minimize a part of a regret upper bound of an Exp3-type algorithm (equivalently, FTRL with the Shannon entropy).
In particular, they consider the optimization on variables $G : [k] \times \Sigma \rightarrow \R^k$ and $p \in \calP_k$.
Their algorithm computes every round the function $G$ and the action selection probability vector $p$ by optimizing a part of the regret upper bound of FTRL, expressed as
\begin{align}\label{eq:opt}
  \minimize_{G \in \calH, \, p \in \calP_k}
  \quad
  \max_{x \in [d]} 
  \Bigg[
  \frac{(p-q)^\top \lossmat e_x}{\eta} 
  + 
  \frac{\bias_q(G ; x)}{\eta} 
  + 
  \frac{1}{\eta^2} \sumak 
  p_a 
  \innerprod{q}{\xi\left( \frac{\eta G(a, \fbmat_{ax})}{p_a} \right)}
  \Bigg] 
  \com
\end{align}
where
$\xi(x) = \e^{-x} + x - 1$
(we abuse the notation by applying $\xi$ in an element-wise manner),
and
\begin{align}\label{eq:defbias}
  \bias_q(G ; x)
  =
  \tri[\bigg]{q,\, \lossmat e_x - \sumak G(a, \fbmat_{ax})}
  +
  \max_{c\in\Pi}
  \prn[\bigg]{
  \sumak G(a, \fbmat_{ax})_c
  -
  \lossmat_{cx}
  }
  \com
\end{align}
is the bias function.
In the optimization problem~\eqref{eq:opt}, 
the first term corresponds to the transformation term,
the second term corresponds to the regret for using a biased estimator, and the third term comes from a part of the stability term.
Note that the bias term does not appear when $G$ satisfies~\eqref{eq:Gdiff_Ldiff}.
\subsection{Proposed Algorithm}
This section describes the proposed algorithm for locally observable games.
To obtain BOBW guarantees, we often rely on a self-bounding technique, which requires a certain lower bound on the action selection probability $p$~\citep{gaillard2014second, wei2018more, zimmert2021tsallis}. 
However, solving the optimization problem~\eqref{eq:opt} may result in $p_a = 0$ for a certain $a \in [k]$, which precludes the use of the technique.
The proposed algorithm considers the minimization problem over a restricted feasible set for $p$ instead of over $\calP_k$.
Let $\calP'_k(q)$ for $q \in \calP(\Pi)$ be $\calP'_k(q) = \{p\in\calP_k : p_a \geq q_a/(2k) \mbox{ for all } a \in [k] \} \subset \calP_k$.
We then consider the following optimization problem:
\begin{align}\label{eq:optprime}
  \minimize_{G \in \calH, \, p \in \calP'_k(q)}
  \quad
  \max_{x \in [d]} 
  \Bigg[
  \frac{(p-q)^\top \lossmat e_x}{\eta} 
  + 
  \frac{\bias_q(G ; x)}{\eta} 
  + 
  \frac{1}{\eta^2} \sumak 
  p_a 
  \innerprod{q}{\xi\left( \frac{\eta G(a, \fbmat_{ax})}{p_a} \right)}
  \Bigg] 
  \com
\end{align}
which implies that the solution $p$ of the optimization problem~\eqref{eq:optprime} satisfies $p \geq q/(2k)$.
This property is useful when applying the self-bounding technique to bound the regret in the stochastic regime (possibly with adversarial corruptions).
%We define the optimal value of the optimization problem~\eqref{eq:optprime} by $\opt'_q(\eta) \geq \opt_q(\eta)$ and its truncation at round $t$ by $V'_t = \max\{0, \opt'_{\qt}(\eta_t)\}$.
We define the optimal value of the optimization problem~\eqref{eq:optprime} by $\opt'_q(\eta)$ and its truncation at round $t$ by $V'_t = \max\{0, \opt'_{\qt}(\eta_t)\}$.

\paragraph{Regularizer and Learning Rate}
We use the Shannon entropy with learning rate $\eta_t$ in~\eqref{eq:defpsilocal} as a regularizer. %  in~\eqref{eq:def_q}.
The learning rate $\eta_t$ is defined as follows.
Let $\beta'_1 = c_1 \ge 1$ and 
\begin{align}
  \beta'_{t+1}
  =
  \beta'_{t}
  +
  \frac{ c_1 }{\sqrt{ 1 + (\log \kpi)^{-1} \sum_{s=1}^{t} H(q_s) }}
  \com
  \quad
  \beta_t = \max\left\{B, \beta'_t\right\} 
  \com
  \quad
  \mbox{and}
  \quad
  \eta_t = \frac{1}{\beta_t}
%  \per
  \label{eq:defeta}
%  \label{eq:defprebeta}
\end{align}
for $c_1 > 0$ (determined in Theorem~\ref{thm:locally_obs}).
%We define
%\begin{align}
%  \beta_t = \max\left\{B, \beta'_t\right\} 
%  \com
%  \quad
%  \mbox{and}
%  \quad
%  \eta_t = \frac{1}{\beta_t}
%  \per
%  \label{eq:defeta}
%\end{align}
The fundamental idea of this learning rate was developed by~\citet{ito2022nearly}, and we use its variant by the upper truncation of $\beta'_t$.
The truncation is required when applying the following lemma to bound $\opt'_q(\eta)$.
\begin{lemma}\label{lem:optprime_bound_local}
For non-degenerate locally observable games and $\eta \leq 1/(2mk^2)$, we have
\begin{align}
  \opt'_*(\eta) \coloneqq \sup_{q\in\calP_k} \opt'_{q} (\eta)
  \leq 3 m^2 k^3
  \per
  \n 
\end{align}
\end{lemma}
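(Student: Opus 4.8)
The plan is to bound $\opt'_q(\eta)$ from above by exhibiting a single feasible pair $(G,p)$ with $p\in\calP'_k(q)$ and showing the maximand of \eqref{eq:optprime} at that pair is at most $3m^2k^3$, uniformly over $q$ and over $\eta\le1/(2mk^2)$; since \eqref{eq:optprime} is a minimisation this suffices. A first simplification: the $G$ I will use vanishes on non-Pareto actions, so the corresponding coordinates of $p$ can be set to $0$ and one may take $p,q\in\calP(\Pi)$ throughout, whence the constraint $p_a\ge q_a/(2k)$ only needs checking for $a\in\Pi$.

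For $G$ I would take the loss-difference estimator of Lemma~\ref{lem:Gdiff_Ldiff}, concretely the tree/path construction \eqref{eq:G_0}: since no edge of a spanning tree of the neighbour graph on $\Pi$ touches a non-Pareto action, $G(a,\cdot)\equiv0$ for $a\notin\Pi$, and by \eqref{eq:Gdiff_Ldiff} the quantity $\sumak G(a,\fbmat_{ax})_b-\lossmat_{bx}$ is independent of $b\in\Pi$, which after a short telescoping makes $\bias_q(G;x)\equiv0$ and removes the bias term. Local observability lets each edge weight $w_e=w_{ab}$ be supported on $\{a,b\}$ and obtained by solving a difference system over the $\le m$ distinct feedback symbols in rows $a$ and $b$, so its entries are bounded in terms of $m$; since $G(a,\sigma)_b$ is a sum of at most two such weights (an action lies on at most two edges of any root path), $\|G(a,\cdot)\|_\infty=O(m)$ — this is the source of the $m$, hence $m^2$, in the bound. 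Finally I would shift each vector $G(a,\sigma)$ by a constant across its coordinates so that $G\ge0$ on $\Pi$; this alters neither \eqref{eq:Gdiff_Ldiff} nor the value of $\bias_q(G;x)$ (both invariant to a common shift of all coordinates), and it lets me use $\xi(z)\le\tfrac12 z^2$ for every $z\ge0$ with no range restriction, so the stability term is at most $\tfrac12\sumak p_a^{-1}\sum_{b\in\Pi}q_b\,G(a,\fbmat_{ax})_b^2$.

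It remains to pick $p\in\calP'_k(q)$ trading off the transformation term $\eta^{-1}(p-q)^\top\lossmat e_x$ against this stability bound. The tension is that the transformation term wants $p$ to shadow $q$ in every direction $\lossmat e_x$, while the stability bound forbids $p_a$ from being too small relative to $\eta\,G(a,\fbmat_{ax})_b$; the hypothesis $\eta\le1/(2mk^2)$ together with $\|G\|_\infty=O(m)$ is exactly what keeps the $\xi$-arguments of order $1$, and the set $\calP'_k(q)=\{p:p_a\ge q_a/(2k)\}$ is calibrated so that the near-optimal $p$ — essentially $q$ mixed with a small exploration supported on $\Pi$, with weights tuned to the per-action values $\sum_{b}q_b\,G(a,\fbmat_{ax})_b^2$ — automatically satisfies $p_a\ge q_a/(2k)$, so one loses only a constant factor relative to the unrestricted exploration-by-optimization value of \citet{lattimore20exploration}. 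Summing the at most $k$ per-action contributions, each of order $m^2k^2$ via $\|G(a,\cdot)\|_\infty=O(m)$ and the tree structure of $G$ (the set of $b$ with $G(a,\cdot)_b\neq0$ is the subtree below $a$, of size $\le k$), then gives $\opt'_q(\eta)\le 3m^2k^3$.

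I expect the choice of $p$ and the accompanying book-keeping to be the main obstacle: one must reconcile ``$p$ close to $q$'' (for the transformation term) with ``$p$ bounded away from $0$'' (for the stability term) while respecting the hard constraint $p\in\calP'_k(q)$ and, crucially, keeping every estimate free of $\eta$ over the whole admissible range — which is precisely what the upper truncation of $\beta_t'$ in \eqref{eq:defeta} was introduced for. A secondary point is establishing the $O(m)$ magnitude bound on the tree estimator for general (not merely strongly) locally observable games, for which the path construction \eqref{eq:G_0} rather than a purely local estimator is needed.
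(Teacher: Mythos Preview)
Your plan has a genuine gap at the construction of $p$. You propose to exhibit a \emph{single} feasible pair $(G,p)$ and bound the objective uniformly in $x$, but no fixed pair can give an $\eta$-free bound. With $p=(1-\gamma)q+\gamma r$ for any $r\in\calP(\Pi)$ independent of $x$, the transformation term $\eta^{-1}(p-q)^\top\calL e_x$ forces $\gamma=O(\eta)$, whereupon the stability term $\sum_a p_a^{-1}\sum_b q_b\,G(a,\Phi_{ax})_b^2$ is of order $1/\gamma=\Omega(1/\eta)$ whenever $q$ is concentrated and some $a$ on the relevant tree path has $q_a=0$. Your suggested fix --- exploration ``weights tuned to the per-action values $\sum_b q_b\,G(a,\Phi_{ax})_b^2$'' --- makes $p$ depend on $x$, which is not allowed since $p$ is chosen before the $\max_x$.

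The paper's proof supplies precisely the missing ingredient. It first applies Sion's minimax theorem to swap $\min_{(G,p)}$ and $\max_{\nu}$, and then, for each fixed $\nu\in\calP_d$, constructs \emph{both} $G$ and $p$ from the \emph{water transfer operator} $W_\nu$ of~\citet{lattimore2019information}: setting $u=W_\nu(q)$ one has (a) $(u-q)^\top\calL\nu\le 0$, so the move from $q$ to $u$ is free in the transformation term; (b) $u_a\ge q_a/k$, so $p=(1-\gamma)u+\frac{\gamma}{k}\onemat$ with $\gamma=\eta mk^2/2$ lies in $\calP'_k(q)$; and (c) there is an in-tree $\mathscr T$ along whose edges $u$ is monotone, and $G$ is built from \emph{that} tree, so that $G(a,\cdot)_b\ne 0$ implies $u_a\ge u_b\ge q_b/k$, which bounds $\sum_{a,b}q_b\,G(a,\Phi_{ax})_b^2/u_a$ by $O(m^2k^3)$ with no $1/\gamma$ factor. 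Neither the minimax swap nor the water transfer operator appears in your outline, and both are essential; the $\calP'_k(q)$ constraint is tailored so that (b) makes it automatic, not so that a naive mixture of $q$ with uniform exploration does.

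A smaller issue: shifting $G(a,\sigma)$ by a constant across coordinates to make it nonnegative destroys the sparsity you invoke afterwards (after the shift, $G(a,\sigma)_b$ equals that constant for every $b$ off the path through $a$), so your ``at most two edges'' bookkeeping no longer applies to the shifted estimator. The paper instead keeps $G$ as is and uses $p_a\ge\gamma/k$ with $\gamma=\eta mk^2/2$ to ensure $\eta G(a,\Phi_{ax})/p_a\ge -1$, then applies $\xi(x)\le x^2$ on $[-1,\infty)$.
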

This lemma is a slightly stronger version of Proposition 8 of~\citet{lattimore20exploration}, in which the same upper bound is derived for the minimum value over larger feasible set $\calP_k \supset \calP'_k(q)$ in~\eqref{eq:opt} instead of~\eqref{eq:optprime}. Since the objective function of~\eqref{eq:opt} and~\eqref{eq:optprime} originally comes from a component of the regret, this lemma means that the restriction of the feasible set does not harm the regret bound.
%\begin{remark}
%In general $\opt_q \leq \opt'_q$ and what we know so far from Proposition 8 of~\citet{lattimore20exploration} is that the $\opt_q$ can be bounded from above.
%We are going to see in the new proof of~Lemma~\ref{lem:optprime_bound_local} is that the minimization op $p \in \calP_k$ in~\eqref{eq:opt} can be replaced with the minimization of $p \in \calP'_k(q)$ in~\eqref{eq:optprime}.
%\end{remark}
%}
Algorithm~\ref{alg:MixedExp3Locally} provides the proposed algorithm for locally observable games.
\LinesNumbered
\SetAlgoVlined
\begin{algorithm2e}[t]
\textbf{input:} $B$

\For{$t = 1, 2, \ldots$}{
Compute $\eta_t$ using~\eqref{eq:defeta} and
$\qt$ using~\eqref{eq:q_for_entropy}

Solve~\eqref{eq:optprime} with $\eta \leftarrow \eta_t$ and $q \leftarrow \qt$ to determine $V'_t = \max\{0, \opt'_{\qt}(\eta_t)\}$ and the corresponding solution $\pt$ and $G_t$

Sample $A_t \sim \pt$, observe $\sigma_t \in \Sigma$, compute $\displaystyle \hat{y}_t = {G_t(A_t, \sigma_t)}/{\ptAt}$, update $\beta'_t$ using~\eqref{eq:defeta} 
}
\caption{
BOBW algorithm for locally observable games
}
\label{alg:MixedExp3Locally}
\end{algorithm2e}

\subsection{Regret Analysis for Locally Observable Games}
With the above algorithm, we can prove the following regret bound for locally observable games.
\begin{theorem}\label{thm:locally_obs}
Consider any locally observable non-degenerate partial monitoring game.
If we run Algorithm~\ref{alg:MixedExp3Locally} with $B \ge 2 m k^2$ and 
$
c_1 = \Theta\prn[\big]{
  m k^{3/2} \sqrt{{(\log T)}/{(\log \kpi)}}
}
$,
we have the following bounds.
For the adversarial regime with a $(\Delta, C, T)$ self-bounding constraint, we have
\begin{align}
  R_T
  =
  O\prn[\Bigg]{
    \frac{m^2 k^4 \log(T) \log(\kpi T)}{\Deltamin}
    +
    \sqrt{
      \frac{C m^2 k^4 \log(T) \log(\kpi T)}{\Deltamin}
    }
  }
  \com
  \label{eq:bound_local_arsbc}
\end{align}
and for the adversarial regime, we have
\begin{align}
  R_T
  =
  O\prn[\big]{
    m k^{3/2} \sqrt{T \log(T) \log \kpi}
  }
  +
  2 m k^2 \log \kpi
  \per
  \n 
\end{align}
\end{theorem}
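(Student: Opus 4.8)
The plan is to control the three terms of the FTRL decomposition \eqref{eq:lemFTRL} — penalty, stability, transformation — so that $R_T$ is bounded by $\sum_{t=1}^T H(q_t)$ in general and, after a self-bounding step, by $\bar Q(a^*)$ in the (possibly corrupted) stochastic regime.

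First I would handle the stability and transformation terms jointly via the exploration-by-optimization construction. Using the standard entropy bound $\langle q_t-q_{t+1},\hat y_t\rangle - D_t(q_{t+1},q_t)\le \frac{1}{\eta_t}\sum_a q_{t,a}\xi(\eta_t\hat y_{ta})$, taking the conditional expectation over $A_t\sim p_t$ (so $\hat y_t=G_t(A_t,\sigma_t)/p_{t,A_t}$ has conditional mean $\sum_a G_t(a,\Phi_{ax_t})$, whose Pareto-coordinate differences equal the true loss differences by Lemma~\ref{lem:Gdiff_Ldiff}, with $\bias_{q_t}(G_t;x)$ absorbing the residual discrepancy), one finds that the conditional expectation of the round-$t$ contribution of the transformation, bias, and stability parts is at most $\eta_t$ times the objective of \eqref{eq:optprime} at the chosen $(p_t,G_t)$, hence at most $\eta_t V'_t$. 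Since $\eta_t=1/\beta_t\le 1/B\le 1/(2mk^2)$ under the hypothesis $B\ge 2mk^2$, Lemma~\ref{lem:optprime_bound_local} gives $V'_t\le 3m^2k^3$, which already suffices for the adversarial bound. For the stochastic bound I additionally need a vertex-sensitive strengthening: near a vertex $e_{a^*}$ the optimal value of \eqref{eq:optprime} should be only $O(m^2k^3(1-q_{t,a^*}))$, equivalently $O(m^2k^3 H(q_t)/\log\kpi)$. Intuitively, each $G(a,\cdot)$ may be shifted by a constant vector without affecting the loss-difference constraints \eqref{eq:Gdiff_Ldiff}, and shifting so that coordinate $a^*$ of the estimator vanishes makes the dominant $\xi$-contribution scale with the mass $q_t$ places off $a^*$. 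Establishing this refinement of Lemma~\ref{lem:optprime_bound_local} while respecting the restricted feasible set $\calP'_k(q_t)$ is the step I expect to be the main obstacle.

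Next I would treat the penalty. With $\psi_t=-H/\eta_t$ it telescopes to $\sum_t(\beta_{t+1}-\beta_t)H(q_{t+1})+\beta_1\log\kpi$ (using $\psi_{T+1}(e_{a^*})=0$ and $H(q_1)\le\log\kpi$). The truncation $\beta_t=\max\{B,\beta'_t\}$ only helps, since the increments vanish while $\beta'_t<B$; plugging in the increment formula of \eqref{eq:defeta} and bounding $H(q_{t+1})\le\log\kpi$ crudely gives a penalty of $O(c_1\sqrt{T}\log\kpi + B\log\kpi)$, which with $c_1=\Theta(mk^{3/2}\sqrt{(\log T)/\log\kpi})$ and $B=2mk^2$, and combined with $\sum_t\eta_t V'_t=O(m^2k^3\sqrt{T}/c_1)$ (of the same or lower order since $\beta_t\ge\beta'_t$ forces $\eta_t=O(1/(c_1\sqrt t))$ in the worst case), yields the adversarial bound $R_T=O(mk^{3/2}\sqrt{T\log T\,\log\kpi})+O(mk^2\log\kpi)$. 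Keeping $H(q_{t+1})$ instead of $\log\kpi$ and using the standard estimate $\sum_t x_{t+1}/\sqrt{1+\sum_{s\le t}x_s}=O(\sqrt{1+\sum_t x_t})$ for $x_s\in[0,1]$ gives instead the sharper penalty $O(c_1\sqrt{\sum_t H(q_t)}+c_1\sqrt{\log\kpi})$, and the refinement of the previous paragraph makes $\sum_t\eta_t\opt'_{q_t}(\eta_t)$ of the same order — this is the form needed for the stochastic analysis.

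Finally I would run the self-bounding argument. Taking expectations with $\E[\sqrt{\sum_t H(q_t)}]\le\sqrt{\E[\sum_t H(q_t)]}$ and $\E[\sum_t H(q_t)]\le\log(\kpi T)\bar Q(a^*)+O(\log T)$ (bounding $H(q)\le(1-q_{a^*})(1+\log\kpi)+(1-q_{a^*})\log\frac{1}{1-q_{a^*}}$ and isolating rounds with $1-q_{t,a^*}<1/T$), the bounds above give $R_T\le A\sqrt{\bar Q(a^*)}+D$ with $A=O(mk^{3/2}\sqrt{\log T\,\log(\kpi T)})$ and $D=O(mk^2\log\kpi)$. Since the solution of \eqref{eq:optprime} satisfies $p_{t,a}\ge q_{t,a}/(2k)$, Lemma~\ref{lem:selfQ} applies with $c=1/(2k)$, so $R_T\ge\frac{\Deltamin}{2k}\bar Q(a^*)-C$. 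Writing $R_T=(1+\lambda)R_T-\lambda R_T$, substituting the upper bound into the first copy and the lower bound into the second, maximizing over $\bar Q(a^*)\ge 0$, and optimizing $\lambda$ yields $R_T=O\big(\frac{kA^2}{\Deltamin}+D+\sqrt{\frac{kA^2C}{\Deltamin}}\big)$, which is exactly \eqref{eq:bound_local_arsbc}; the extra factor $k$ relative to the naive $m^2k^3$-type rate is precisely the price of $c=1/(2k)$ in Lemma~\ref{lem:selfQ}, and setting $C=0$ (or $\bar Q(a^*)\le T$) recovers the stochastic (resp. adversarial) case.
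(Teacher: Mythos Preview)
Your overall structure is right, but there is a genuine gap in how you control $\sum_t \eta_t V'_t$ for the stochastic/self-bounding part. You pin the adaptivity on a ``vertex-sensitive strengthening'' of Lemma~\ref{lem:optprime_bound_local}, namely $\opt'_{q_t}(\eta_t)=O(m^2k^3(1-q_{t,a^*}))$, and you flag this as the main obstacle. That conjecture is not proved, and it is not clear it holds: in the proof of Lemma~\ref{lem:optprime_bound_local} the in-tree $\mathscr{T}$ comes from the water-transfer operator (Lemma~\ref{lem:wto}(c)) and its root cannot be freely set to $a^*$, so your ``shift $G$ so that the $a^*$-coordinate vanishes'' heuristic does not obviously go through while preserving the $u_a\ge u_b$ monotonicity used in the last step; moreover the transformation term $\gamma(\tfrac1k\onemat-u)^\top\calL\nu/\eta$ is bounded by $mk^2$ with no visible $(1-q_{a^*})$ factor.

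The paper avoids this obstacle entirely. It keeps the \emph{uniform} bound $V'_t\le 3m^2k^3$ from Lemma~\ref{lem:optprime_bound_local} and instead extracts the adaptivity from $\sum_t\eta_t$: from the definition of $\beta'_t$ one has
\[
\beta'_t \;\ge\; \frac{c_1\,t}{\sqrt{1+(\log\kpi)^{-1}\sum_{s\le t}H(q_s)}}\,,
\]
hence $\sum_t\eta_t\le\sum_t 1/\beta'_t\le \frac{1+\log T}{c_1}\sqrt{1+(\log\kpi)^{-1}\sum_t H(q_t)}$. So $\sum_t\eta_t V'_t=O\!\big(\tfrac{m^2k^3\log T}{c_1}\sqrt{(\log\kpi)^{-1}\sum_t H(q_t)}\big)$, which with your choice of $c_1$ matches the penalty term $O\!\big(c_1\sqrt{\log\kpi\,\sum_t H(q_t)}\big)$. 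Your remark ``$\eta_t=O(1/(c_1\sqrt t))$ in the worst case'' is exactly what hides this: when the $H(q_s)$ are small, $\beta'_t$ grows \emph{linearly} in $t$, so $\sum_t\eta_t$ is only $O(\log T/c_1)$, not $O(\sqrt T/c_1)$. Once you plug in this bound on $\sum_t\eta_t$, the rest of your argument (Lemma~\ref{lem:entropy_bound}, Lemma~\ref{lem:selfQ} with $c=1/(2k)$, and the $(1+\lambda)R_T-\lambda R_T$ self-bounding) goes through as you wrote it.
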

Note that~\eqref{eq:bound_local_arsbc} with $C = 0$ yields the bound in the stochastic regime.
The bound for the adversarial regime is a factor of $\sqrt{\log(T) \log(\kpi) / \log k}$ worse for large enough $T$ than the algorithm by \citet{lattimore20exploration}.
This comes from the difficulty of obtaining the BOBW guarantee,
where we need to aggressively change the learning rate when the environment looks not so much adversarial.
Note that exactly solving the optimization problem~\eqref{eq:optprime} is not necessary, and we discuss regret bounds for this case in Appendix~\ref{sec:eps_opt_bound}.
In the rest of this section, we provide a sketch of the analysis.

We start by decomposing the regret as follows. 
\begin{lemma}\label{lem:ftrl_bound_with_opt}
%If $\eta_t > 0$, we have
$
%\begin{align}
  R_T
%  &
  \leq 
  \Expect{
  \sumT
  \prx{
%    \frac{1}{\eta_{t+1}}
    \eta_{t+1}^{-1}
    -
%    \frac{1}{\eta_t}
    \eta_t^{-1}
  }
  H(q_{t+1})
  +
%  \frac{H(q_1)}{\eta_1}
  {H(q_1)}/{\eta_1}
  + 
  \sumT \eta_t V'_t
  }
  \per
%\end{align}
$
\end{lemma}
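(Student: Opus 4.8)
The plan is to substitute the chosen estimator and the Shannon-entropy regularizer into the FTRL decomposition~\eqref{eq:lemFTRL} and bound each of the three terms (penalty, stability, transformation) separately. The penalty term is immediate from the definition of $\psi_t$: with $\psi_t(q) = -H(q)/\eta_t$, we have $\psi_t(q_{t+1}) - \psi_{t+1}(q_{t+1}) = (\eta_{t+1}^{-1} - \eta_t^{-1}) H(q_{t+1})$, and $\psi_{T+1}(e_{a^*}) - \psi_1(q_1) \le H(q_1)/\eta_1$ since $H(e_{a^*}) = 0$ and $H \ge 0$. This recovers the first two terms in the stated bound without any loss. The real work is in showing that the stability and transformation terms together are bounded by $\sum_{t=1}^T \eta_t V'_t$ in expectation.

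**Next I would** handle the stability and transformation terms jointly. Conditioning on the history up to round $t$, the estimator $\hat y_t = G_t(A_t,\sigma_t)/p_{t,A_t}$ is unbiased for loss differences in the sense of~\eqref{eq:Gdiff_Ldiff}, so $\E_t[\sum_a (q_{t,a} - p_{t,a})(\hat y_{ta} - \hat y_{t a^*})] = (q_t - p_t)^\top \calL e_{x_t}$ up to the bias term; this is exactly the first term of the objective in~\eqref{eq:optprime} (times $\eta_t$). For the stability term, I would invoke the standard bound $\innerprod{q_t - q_{t+1}}{\hat y_t} - D_t(q_{t+1}, q_t) \le \sum_a q_{t,a} \, \xi(\eta_t \hat y_{ta})/\eta_t$ for the Shannon entropy — this is the inequality underlying the "stability" analysis of entropic FTRL (see \citealt[][e.g.\ around Exercise 28.12]{lattimore2020book}, which is already cited in the excerpt for~\eqref{eq:lemFTRL}). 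Taking the conditional expectation over $A_t \sim p_t$, we get $\E_t[\sum_a q_{t,a}\xi(\eta_t \hat y_{ta})/\eta_t] = \frac{1}{\eta_t}\sum_a p_{t,a} \innerprod{q_t}{\xi(\eta_t G_t(a,\fbmat_{ax_t})/p_{t,a})}$, which is $\eta_t$ times the third term of~\eqref{eq:optprime}. Combining, the conditional expectation of (stability $+$ transformation) at round $t$ is at most $\eta_t$ times the objective value of~\eqref{eq:optprime} evaluated at the chosen $(G_t, p_t)$ with $x = x_t$, which is at most $\eta_t$ times the $\max_{x\in[d]}$ of that objective, i.e.\ $\eta_t \, \opt'_{q_t}(\eta_t) \le \eta_t V'_t$ (using $V'_t = \max\{0, \opt'_{q_t}(\eta_t)\}$ and discarding the negative case, which only helps).

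**The main obstacle** I anticipate is bookkeeping the bias term correctly: for locally observable games the minimizing $G_t$ need not satisfy~\eqref{eq:Gdiff_Ldiff} exactly, so the transformation term carries an extra $\bias_{q_t}(G_t; x_t)/\eta_t$ contribution, which must be matched against the second term of the objective in~\eqref{eq:optprime}. One must check that the definition of $\bias_q(G;x)$ in~\eqref{eq:defbias} is precisely the per-round overestimate incurred by using $G$ in place of an exactly-calibrated estimator, accounting for the choice of $a^*$ via the $\max_{c\in\Pi}$ term; the inequality $R_T \le \E[\sum_t \sum_a p_{t,a}(\hat y_{ta} - \hat y_{t a^*})]$ itself already uses that the optimal action $a^*$ is Pareto optimal and hence in the support, so that $\hat y_{t a^*}$ is well-defined. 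Once these three pieces are assembled and the conditional expectations are chained via the tower rule over $t = 1, \dots, T$, the claimed bound follows. I would also note that $\opt'_{q_t}(\eta_t)$ is evaluated at the \emph{restricted} feasible set $\calP'_k(q_t)$, but since the algorithm's $(G_t, p_t)$ is by construction the minimizer over that same set, no additional argument is needed here — Lemma~\ref{lem:optprime_bound_local} is what later converts $V'_t$ into an explicit numerical bound, but that is a separate step not required for this lemma.
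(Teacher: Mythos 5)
Your overall architecture matches the paper's: the penalty term is computed exactly from the Shannon entropy (and $H(e_{a^*})=0$), the stability term is bounded by $\sum_a \qta\,\xi(\eta_t\hat y_{ta})/\eta_t$ and then averaged over $A_t\sim p_t$ to produce the third term of the objective of~\eqref{eq:optprime}, and the per-round aggregate is bounded by $\eta_t\opt'_{\qt}(\eta_t)\le\eta_t V'_t$ because $(G_t,p_t)$ is chosen as the minimizer over the restricted set. The gap is in the transformation/bias bookkeeping, which you flag as the main obstacle but resolve incorrectly. Your starting inequality $R_T\le\E\big[\sumT\sumak \pta(\hat y_{ta}-\hat y_{ta^*})\big]$ (``up to the bias term'') does not hold as stated: $\E_{A_t\sim p_t}[\hat y_{ta}-\hat y_{ta^*}]=\sum_b\big(G_t(b,\fbmat_{bx_t})_a-G_t(b,\fbmat_{bx_t})_{a^*}\big)$ equals $\calL_{ax_t}-\calL_{a^*x_t}$ only for $a\in\Pi$ and only when $G_t$ satisfies~\eqref{eq:Gdiff_Ldiff}; the $G_t$ returned by~\eqref{eq:optprime} need not satisfy it, and $p_t\in\calP'_k(\qt)$ generally puts mass outside $\Pi$. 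The error incurred there is a $p_t$-weighted bias, whereas the term $\bias_{\qt}(G_t;x_t)$ in the objective is $q_t$-weighted, so it cannot absorb this discrepancy. Likewise, your identification of the conditional expectation of the transformation term with the first term of the objective fails: with the weights $(\qta-\pta)$ you quote, the expectation (even under exact calibration) would be $(q_t-p_t)^\top\calL e_{x_t}$, i.e.\ the \emph{negative} of the objective's first term $(p_t-q_t)^\top\calL e_{x_t}$, and in any case the identification again requires calibration of $\hat y_t$ on actions outside $\Pi$, which is unavailable. As written, the three pieces therefore do not assemble into the objective of~\eqref{eq:optprime}, and the step ``$\le\eta_t V'_t$'' does not follow.

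The paper avoids this by splitting the regret at the level of \emph{true} losses before the estimator ever enters: $R_T=\E\big[\sumT(p_t-\qt)^\top\calL e_{x_t}\big]+\E\big[\sumT(\qt^\top\calL e_{x_t}-\calL_{a^*x_t})\big]$. The first piece is never passed through $\hat y_t$ and is literally the first term of the objective (times $\eta_t$); the second piece is supported on $\Pi$ (since $\qta=0$ off $\Pi$ and $a^*\in\Pi$), so it is bounded by $\bias_{\qt}(G_t;x_t)+\E_t\big[\innerprod{\qt}{\hat y_t}-\hat y_{ta^*}\big]$ directly from the definition~\eqref{eq:defbias}, and the remaining $q$-weighted estimator regret is handled by the standard FTRL bound with exactly your penalty and stability treatment. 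A corrected version of your route—using the $(p-q)$-weighted transformation term together with a $p$-weighted bias—turns out to be algebraically equivalent to this, but the proposal as written does not close the matching.
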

This can be proven by refining the analysis of the penalty term of Theorem 6 in~\citet{lattimore20exploration}, in which we rely on the standard analysis in~\eqref{eq:lemFTRL}, and the first and remaining terms correspond to the penalty term and the sum of the transformation and stability terms, respectively.
As will be shown in the proof of Theorem~\ref{thm:locally_obs}, the RHS of Lemma~\ref{lem:ftrl_bound_with_opt} can be bounded in terms of $\sumT H(q_t)$, for which we have the following bound.
\begin{lemma}\label{lem:entropy_bound}
For any $a^* \in [k]$, we have
%\begin{align}
$
    \sumT H(\qt) \le Q(a^*) \log (\e \kpi T/Q(a^*))
$.
%    \per
%\end{align}
\end{lemma}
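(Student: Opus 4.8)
The plan is to bound $H(\qt)$ in each round separately by a function of the single scalar $\delta_t \coloneqq 1 - q_{t,a^*}$, and then to aggregate the $T$ per-round bounds using concavity so that the right-hand side collapses to a function of $Q(a^*) = \sum_{t=1}^T \delta_t$ alone. This is the standard ``entropy versus mass on the best arm'' trade-off that appears in self-bounding arguments, specialized here to $\qt$ being supported on $\Pi$ (hence on at most $\kpi$ coordinates).

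For the per-round estimate I would split the coordinate $a^*$ off from the rest and apply the grouping identity for the Shannon entropy: since conditioned on $\{A_t = a^*\}$ there is no remaining uncertainty,
\[
  H(\qt) = H_{\mathrm b}(\delta_t) + \delta_t\, H_{\mathrm{cond},t}\com
\]
where $H_{\mathrm b}(\delta) = -\delta\log\delta - (1-\delta)\log(1-\delta)$ is the binary entropy and $H_{\mathrm{cond},t}$ is the entropy of $\qt$ conditioned on lying in $\Pi\setminus\{a^*\}$. Because $\qt \in \calP(\Pi)$, this conditional law is supported on at most $\kpi$ points, so $H_{\mathrm{cond},t} \le \log\kpi$; combined with the elementary inequality $-(1-\delta)\log(1-\delta) \le \delta$ on $[0,1]$ this gives
\[
  H(\qt) \;\le\; -\delta_t\log\delta_t + \delta_t + \delta_t\log\kpi \;=\; \delta_t\,\log\!\frac{\e\kpi}{\delta_t}\com
\]
with the usual convention $0\log\frac10 = 0$.

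It remains to sum over $t$. The map $f(x) = x\log(\e\kpi/x)$, extended by $f(0)=0$, is concave on $[0,1]$ since $f''(x) = -1/x < 0$, so Jensen's inequality with uniform weights $1/T$ yields
\[
  \sumT H(\qt) \;\le\; \sumT f(\delta_t) \;\le\; T\,f\!\Big(\tfrac1T \sumT \delta_t\Big) \;=\; T\,f\!\Big(\tfrac{Q(a^*)}{T}\Big) \;=\; Q(a^*)\,\log\!\frac{\e\kpi T}{Q(a^*)}\com
\]
where I used $\sumT \delta_t = \sumT(1 - q_{t,a^*}) = Q(a^*)$; this is precisely the claimed inequality, with the degenerate case $Q(a^*) = 0$ handled by the same convention (both sides vanish). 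The argument is otherwise routine calculation; the only point requiring a little care is the per-round step — one should state the grouping identity so that it does not presuppose $a^* \in \Pi$, bounding the conditional entropy by $\log\kpi$ rather than $\log(\kpi-1)$ — and I do not anticipate a genuine obstacle.
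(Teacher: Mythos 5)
Your proof is correct and follows essentially the same route as the paper's: a per-round bound $H(q_t) \le (1-q_{t,a^*})\log\bigl(\e \kpi/(1-q_{t,a^*})\bigr)$ followed by Jensen's inequality over the $T$ rounds applied to the concave map $x \mapsto x\log(\e\kpi/x)$. The only difference is cosmetic — you obtain the per-round bound via the entropy grouping identity plus a maximum-entropy bound on the conditional law, whereas the paper splits off the $a^*$ coordinate directly and uses $\log(1+x)\le x$ together with Jensen over the remaining coordinates (and your $\log\kpi$ in place of $\log(\kpi-1)$ even covers $a^*\notin\Pi$, which the paper's proof implicitly restricts away).
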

%This lemma can be proven using $\qta = 0$ for $a \not\in \Pi$ with the proof of Theorem 5 of~\citet{ito2022nearly}.
We can show this lemma similarly to Lemma 4 of~\citet{ito2022nearly} by noting that $\qta = 0$ for $a \not\in \Pi$.
Finally, we are ready to prove Theorem~\ref{thm:locally_obs}.
Here, we only sketch the proof and provide the complete proof can be found in Appendix~\ref{subsec:proof_of_theorem_locally}.
%\begin{proofof}{Theorem~\ref{thm:locally_obs}}
\begin{proofsketchof}{Theorem~\ref{thm:locally_obs}}
%\todo{fix this to more sketch version}
We prove this theorem by bounding the RHS of Lemma~\ref{lem:ftrl_bound_with_opt}.
%Let $t_0 = \min\{t \in [T] : \beta'_t \geq B \}$.

\paragraph{[Bounding the penalty term]}
Since $\beta'_{t+1}$ is non-decreasing and $\beta'_t \le \beta_t$ from the definition of learning rate in~\eqref{eq:defeta}, it holds that
%$
\begin{align}
  &
  \sumT
  \prn[\big]{
    {\eta_{t+1}}^{-1}
    -
    {\eta_t}^{-1}
  }
  H(q_{t+1})
%  &
  \leq  
  \sumT
  \prn[]{
    \beta'_{t+1}
    -
    \beta'_t
  }
  H(q_{t+1})
  =
  \sum_{t=1}^T
  \frac{ c_1 \sqrt{\log \kpi} \,  H(q_{t+1})}{
    \sqrt{\log \kpi +  \sum_{s=1}^{t} H(q_s) }
  }
  \nn
%  &
%  =
%  {2 c_1 \sqrt{\log \kpi}} \cdot  
%  \sum_{t=1}^T
%  \frac{ H(q_{t+1}) }{
%    \sqrt{\log \kpi + \sum_{s=1}^{t} H(q_s) } + 
%    \sqrt{\log \kpi + \sum_{s=1}^{t} H(q_s) } 
%  }
%  \nn
  &
  \leq
  c_1 \sqrt{\log \kpi}
  \sum_{t=1}^T
  \frac{ {2 }  H(q_{t+1}) }{
    \sqrt{\sum_{s=1}^{t+1} H(q_s) } + 
    \sqrt{\sum_{s=1}^{t} H(q_s) } 
  }
%  \nn
%  &
%  =
%  {2 c_1 \sqrt{\log \kpi}}
%  \sum_{t=1}^T
%  \prn[\bigg]{
%    \sqrt{\sum_{s=1}^{t+1} H(q_s) } -
%    \sqrt{\sum_{s=1}^{t} H(q_s) } 
%  }
%  \nn
%  &
%  =
%  {2 c_1 \sqrt{\log \kpi}}
%  \prn[\Bigg]{
%    \sqrt{\sum_{s=1}^{T+1} H(q_s) } -
%    \sqrt{H(q_{1})} 
%  }
%  \nn
%  &
%  \leq 
%  {2 c_1 \sqrt{\log \kpi}}
%%  \prn[\Bigg]{
%    \sqrt{\sum_{s=2}^{T+1} H(q_s) } 
%%      -
%%      \sqrt{H(q_{1})} 
%%  }
%  \nn
%  &
  \leq 
  {2 c_1 \sqrt{\log \kpi}}
  \sqrt{\sum_{t=1}^{T} H(q_{t})}
  \com
  \label{eq:penalty_etadiffsum_bound_abbr}
\end{align}
%$
where 
the second inequality follows from $0 \le H(q_{t+1}) \le \log \kpi$, and
the last inequality follows by sequentially applying
$b/(\sqrt{a + b} + \sqrt{a}) = \sqrt{a + b} - \sqrt{a}$ for $a, b > 0$, the telescoping argument, $\sqrt{a+b} - \sqrt{b} \le \sqrt{a}$ for $a, b \ge 0$, and
%,and 
%the last inequality follows from 
$H(q_{T+1}) \leq H(q_{1})$.

\paragraph{[Bounding the sum of the transformation and part of stability terms]}
It holds that 
\begin{align}
  \sumT \eta_t V'_t
  \leq 
  \max_{s \in [T]} V'_s \sumT \eta_t
%  =
%  \prx{
%  \max_{s \in [T]} 
%  \max\brx{0,\, \opt'_*(\eta_s)}
%  }
%  \sumT \eta_t
  \leq 
  3 m^2 k^3
  \sumT \eta_t
  \leq 
  \frac{3 m^2 k^3 (1 + \log T)}{ c_1 }
  \sqrt{ 1 + 
%  (\log \kpi)^{-1} 
  \frac{1}{\log \kpi}
  \sum_{t=1}^{T} H(\qt) } 
  \com
  \label{eq:sum_etaV_bound_abbr}
\end{align}
where the second inequality follows from Lemma~\ref{lem:optprime_bound_local} and the last inequality follows since
%Next, we bound $\sumT \eta_t$.
%Using the definition of $\beta'_t$ in~\eqref{eq:defeta}, we can bound $\beta'_t$ as
the lower bound
%\begin{align}
$
  \beta'_t
  =
  c_1
  +
  \sum_{u=1}^{t-1}
  \frac{ c_1} {\sqrt{ 1 + (\log \kpi)^{-1} \sum_{s=1}^{u} H(q_s) }}
  \geq
  \frac{ c_1 t } {\sqrt{ 1 + (\log \kpi)^{-1} \sum_{s=1}^{t} H(q_s) } }  
%  \per
$
implies that
%\end{align}
%Using this inequality, we have
%$
\begin{align}
  \sumT \eta_t 
  \leq 
  \sum_{t=1}^T
  \frac{1}{\beta'_t}
  \leq 
  \sum_{t=1}^T
  \frac{1}{ c_1 t } 
  \sqrt{ 1 + \frac{1}{\log \kpi} \sum_{s=1}^{t} H(q_s) } 
%    &
  \leq
  \frac{1 + \log T}{ c_1 }
  \sqrt{ 1 + \frac{1}{\log \kpi} \sum_{t=1}^{T} H(\qt) } 
  \per
  \n 
  % \label{eq:sum_eta_bound_abbr}
\end{align}
%$

\paragraph{[Summing up arguments and applying a self-bounding technique]}
By bounding the RHS of Lemma~\ref{lem:ftrl_bound_with_opt} 
by~\eqref{eq:penalty_etadiffsum_bound_abbr} and~\eqref{eq:sum_etaV_bound_abbr}
with
$
  c_1 = \Theta\prn[\big]{
  m k^{3/2} \sqrt{{\log (T)}/{\log \kpi}}
  }
$, we have
$
%\begin{align}\label{eq:entropy_dep_bound_local}
  R_T 
%  &
%  \leq 
%  3 m^2 k^3 \E\Big[\frac{1 + \log T}{c_1} \sqrt{1 + (\log \kpi)^{-1} \sumT H(q_t)}\Big]
%  +
%  2 c_1 \sqrt{\log \kpi} \, \Expect{\sqrt{\sumT H(q_t)}}
%  +
%  \frac{\log \kpi}{\eta_1}
%  \nn
%  &
  =
  O\prn[\Big]{
    m k^{3/2} \allowbreak \sqrt{\log (T) \sumT H(q_t)} 
    +
    m k^{3/2} \sqrt{\log (T) \log \kpi}
  }
  + 
  2mk^2 \log \kpi
  \per 
%  \com
%\end{align}
$
%where we set
%%\begin{align}
%$
%  c_1 = \Theta\prn[\big]{
%  m k^{3/2} \sqrt{{\log (T)}/{\log \kpi}}
%  }
%$.
%  \per
%\end{align}
Since $\sumT H(q_t) \le T \log \kpi$, the desired bound for the adversarial regime is obtained.
We consider the adversarial regime with a self-bounding constraint in the following.
%Using Lemma~\ref{lem:entropy_bound}, if $Q(a^*) \le \e$, we have $\sumT H(q_t) \le \e \log(\kpi T)$ since $\kpi T \ge \e$, and otherwise we have $\sumT H(q_t) \le Q(a^*) \log(\kpi T)$.
%In the former case, we can trivially obtain the desired bound combined with~\eqref{eq:entropy_dep_bound_local}.
Here, we only consider the case of $Q(a^*) \geq \e$, since otherwise we easily obtain the desired bound. 
Note that Lemma~\ref{lem:entropy_bound} with $Q(a^*) \geq \e$ implies $\sumT H(q_t) \leq Q(a^*) \log(\kpi T)$.
Hence, for any $\lambda > 0$
%$
\begin{align} 
  R_T 
  &= 
  (1 + \lambda) R_T - \lambda R_T
  \leq
  \E\Big[
  (1 + \lambda)
  O\prx{ m k^{3/2} \sqrt{\log(T) \log(\kpi T) Q(a^*)} }
  -
  \frac{\lambda \Deltamin Q(a^*)}{2k}
  \Big]
  + \lambda C
  \nn
  &
  \leq 
  O\prn[\big]{
    \calR^{\mathrm{loc}}
    +
    \lambda \prn{ \calR^{\mathrm{loc}} + C }
    +
    {\calR^{\mathrm{loc}}}/{\lambda} 
  }
  \com
  \n 
\end{align}
%$
where 
the first inequality follows by Lemma~\ref{lem:selfQ} with $c = 1/(2k)$,
and the second inequality follows from 
$a \sqrt{x} - bx/2 \le a^2/(2b)$ for $a, b, x \geq 0$ and $\calR^{\mathrm{loc}} = {m^2 k^4 \log(T) \log(\kpi T)}/{\Deltamin}$.
Appropriately choosing $\lambda$ gives the desired bound.
%\end{proofof}
\end{proofsketchof}

\section{Globally Observable Case}\label{sec:global}
This section proposes an algorithm for globally observable games and derives its BOBW regret bound.
We use $G$ defined in~\eqref{eq:G_0} and let $c_{\calG} = \max\{1, k \nrm{G}_\infty \}$ be the game-dependent constant.
\subsection{Proposed Algorithm}
%This section provides the proposed algorithm for globally observable games.
%\todo{write some topic sentence}
%\paragraph{Regularizer and Learning Rate}
In the proposed algorithm for globally observable games,
we use the regularizer $\psi_t$ in~\eqref{eq:defpsilocal} as used in the locally observable case, but with different parameters.
We define $\beta_t, \gamma_t \in \R$ by
$\beta_1 = \max\{c_2, 2 c_{\calG}\}$
and
\begin{align}\label{eq:def_beta_gamma_global}
  \gamma'_{t}
  =
  \frac14
  \frac{c_1 b_t}{ c_1 +  \left( \sum_{s=1}^t b_s \right)^{1/3} },
  \quad
  \beta_{t+1}
  =
  \beta_t
  +
  \frac{c_2 b_t}{ \gamma_t' \left(
    c_1
    +
  \sum_{s=1}^{t-1} \frac{b_s a_{s+1}}{ \gamma_s' }
  \right)^{1/2} }
  \com
  \quad
  \gamma_{t}
  =
  \gamma'_{t}
  +
    \frac{c_{\calG}}{2\beta_t}
  \com
\end{align}
where $c_1$ and $c_2$ are parameters satisfying $c_1 \geq \max\{1, \log\kpi\}$, and
$a_t$ and $b_t$ are defined by
\begin{align}\label{eq:defatbtglobal}
  a_t = H(\qt) = - \sum_{a \in \Pi} \qta \log(\qta)
  \quad
  \mbox{and}
  \quad
  b_t = 1 - \max_{a \in \Pi} \qta 
  \per 
\end{align}
Note that we have $\psi_t(0) = 0$, and using $\beta_t \ge \beta_1 \ge 2 \cG$ and $b_t \le \sumak \qta \le 1$ we have
$
  \gamma_t 
  \le
  {c_1 b_t}/{(4c_1)} + {\cG}/{(2 \cG)}
  \leq 1/2
$.
We use the following transform from $\qt$ to $\pt$:
\begin{align}\label{eq:p_global}
  \pt = \calT(q_t) = (1 - \gamma_t) \qt + \frac{\gamma_t}{k} \onemat
  \per 
\end{align}
Algorithm~\ref{alg:MixedExp3Globally} presents the proposed algorithm for globally observable games.

\LinesNumbered
\SetAlgoVlined
\begin{algorithm2e}[t]
\For{$t = 1, 2, \ldots$}{
%Compute $\qt$ with the regularizer in~\eqref{eq:defpsiglobal} \\
Compute $\qt$ using~\eqref{eq:q_for_entropy}
%$\displaystyle \qta 
%= 
%\frac{\ind{a \in \Pi} \exp\prx{-\eta_t \sum_{s=1}^{t-1} \hat{y}_{sa}}}
%{\sum_{b \in \Pi}\exp\prx{-\eta_t \sum_{s=1}^{t-1} \hat{y}_{sb}}}$ % \\

Compute $a_t, b_t$ in~\eqref{eq:defatbtglobal}, % \\
%Compute 
$\gamma'_t, \gamma_t$ in~\eqref{eq:def_beta_gamma_global}, % ,  \\
and
%Compute 
$\pt$ from $\qt$ by~\eqref{eq:p_global}  % \\

Sample $\at \sim \pt$, observe $\sigma_t \in \Sigma$, compute $\displaystyle \hat{y}_t = {G(A_t, \sigma_t)}/{\ptAt}$, 
and update $\beta_t$ using~\eqref{eq:def_beta_gamma_global} 
}
\caption{
BOBW algorithm for globally observable games
}
\label{alg:MixedExp3Globally}
\end{algorithm2e}

\subsection{Regret Analysis for Globally Observable Games}
With the above algorithm, we can prove the following regret bound for globally observable games.
\begin{theorem}\label{thm:globally_obs}
Consider any globally observable partial monitoring game.
If we run Algorithm~\ref{alg:MixedExp3Globally} with
$
c_1 =
\Theta \prn[\big]{
	\prn[\big]{\cG^2 \log(T) \log(\kpi T)}^{1/3}
}
$
and
$
c_2 =
\Theta \prn[\big]{
  \sqrt{\cG^2 \log T}
}
$
,
we have the following bounds.
For the adversarial regime with a $(\Delta, C, T)$ self-bounding constraint, we have
\begin{align}
  R_T
%  =
%  O\prn[\Bigg]{
%    \frac{c_1^3}{\Deltamin^2}
%    +
%    \prn[\Bigg]{\frac{C^2 c_1^3}{\Deltamin^2} }^{1/3}
%  }
  =
  O\prn[\Bigg]{
    \frac{\cG^2 \log(T) \log(\kpi T)}{\Deltamin^2}
    +
    \prn[\Bigg]{\frac{C^2 \cG^2 \log(T) \log(\kpi T)}{\Deltamin^2} }^{1/3}    
  }
  \com
  \label{eq:bound_global_arsbc}
\end{align}
and for the adversarial regime, we have
\begin{align}
  R_T
  &
%  =
%  O\prn[\Bigg]{
%    c_1 T^{2/3}
%    +
%    \sqrt{\frac{\log \kpi}{\log(\kpi T)}}
%    c_1^{2/3}
%    T^{1/2}
%    +
%    \frac{c_1^2}{\sqrt{\log(\kpi T)}}
%    +
%    \max\{c_2, \,\cG\} \log \kpi
%  }
%  \nn
%  &
  =
  O\prn[\big]{
    \prn[\big]{\cG^2 \log (T) \log(\kpi T)}^{1/3} T^{2/3}
  }
  \n 
  \com
\end{align}
where in the last big-$O$ notation, 
%the terms with small orders with respect to $T$ are ignored.
the terms of $o(\mathrm{poly}(k, c_G) (T \log T)^{2/3})$ are ignored.
\end{theorem}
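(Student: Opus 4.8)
\paragraph{Plan.}
The plan is to mirror the structure of the proof of Theorem~\ref{thm:locally_obs}, but to track the different scaling induced by using the fixed estimator $G$ from~\eqref{eq:G_0} (so the bias term vanishes) together with the explicit exploration mixture $\pt = (1-\gamma_t)\qt + (\gamma_t/k)\onemat$ in~\eqref{eq:p_global}. First I would start from the FTRL decomposition~\eqref{eq:lemFTRL} and bound the three groups of terms. The transformation term $\sum_t \sum_a (\qta - \pta)(\hat y_{ta} - \hat y_{ta^*})$ is controlled using that $\qt - \pt = \gamma_t(\qt - \onemat/k)$ and that $\E[\hat y_{ta}] $ equals the true loss differences up to the path-sum structure, so this contributes something of order $\sum_t \gamma_t \cdot \cG$ (here $\cG = \max\{1, k\nrm{G}_\infty\}$ enters through $\nrm{\hat y_t}_\infty \le \cG/(\gamma_t)$, using $\ptAt \ge \gamma_t/k$). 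The stability term is bounded via the standard second-order bound for the entropy regularizer, $\innerprod{\qt - q_{t+1}}{\hat y_t} - D_t(q_{t+1},\qt) \le \eta_t \sum_a \qta \hat y_{ta}^2 \le \eta_t \sum_a \qta (\cG/(\gamma_t \ptAt))\cdot(\text{indicator})$, which after taking conditional expectation gives a per-round contribution of order $\eta_t \cG^2 / \gamma_t$ (the extra $1/\gamma_t$ is exactly why the learning rate in~\eqref{eq:def_beta_gamma_global} carries the $\gamma_t'$ factor). The penalty term $\sum_t (\eta_{t+1}^{-1} - \eta_t^{-1}) H(q_{t+1}) + H(q_1)/\eta_1$ is bounded using that $\beta_{t+1} - \beta_t$ is the stated increment and $H(q_{t+1}) \le \log\kpi \le c_1$.

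\paragraph{Key steps.}
After assembling these bounds, the core of the argument is the telescoping/summation estimate analogous to~\eqref{eq:penalty_etadiffsum_bound_abbr} and~\eqref{eq:sum_etaV_bound_abbr}, but now with cube roots instead of square roots because the globally observable stability term scales like $\eta_t \cG^2/\gamma_t$ and $\gamma_t' \asymp c_1 b_t / (c_1 + (\sum_{s\le t} b_s)^{1/3})$. Concretely I would show, using the lower bound on $\beta_t$ obtained by the same "$\sum$ of increments $\ge$ $t \times$ last term" trick as in the locally observable proof, that $\sum_t \eta_t \cG^2/\gamma_t = O\big(c_2^{-1}\log(T)\,(c_1 + \sum_t b_t a_{t+1}/\gamma_t')^{1/2}\big)$ and that $\sum_t \gamma_t' a_{t+1}/\,(\cdot)$-type sums telescope to give a power of $\sum_t b_t$; meanwhile the exploration/transformation contribution $\sum_t \gamma_t \cG = O\big(c_1 (\sum_t b_t)^{2/3} + \cG\sum_t \cG/\beta_t\big)$. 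Combining, every term should reduce to a multiple of $\big(\sum_t b_t\big)^{2/3}$ (plus lower-order $\log$ terms), with the constant being $\Theta\big((\cG^2\log(T)\log(\kpi T))^{1/3}\big)$ once $c_1, c_2$ are chosen as in the statement. This yields $R_T = O\big(c_1 (\sum_t b_t)^{2/3}\big)$, and since $\sum_t b_t \le T$ the adversarial bound $O\big((\cG^2\log(T)\log(\kpi T))^{1/3} T^{2/3}\big)$ follows immediately, absorbing the lower-order terms into the stated $o(\mathrm{poly}(k,\cG)(T\log T)^{2/3})$ slack.

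\paragraph{Self-bounding for the intermediate regime.}
For the adversarial regime with a $(\Delta, C, T)$ self-bounding constraint I would first relate $\sum_t b_t = \sum_t (1 - \max_{a\in\Pi}\qta)$ to $Q(a^*) = \sum_t(1 - q_{t,a^*})$: since $a^* \in \Pi$ we have $b_t \le 1 - q_{t,a^*}$, hence $\sum_t b_t \le Q(a^*)$, and also $a_t = H(\qt) \le Q(a^*)\log(\e\kpi T/Q(a^*))$ by Lemma~\ref{lem:entropy_bound} (the analogue of Lemma~\ref{lem:entropy_bound} carries over verbatim since $\qta = 0$ off $\Pi$). This gives an intermediate regret bound of the form $R_T = O\big(c_1 \bar Q(a^*)^{2/3}\big)$ after taking expectations, with $c_1 = \Theta((\cG^2\log(T)\log(\kpi T))^{1/3})$. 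Then I would apply the now-standard self-bounding trick: for any $\lambda > 0$, write $R_T = (1+\lambda)R_T - \lambda R_T$, upper bound the first copy by the above and lower bound the second copy by $\lambda(\Deltamin \bar Q(a^*)/(2k) - C)$ via Lemma~\ref{lem:selfQ} — wait, here the exploration mixture gives $\pta \ge (1-\gamma_t)\qta \ge \qta/2$ so $c = 1/2$, a constant rather than $1/(2k)$ — and use $ax^{2/3} - bx \le C' a^3/b^2$ to optimize over $x = \bar Q(a^*)$ and then over $\lambda$. Balancing produces the $\cG^2\log(T)\log(\kpi T)/\Deltamin^2$ leading term and the $(C^2 \cG^2 \log(T)\log(\kpi T)/\Deltamin^2)^{1/3}$ corruption term of~\eqref{eq:bound_global_arsbc}.

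\paragraph{Main obstacle.}
The hard part will be the cube-root telescoping bookkeeping in~\eqref{eq:def_beta_gamma_global}: unlike the locally observable case where a single learning rate $\eta_t$ suffices, here $\beta_t$ and $\gamma_t'$ are coupled (the increment of $\beta_t$ depends on $\gamma_t'$ and on the running sum $\sum_s b_s a_{s+1}/\gamma_s'$), and one must verify that the stability sum $\sum_t \eta_t \cG^2/\gamma_t$, the transformation sum $\sum_t \gamma_t \cG$, and the penalty sum all collapse to the \emph{same} power $(\sum_t b_t)^{2/3}$ with mutually compatible constants. Making the two-parameter recursion close — in particular showing the lower bound $\beta_t \gtrsim t\cdot(\text{current increment})$ and that $\sum_s b_s a_{s+1}/\gamma_s'$ is itself $O((\sum_s b_s)^{\text{something}})$ up to logs — is the delicate step; everything else is a routine adaptation of the locally observable analysis.
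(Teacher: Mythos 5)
Your outline follows the paper's overall architecture (FTRL decomposition, the coupled $\beta_t,\gamma'_t$ recursion, a $B_T^{2/3}$/$\bar Q(a^*)^{2/3}$ intermediate bound, then self-bounding with $c=1/2$), but it has a genuine gap at the one step that is the paper's actual technical contribution for this theorem: the stability bound. You bound the stability term by the naive second-order estimate $\eta_t\sum_a \qta \hat y_{ta}^2$, which after taking expectations gives only $O\big(\cG^2/(\beta_t\gamma_t)\big)$ per round. The factor $b_t = 1-\max_{a\in\Pi}\qta$ is missing, and it cannot be recovered from this estimate: since $\gamma_t=\gamma'_t+\cG/(2\beta_t)$ with $\gamma'_t\propto b_t$, in rounds where $q_t$ is concentrated your per-round bound is of order $\cG^2/\big(\beta_t\cdot \cG/(2\beta_t)\big)=2\cG$, so the stability sum can be $\Theta(\cG T)$ even when $Q(a^*)$ is tiny. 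Consequently neither the intermediate bound $R_T=O\big(c_1\bar Q(a^*)^{2/3}\big)$ that your self-bounding step relies on, nor even the adversarial $T^{2/3}$ bound, follows; and the estimate you assert, $\sum_t \eta_t\cG^2/\gamma_t = O\big(c_2^{-1}\log(T)\,(c_1+\sum_t b_t a_{t+1}/\gamma'_t)^{1/2}\big)$, is really the paper's bound for $\sum_t \cG^2 b_t/(\beta_t\gamma_t)$ --- without the $b_t$ in the numerator, $1/\gamma'_t\propto (c_1+B_t^{1/3})/(c_1 b_t)$ and the $w_t=b_t/\gamma'_t$ log-telescoping in the paper's argument breaks down. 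Lemma~\ref{lem:stability_bound_global} fixes exactly this: because $\qt-q_{t+1}$ sums to zero one may replace $\hat y_t$ by $\hat y_t-\hat y_{ta'}\onemat$ for any $a'$, verify $|\hat y_{ta}-\hat y_{ta'}|\le \beta_t$ via $\pta\ge\gamma_t/k\ge \cG/(2\beta_t k)$, and choose $a'=\argmax_a \qta$ so that the surviving sum over $a\neq a'$ produces the factor $b_t$, giving $\Expect{2\cG^2 b_t/(\beta_t\gamma_t)}$. Your sketch uses the consequence of this lemma while describing a derivation that does not deliver it.

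A secondary issue: you bound the transformation term at the level of the estimators, getting $O(\gamma_t\cG)$ per round, whereas the paper (Lemma~\ref{lem:FTRL}) splits $\pt=(1-\gamma_t)\qt+(\gamma_t/k)\onemat$ against the true losses, which lie in $[0,1]$, so the mixture costs only $\gamma_t$. The extra $\cG$ is not harmless: it makes the dominant contribution $\cG\sum_t\gamma'_t=\Theta\big(\cG c_1 B_T^{2/3}\big)$, so with $c_1=\Theta\big((\cG^2\log(T)\log(\kpi T))^{1/3}\big)$ you miss the claimed leading constant by a factor of $\cG$; indeed you silently drop this factor later when writing $\sum_t\gamma_t\cG=O\big(c_1(\sum_t b_t)^{2/3}+\cG\sum_t \cG/\beta_t\big)$. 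The remaining ingredients you list --- the penalty telescoping, $B_T\le Q(a^*)$ and $A_T\le Q(a^*)\log(\kpi T)$ via Lemma~\ref{lem:entropy_bound}, the case $Q(a^*)\le c_1^3$, and Lemma~\ref{lem:selfQ} with $c=1/2$ followed by the $a x^{2/3}-bx$ optimization over $\bar Q(a^*)$ and $\lambda$ --- do match the paper's proof.
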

Note that~\eqref{eq:bound_global_arsbc} with $C = 0$ yields the bound in the stochastic regime.
The bound for the adversarial regime is a factor of $(\log(T) \log(\kpi T) /\log k)^{1/3}$ worse than the algorithm by \citet{lattimore20exploration}.
%\fix{
This comes from the difficulty of obtaining the BOBW guarantee,
where we need to aggressively change the learning rate when the environment looks not so much adversarial.
%where we need to adjust the learning rate so that the algorithm adopts to the \emph{easy} problem instances.
%Nevertheless, to our knowledge, the proposed algorithm is the first BOBW algorithm for globally observable games.
%}

We begin the analysis by decomposing the regret as follows. 
\begin{lemma}\label{lem:FTRL}
The regret of Algorithm~\ref{alg:MixedExp3Globally} is bounded as 
%The regret of FTRL in the form of Algorithm~\ref{alg:MixedExp3Globally} is bounded as 
% ~\eqref{eq:p_global} can be bounded as
%\begin{align}
%  &
$
  R_T
  \leq 
  \E\big[
    \sumT
    \gamma_t
%  \big]
  +
%  \E\big[
    \sumT
    \prn[\big]{
    \innerprod{\hat{y}_t}{\qt - q_{t+1}}
    -
    D_t(q_{t+1}, \qt)
    }
%  \big]
  +
%  \E\big[
    \sumT
    \prn[\big]{
    \psi_{t}(q_{t+1})
    -
    \psi_{t+1}(q_{t+1})
    }
    +
    \psi_{T+1} (e_{a^*}) 
    -
    \psi_1 (q_1)
  \big]
  \per
$
%\end{align}
\end{lemma}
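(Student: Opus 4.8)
The plan is to derive the decomposition in three steps, essentially re-running the FTRL analysis behind~\eqref{eq:lemFTRL} but with the transformation term replaced by the cheaper exploration penalty $\sumT \gamma_t$ produced by the uniform mixing in~\eqref{eq:p_global}. \emph{Step 1: reduce the regret of $(\pt)_t$ to that of the iterates $(\qt)_t$.} Since $\at \sim \pt$ and $x_t$ is conditionally independent of $\at$ given the history (this holds in every regime considered, including under corruptions), one has $R_T = \E[\sumT \innerprod{\pt - e_{a^*}}{\lossmat e_{x_t}}]$. Writing $\pt - e_{a^*} = (\pt - \qt) + (\qt - e_{a^*})$ and using $\pt - \qt = \gamma_t(\onemat/k - \qt)$ from~\eqref{eq:p_global}, the entrywise bound $\lossmat \in [0,1]^{k\times d}$ gives $\innerprod{\pt - \qt}{\lossmat e_{x_t}} \le \gamma_t$ because $\innerprod{\onemat/k}{\lossmat e_{x_t}} \le 1$ and $\innerprod{\qt}{\lossmat e_{x_t}} \ge 0$; hence $R_T \le \E[\sumT \gamma_t] + \E[\sumT \innerprod{\qt - e_{a^*}}{\lossmat e_{x_t}}]$.

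\emph{Step 2: pass from the true loss to the estimator $\hat y_t$.} Taking the conditional expectation of $\hat y_t = G(\at,\sigma_t)/p_{t,\at}$ given the history and $x_t$ yields $v_t \coloneqq \sum_{a=1}^k G(a,\fbmat_{a x_t})$ (this uses $p_{t,a} \ge \gamma_t/k > 0$, so the estimator is well defined), and by Lemma~\ref{lem:Gdiff_Ldiff} the coordinates of $v_t$ indexed by $\Pi$ coincide with those of $\lossmat e_{x_t}$ up to a common additive constant. Because $\qt - e_{a^*}$ has zero coordinate sum and is supported on $\Pi$ — here using that the optimal action $a^*$ may be taken to be Pareto optimal, which in particular follows from the uniqueness assumption on the optimal action — this additive constant cancels, so $\innerprod{\qt - e_{a^*}}{v_t} = \innerprod{\qt - e_{a^*}}{\lossmat e_{x_t}}$, and after taking total expectation, $\E[\sumT \innerprod{\qt - e_{a^*}}{\lossmat e_{x_t}}] = \E[\sumT \innerprod{\qt - e_{a^*}}{\hat y_t}]$.

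\emph{Step 3: invoke the FTRL bound and combine.} Since $\qt \in \argmin_{q \in \calP(\Pi)}\{\innerprod{\sum_{s=1}^{t-1} \hat y_s}{q} + \psi_t(q)\}$, the standard analysis of FTRL with time-varying regularizers \citep[see \textit{e.g.,}][Exercise~28.12]{lattimore2020book} — the same one underlying~\eqref{eq:lemFTRL} — applied with the admissible comparator $e_{a^*} \in \calP(\Pi)$ bounds $\sumT \innerprod{\qt - e_{a^*}}{\hat y_t}$ by $\sumT(\psi_t(q_{t+1}) - \psi_{t+1}(q_{t+1})) + \psi_{T+1}(e_{a^*}) - \psi_1(q_1) + \sumT(\innerprod{\hat y_t}{\qt - q_{t+1}} - D_t(q_{t+1},\qt))$. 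Substituting this into the bound obtained in Steps 1--2 gives exactly the claimed inequality.

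The only non-mechanical point is Step 2: the estimator $\hat y_t$ is unbiased only for loss \emph{differences} between Pareto optimal actions, not for the losses themselves, so the argument must exploit that $\qt - e_{a^*}$ lies in the zero-sum subspace supported on $\Pi$, which in turn forces one to ensure $a^* \in \Pi$ (used again in Step 3 so that $e_{a^*}$ is a legitimate FTRL comparator). The remaining ingredients — the measurability/conditional-independence bookkeeping in Steps 1 and 2, and the verbatim time-varying-regularizer FTRL inequality in Step 3 — are routine.
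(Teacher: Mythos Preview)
Your proposal is correct and follows essentially the same route as the paper: split $\pt - e_{a^*}$ via the mixture~\eqref{eq:p_global} to extract the $\gamma_t$ term, use Lemma~\ref{lem:Gdiff_Ldiff} together with $\qt - e_{a^*}$ being zero-sum and supported on $\Pi$ to pass from $\lossmat e_{x_t}$ to $\hat y_t$, and then invoke the standard FTRL inequality~\eqref{eq:lemFTRL}. The only cosmetic difference is that the paper justifies $a^*\in\Pi$ by a tie-breaking argument rather than the uniqueness assumption, but both are valid here.
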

This lemma can be proven based on the fact that we can estimate loss differences between Pareto optimal actions, and boundedness of $\calL$, combined with the standard analysis of FTRL given in~\eqref{eq:lemFTRL}.
Note that the first, second, and last terms correspond to the transformation, stability, and penalty terms, respectively.
We can bound the stability term on the RHS of Lemma~\ref{lem:FTRL} as follows.
\begin{lemma}\label{lem:stability_bound_global}
%\label{lem:decompose_global}
%If $\psi_t$ is given by \eqref{eq:defpsiglobal}, then we have
If $\psi_t$ is given by~\eqref{eq:defpsilocal} and $b_t$ is defined by~\eqref{eq:defatbtglobal}, then we have
%\begin{align}\label{eq:decomposeglobal}
%  R_T
%  =
%  O \left(
%  \sum_{t=1}^T
%  \left(
%    \gamma_t
%    +
%    \frac{c_{\calG}^2 b_t}{\beta_t \gamma_t}
%    +
%    (\beta_{t+1} - \beta_t) a_{t+1}
%  \right)
%  +
%  \beta_1 a_1
%  \right)
%  \per
%\end{align}
\begin{align}\label{eq:stability_bound_global} % \label{eq:decomposeglobal}
  \Expect{
    \linner \hat{y}_t, \qt - q_{t+1} \rinner 
    -
    D_t (q_{t+1} , \qt)
  }
  \leq 
  \Expect{
%    \frac{2 \cG^2 b_t}{\beta_t \gamma_t}
    {2 \cG^2 b_t}/{(\beta_t \gamma_t)}
  }
  \per
\end{align}
%with $\{ a_t \}$ and $\{ b_t \}$ defined by \eqref{eq:defatbtglobal}.
\end{lemma}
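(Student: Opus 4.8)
The plan is to bound the stability term via a second-order Taylor expansion of the Shannon-entropy regularizer, exploiting the closed-form exponential weights $\qt$ and the fact that the estimator $\hat y_t = G(A_t,\sigma_t)/\ptAt$ only affects coordinates in $\Pi$. First I would recall the standard inequality for FTRL with a convex regularizer: whenever $\hat y_t$ and $\qt$ are such that the dual step stays in a regime where the local Hessian controls the Bregman divergence, one has
\begin{align}
  \innerprod{\hat y_t}{\qt - q_{t+1}} - D_t(q_{t+1},\qt)
  \le
  \tfrac12 \nrm{\hat y_t}_{\nabla^{-2}\psi_t(\qt)}^2
  \n
\end{align}
possibly up to a constant factor, provided $\eta_t |\hat y_{ta}|$ is bounded for the relevant coordinates; here $\nabla^2\psi_t(q) = \tfrac1{\eta_t}\mathrm{diag}(1/q_a)$ on the support, so the dual local norm is $\eta_t \sum_{a\in\Pi} q_{ta}\hat y_{ta}^2$. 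This is the route used in the exploration-by-optimization analysis, so I would invoke the analogue of that bound rather than reprove it.

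Next I would take the conditional expectation over $A_t \sim \pt$. Since $\hat y_{ta} = G(A_t,\sigma_t)_a/\ptAt$, we get $\E_{t-1}[q_{ta}\hat y_{ta}^2] = \sum_{c\in[k]} p_{tc} \, q_{ta} G(c,\fbmat_{cx_t})_a^2 / p_{tc}^2 = \sum_{c} q_{ta} G(c,\fbmat_{cx_t})_a^2/p_{tc}$. Summing over $a\in\Pi$ and using $\|G\|_\infty \le c_{\calG}/k$ (from the definition $c_{\calG}=\max\{1,k\|G\|_\infty\}$) gives a bound of the form $\sum_{a\in\Pi} q_{ta}\sum_c G(c,\fbmat_{cx_t})_a^2/p_{tc} \le (c_{\calG}/k)^2 \sum_{a\in\Pi} q_{ta} \sum_c 1/p_{tc}$. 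The key structural step is then the mixing in~\eqref{eq:p_global}: $p_{tc} = (1-\gamma_t)q_{tc} + \gamma_t/k \ge \gamma_t/k$ for every $c$, hence $\sum_c 1/p_{tc} \le k^2/\gamma_t$, and so the dual norm term is at most $\eta_t \cdot (c_{\calG}/k)^2 \cdot k^2/\gamma_t \cdot \sum_{a\in\Pi} q_{ta}$. Using $\eta_t = 1/\beta_t$ and $\sum_{a\in\Pi} q_{ta} = 1$ yields $c_{\calG}^2/(\beta_t\gamma_t)$; I will need to carry the multiplicative constant so that it lands at $2c_{\calG}^2 b_t/(\beta_t\gamma_t)$, which is where the extra factor $b_t = 1-\max_{a\in\Pi}q_{ta}$ must be extracted.

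The main obstacle, and the step deserving the most care, is sharpening the crude $\sum_{a\in\Pi} q_{ta} = 1$ to the factor $b_t$. The point is that $\hat y_t$ enters the FTRL update only through \emph{loss differences} among Pareto-optimal actions (Lemma~\ref{lem:Gdiff_Ldiff}), so one is free to shift $\hat y_t$ by any constant vector without changing $q_{t+1}$; choosing the shift to be the coordinate $\hat y_{t,a^{\dagger}}$ for $a^{\dagger} = \argmax_{a\in\Pi} q_{ta}$, the stability term is governed by $\sum_{a\in\Pi} q_{ta}(\hat y_{ta} - \hat y_{t,a^{\dagger}})^2$, and since the $a^{\dagger}$ term vanishes this sum is effectively over $\Pi\setminus\{a^{\dagger}\}$, whose total $q$-mass is exactly $b_t$. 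Combining this localization with the Taylor bound, the per-coordinate estimate $\|G\|_\infty\le c_{\calG}/k$, the mixing lower bound $p_{tc}\ge\gamma_t/k$, and $\eta_t=1/\beta_t$, and tracking constants (the need to verify $\eta_t|\hat y_{ta}|$ is small enough for the quadratic bound to apply, which follows from $\beta_t\ge 2c_{\calG}$ and the mixing), produces exactly the claimed $2c_{\calG}^2 b_t/(\beta_t\gamma_t)$ after taking the outer expectation. I would then remark that the same localization trick is what forces the $b_t$-dependence to propagate into the learning-rate design in~\eqref{eq:def_beta_gamma_global}.
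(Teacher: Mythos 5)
Your proposal is correct and follows essentially the same route as the paper's proof: the local-norm/Taylor bound you invoke is exactly the paper's Lemma~\ref{lem:boundBregLocal} combined with $\xi(x) \le x^2$ (justified, as you note, by checking $|\hat{y}_{ta} - \hat{y}_{ta'}|/\beta_t \le 1$ via $\beta_t \ge 2\cG$ and $\pta \ge \gamma_t/k$). Your key step of shifting $\hat{y}_t$ by its coordinate at the most probable Pareto action so that the stability term is controlled by $\sum_{a}\qta(\hat{y}_{ta}-\hat{y}_{ta'})^2$ with remaining mass $b_t = 1-\max_{a\in\Pi}\qta$, followed by the conditional expectation over $A_t \sim \pt$ using $\pta \ge \gamma_t/k$ and $\nrm{G}_\infty \le \cG/k$, is precisely the paper's argument.
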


\begin{remark}
{\rm 
Globally observable PM is a generalization of the weakly observable setting in online learning with feedback graphs~\citep{alon2015online}.
%In the purpose of making the LHS of~\eqref{eq:stability_bound_global} easy to bound and derive a BOBW guarantee, the regularizer in the form of $-H(p) -H(\onemat-p)$ unlike~\eqref{eq:q_for_entropy} is introduced~\citep{ito2022nearly}.
For this online learning problem, the regularizer in the form of $-H(p)-H(\onemat-p)$ rather than~\eqref{eq:q_for_entropy} is introduced in~\citet{ito2022nearly} to make the LHS of~\eqref{eq:stability_bound_global} easy to bound.
% adding the regularizer of the Shannon entropy of $1 - p_a$ leads to a BOBW guarantee~\citep{ito2022nearly}.
However, FTRL with this regularizer requires solving a convex optimization every round.
%This study shows that the stability term can be favorably bounded without the regularization for $1 - p_a$.
This study shows that the LHS of~\eqref{eq:stability_bound_global} can be favorably bounded without the regularization of $-H(\onemat - p)$.
The key to the proof of this lemma is that for any $a' \in [k]$ it holds that
$
  \linner \hat{y}_t, \qt - q_{t+1} \rinner 
  -
  D_t (q_{t+1} , \qt)
  =
  \linner \hat{y}_t - \hat{y}_{t a'} \onemat, \qt - q_{t+1} \rinner 
  -
  D_t (q_{t+1} , \qt)
  \leq
  \beta_t
  \sumak 
  \qta 
  \xi \left(
%    \frac
    ({\hat{y}_{ta} - \hat{y}_{ta'}})/{\beta_t}
  \right)
  ,
%  \com 
$
which enables us to bound the stability term with $b_t$ in~\eqref{eq:defatbtglobal},
leading to the regret upper bound depending on $Q(a^*)$ in Proposition~\ref{prop:global}.
}
\end{remark}
Using the definition of $\beta_t$ and $\gamma_t$ in~\eqref{eq:def_beta_gamma_global} with Lemmas~\ref{lem:FTRL} and~\ref{lem:stability_bound_global}, we can bound the regret as follows.
\begin{proposition}
  \label{prop:global}
  Assume $\beta_t$ and $\gamma_t$ are given by \eqref{eq:def_beta_gamma_global}.
  Then, the regret is bounded as
%  \begin{align}
$
  R_T
  =
  O
  \prn[\Big]{
    \E \Big[
    c_1 
    B_T^{2/3}
    +
    \tilde{c}
    \sqrt{
      c_1^2
      +
      \left( \log \kpi + A_T \right)
      \left( c_1 + B_T^{1/3} \right)
    }
    \Big]
    + \beta_1 \log \kpi
  }
  \com
$
%  \end{align}
  where 
  $A_T = \sum_{t=1}^T a_t$,
  $B_T = \sum_{t=1}^T b_t$,
  and 
%  \begin{align}
$
  \tilde{c} 
  = 
  O \left( 
  \frac{1}{\sqrt{c_1}} \left( \frac{\cG^2 \log T}{c_2} + {c_2} \right) 
  \right)
  = 
  O \prn[\Big]{
  \frac{c_1}{\sqrt{\log(\kpi T)}}
  }
$.
%    \per
%  \end{align}
\end{proposition}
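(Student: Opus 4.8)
The plan is to bound $R_T$ through the three-term decomposition of Lemma~\ref{lem:FTRL}: the transformation term $\sum_t\gamma_t$, the stability term $\sum_t(\langle\hat{y}_t,q_t-q_{t+1}\rangle-D_t(q_{t+1},q_t))$, and the penalty term $\sum_t(\psi_t(q_{t+1})-\psi_{t+1}(q_{t+1}))+\psi_{T+1}(e_{a^*})-\psi_1(q_1)$, bounding each and then summing. The single algebraic fact that makes the whole computation tractable is that, from the definition of $\gamma'_t$ in~\eqref{eq:def_beta_gamma_global}, $b_t/\gamma'_t = 4(c_1+(\sum_{s\le t}b_s)^{1/3})/c_1$ identically; since $a_t$ and $b_t$ enter the $\beta_t$-recursion, the stability bound of Lemma~\ref{lem:stability_bound_global}, and $\gamma_t$ only through $b_t/\gamma'_t$, every such occurrence collapses to a slowly growing quantity bounded by $4(c_1+B_T^{1/3})/c_1$. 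Besides this I would repeatedly use that $\beta_t$ is nondecreasing with $\beta_t\ge\beta_1=\max\{c_2,2\cG\}$, that $\psi_t=-\beta_t H(\cdot)$ (so $\psi_t$ evaluated at $e_{a^*}$ vanishes), $c_1\ge\max\{1,\log\kpi\}$, and the standard integral-comparison/telescoping inequalities $\sum_t b_t/(\sum_{s\le t}b_s)^{1/3}\le\tfrac32 B_T^{2/3}$ and $\sum_t u_t/\sqrt{c_1+\sum_{s<t}u_s}=O(\sqrt{c_1+\sum_t u_t})$ (up to a lower-order additive term).

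\textbf{Penalty.} Since $\psi_t=-\beta_t H(\cdot)$, one has $\psi_t(q_{t+1})-\psi_{t+1}(q_{t+1})=(\beta_{t+1}-\beta_t)a_{t+1}$, $\psi_{T+1}(e_{a^*})=0$, and $-\psi_1(q_1)\le\beta_1\log\kpi$, so the penalty equals $\sum_t(\beta_{t+1}-\beta_t)a_{t+1}+\beta_1\log\kpi$. Substituting the increment from~\eqref{eq:def_beta_gamma_global} and writing $u_t=b_t a_{t+1}/\gamma'_t$, this becomes $c_2\sum_t u_t/\sqrt{c_1+\sum_{s<t}u_s}+\beta_1\log\kpi=O(c_2\sqrt{c_1+\sum_t u_t})+\beta_1\log\kpi$. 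Using $u_t=4a_{t+1}(c_1+(\sum_{s\le t}b_s)^{1/3})/c_1$ together with $\sum_t a_{t+1}\le A_T+\log\kpi$ gives $\sum_t u_t\le 4(c_1+B_T^{1/3})(A_T+\log\kpi)/c_1$, hence a penalty bound of $O((c_2/\sqrt{c_1})\sqrt{c_1^2+(\log\kpi+A_T)(c_1+B_T^{1/3})})+\beta_1\log\kpi$, which contributes the $c_2/\sqrt{c_1}$ half of $\tilde c$.

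\textbf{Transformation and stability.} For the transformation term I split $\gamma_t=\gamma'_t+\cG/(2\beta_t)$; the first piece sums to $\sum_t\gamma'_t\le(c_1/4)\sum_t b_t/(\sum_{s\le t}b_s)^{1/3}=O(c_1 B_T^{2/3})$ by integral comparison, and the residual $\sum_t\cG/(2\beta_t)$ is dominated by the stability bound below and absorbed. For the stability term, Lemma~\ref{lem:stability_bound_global} and $\gamma_t\ge\gamma'_t$ give $\E[\sum_t 2\cG^2 b_t/(\beta_t\gamma_t)]\le\E[(8\cG^2/c_1)\sum_t(c_1+(\sum_{s\le t}b_s)^{1/3})/\beta_t]$. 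The crucial point is that the $\beta_t$-recursion was designed precisely so that $c_1+(\sum_{s\le t}b_s)^{1/3}=(c_1\sqrt{c_1+\sum_{s<t}u_s}/(4c_2))(\beta_{t+1}-\beta_t)$; bounding $\sum_{s<t}u_s\le\sum_t u_t$ and telescoping $\sum_t(\beta_{t+1}-\beta_t)/\beta_t=O(\log(\beta_{T+1}/\beta_1))=O(\log T)$ then yields a stability bound $O((\cG^2\log T/(c_2\sqrt{c_1}))\sqrt{c_1^2+(\log\kpi+A_T)(c_1+B_T^{1/3})})$, the $\cG^2\log T/c_2$ half of $\tilde c$. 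Adding the three contributions and recalling $\beta_1=\max\{c_2,2\cG\}$ gives the claimed bound with $\tilde c=(1/\sqrt{c_1})(\cG^2\log T/c_2+c_2)$, and the simplification $\tilde c=O(c_1/\sqrt{\log(\kpi T)})$ follows by substituting the prescribed orders of $c_1$ and $c_2$.

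\textbf{Main obstacle.} The delicate step is the stability term: showing $\sum_t(c_1+(\sum_{s\le t}b_s)^{1/3})/\beta_t=O(\mathrm{polylog}(T)\cdot\sqrt{c_1+\sum_t u_t})$. This forces one to use the exact shape of the $\beta_t$-recursion, not merely the monotonicity of $\beta_t$, to trade the factor $\beta_{t+1}-\beta_t$ against $\beta_t$, and then to control both $\beta_{T+1}$ and the ratio of consecutive increments so that $\sum_t(\beta_{t+1}-\beta_t)/\beta_t=O(\log(\beta_{T+1}/\beta_1))=O(\log T)$; because $a_t=H(q_t)$ and $b_t=1-\max_a q_{t,a}$ are chosen adaptively and at random, one must exploit their coupling ($a_t$ small forces $b_t$ small and conversely) to keep the increments from blowing up. A secondary nuisance is the degenerate case $b_t=0$ (where $\gamma'_t=0$), which is handled by interpreting $b_t/\gamma'_t$ everywhere via its closed form $4(c_1+(\sum_{s\le t}b_s)^{1/3})/c_1$, keeping every expression well defined; and the lower-order additive term from the telescoping lemma for the penalty must be checked to be absorbed into the $c_1 B_T^{2/3}$ term, which it is for $T$ large under the prescribed parameters.
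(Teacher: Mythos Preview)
Your proposal is correct and follows the paper's approach closely: the same three-term split from Lemma~\ref{lem:FTRL}, the same penalty computation via $u_t=z_t=a_{t+1}b_t/\gamma'_t$ and telescoping, the same $\sum_t\gamma'_t=O(c_1B_T^{2/3})$ by integral comparison, and the same final substitution $Z_T\le (4/c_1)(\log\kpi+A_T)(c_1+B_T^{1/3})$.

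The only genuine difference is how you extract the $\log T$ factor in the stability term. The paper introduces $w_t=b_t/\gamma'_t$, lower-bounds $\beta_t\ge(c_2/\sqrt{c_1+Z_t})(1+W_{t-1})$ by summing increments, and then uses $\sum_t w_t/(1+W_{t-1})=O(\log(1+W_T))=O(\log T)$ (via $w_{t+1}\le 2w_t$). You instead invert the $\beta$-recursion to write $c_1+B_t^{1/3}=(c_1\sqrt{c_1+Z_{t-1}}/(4c_2))(\beta_{t+1}-\beta_t)$ and telescope $\sum_t(\beta_{t+1}-\beta_t)/\beta_t=O(\log(\beta_{T+1}/\beta_1))$. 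Both are valid; yours is a bit more direct but requires two extra checks you only sketch: that $(\beta_{t+1}-\beta_t)/\beta_t=O(1)$ (this needs $B_t\le b_1+Z_{t-1}/4$, which follows from $z_s\ge4b_{s+1}$, i.e.\ $a_s\ge b_s$), and that $\beta_{T+1}/\beta_1=\mathrm{poly}(T)$ (from $w_t\le4(1+t^{1/3})$). One small imprecision: the coupling you invoke is only one-directional ($b_t\le a_t$); the converse is neither true nor needed.
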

%This lemma is proven by a similar argument as for Proposition 2 of~\citet{ito2022nearly}.
The proof of this lemma is similar to Proposition 2 of~\citet{ito2022nearly}. 
%We are ready to prove Theorem~\ref{thm:globally_obs}.
%Here, we sketch the proof and provides the complete proof in Appendix~\ref{subsec:proof_of_theorem_globally}.
Now we are ready to prove Theorem~\ref{thm:globally_obs}, whose proof is sketched below and completed in Appendix~\ref{subsec:proof_of_theorem_globally}.
%\begin{proofof}{Theorem~\ref{thm:globally_obs}}
\begin{proofsketchof}{Theorem~\ref{thm:globally_obs}}
We first consider the adversarial regime.
%By Proposition~\ref{prop:global} with $A_T \leq T \log \kpi$ and $B_T \leq T$, we have
In the adversarial regime, Proposition~\ref{prop:global} with $A_T \leq T \log \kpi$ and $B_T \leq T$ immediately leads to 
%$
\begin{align}
  \!
  R_T 
%  &
  \!
  =
  O\prn[\Big]{
    c_1 T^{2/3} \!
    + 
    \tilde{c} \sqrt{c_1^2 + (\log \kpi + T \log \kpi) (c_1 + T^{1/3})}
%    +
%    \beta_1 \log \kpi
  }
%  \nn
%  &
  \!
  =
  O\prn[\big]{ 
    \prn[\big]{c_1 + \tilde{c} \sqrt{\log \kpi}}
    T^{2/3}
%    +
%    \sqrt{\frac{\log \kpi}{\log(\kpi T)}} c_1^{3/2} T^{1/2}
%    +
%    \frac{c_1^2}{\sqrt{\log(\kpi T)}}
%    +
%    \beta_1 \log \kpi
  }
  \per
  \label{eq:regret_bound_global_adv_1_abbr}
\end{align}
%$

We next consider the adversarial regime with a self-bounding constraint.
%We can confirm that when ${Q}(a^*) \le c_1^3$ the obtained bound is smaller than the desired bound as follows.
%When $Q(a^*) \le \e$, using Lemma~\ref{lem:entropy_bound} and~\eqref{eq:BT_bound}, we have $A_T \le 2\e\log(\kpi T)$ and $B_T \le 2\e$.
%\todo{fix the analysis around here, since Lemma 8 in the previous manuscript is no more used.}
%Hence, from Proposition~\ref{prop:global}, we have
%\begin{align}
%  R_T
%  =
%  O \prx{
%    c_1 + \tilde{c} \sqrt{c_1^2 + \log(\kpi T) c_1}
%    +
%    \beta_1 \log \kpi
%  }
%  =
%  O \prx{
%    \frac{c_1^2}{\sqrt{\log(\kpi T)}}
%    +
%    \beta_1 \log \kpi
%  }
%  =
%  O \prx{
%    c_1^3
%  }
%  \per
%\end{align}
%When $\e < Q(a^*) \le c_1^3$, % using Lemma~\ref{lem:boundATBT} 
%using Lemma~\ref{lem:entropy_bound} and~\eqref{eq:BT_bound}
%we have $A_T \le 2 c_1^3 \log(\kpi T)$ and $B_T \le 2 c_1^3$.
%Hence, from Proposition~\ref{prop:global}, we have
%\begin{align}
%  R_T
%  &=
%  O \prx{
%    c_1^3 + \tilde{c} \sqrt{c_1^2 + \prx{\log\kpi + c_1^3  \log(\kpi T)} c_1}
%    +
%    \beta_1 \log \kpi
%  }
%  \nn  
%  &=
%  O \prx{
%    {\cG^2 \log(T) \log(\kpi T)}
%  }
%  =
%  O \prx{
%    c_1^3
%  }
%  \per
%\end{align}
Here, we only consider the case of $Q(a^*) > \max\{\e, c_1^3\}$, since otherwise we can easily obtain the desired bound.
Note that $A_T \leq Q(a^*) \log(\kpi T)$ by Lemma~\ref{lem:entropy_bound} with $Q(a^*) \geq \e$ and
$
  B_T
  =
  \sumT \left(1 - \max_{a\in\Pi} \qta\right)
  \leq 
  \sum_{t=1}^T
  \left(
    1 - q_{t,a^*}
  \right)
  =
  Q(a^*)
$.
Then, Proposition~\ref{prop:global} with these inequalities and $Q(a^*) > c_1^3$ gives
\begin{align}
%$
  R_T
%  &
%  =
%  O \prn[\bigg]{
%    \E \left[
%    c_1 {Q}(a^*)^{2/3}
%    +
%    \tilde{c} 
%    \sqrt{
%    c_1^2 
%    + 
%    \prn[\big]{\log \kpi + {Q}(a^*)\log(\kpi T)}
%    \left(c_1 + {Q}(a^*)^{1/3} \right) }
%    \right]
%    +
%    \beta_1 \log \kpi
%  }
%  \nn
%  &
  \leq 
  O \prn[\Big]{
    \E \brk[\Big]{
    c_1 {Q}(a^*)^{2/3}
    +
    \tilde{c} 
    \sqrt{ \log(\kpi T) {Q}(a^*)^{4/3} }
    }
  }
  \leq
  O \prn[\big]{
    \prn[\big]{
    c_1 
    +
    \tilde{c} 
    \sqrt{\log(\kpi T)}
    }
    \bar{Q}(a^*)^{2/3}
  }
  \per 
%  \n 
%  \com
%$
  \label{eq:regret_bound_global_ARSBC_1_abbr}
\end{align}
%where we used $Q(a^*) > c_1^3$ and Jensen's inequality.
%where the first inequality follows from $Q(a^*) > c_1^3$,
%and the second inequality follows from Jensen's inequality.

By~\eqref{eq:regret_bound_global_adv_1_abbr} and~\eqref{eq:regret_bound_global_ARSBC_1_abbr},
there exists
$
%\begin{align}
  \hat{c} = O \prn[\big]{ c_1 + \tilde{c}\sqrt{ \log(\kpi T)} }
%  \per
%\end{align}
$ satisfying
$R_T \leq \hat{c} \, T^{2/3}$ for the adversarial regime
and $R_T \leq \hat{c} \, \bar{Q}(a^*)^{2/3}$ for the adversarial regime with a self-bounding constraint.
Recalling the definitions of $c_1$ and $c_2$, we have
$
%\begin{align}
  \hat{c} 
%  &= 
%  O\prx{
%  \prn[\big]{\cG^2 \log(T) \log(\kpi T)}^{1/3}
%  +
%  \frac{1}{\sqrt{c_1}} \left( \frac{\cG^2 \log T}{c_2} + {c_2} \right) \sqrt{\log(\kpi T)}
%  }
%  \nn
%  &
  =
  O \prx{
  	(\cG^2 \log(T) \log(\kpi T))^{1/3}
  }
%  \com
%  \label{eq:final_c_hat}
%\end{align}
$,
which gives the desired bounds for the adversarial regime.
%For the stochastic regime, 
For the adversarial regime with a self-bounding constraint, 
using $R_T \leq \hat{c} \,\bar{Q}(a^*)^{2/3}$ and Lemma~\ref{lem:selfQ} with $c = 1/2$ for any $\lambda \in (0, 1]$ it holds that
%$
\begin{align}
  R_T
  &
  =
  (1 + \lambda) R_T - \lambda R_T
%  \nn
%  &
  \le
  (1 + \lambda) \hat{c} \cdot \bar{Q}(a^*)^{2/3}
  -
  \lambda \Deltamin \bar{Q}(a^*) / 2
  +
  \lambda C
  \per 
%  \nn 
%  =
%  \left( \frac{2^3 \hat{c}^3}{\Deltamin^2 } \right)^{1/3}
%  \left( \Deltamin \bar{Q}(a^*) \right)^{2/3}
%  -
%  \frac12 \Deltamin \bar{Q}(a^*)
%  \nn
%  &
%  \leq 
%  O \left(
%  {(1 + \lambda)^3 \hat{c}^3}/{ (\lambda^2 \Deltamin^2) }
%  \right)
%  + \lambda C
%%  \nn 
%%  &
%  \leq 
%  O \left(
%    \left(1 + {1}/{\lambda^2} \right) {\hat{c}^3}/{\Deltamin^2}
%  \right) 
%  + \lambda C
%  \com
%  \n 
\end{align}
Taking the worst case of this with respect to $\bar{Q}(a^*)$ and taking $\lambda \in (0,1]$ appropriately gives the desired bound for the adversarial regime with a self-bounding constraint.
%$
%where the second inequality follows from the inequality
%%$a x^{2/3} - (x/2) \le 16 a^3/27$ for $a > 0$.
%$a x^{2/3} - b(x/2) \leq 16a^3/(27b^2)$ for $a, b > 0$,
%and the last equality follows since $\lambda \in (0,1]$.
%Using the value of $\hat{c}$ and then taking $\lambda$ appropriately, we obtain the desired bound for the adversarial regime with a self-bounding constraint.
%\end{proofof}
\end{proofsketchof}
%\end{proof}

% Acknowledgments---Will not appear in anonymized version
%\acks{We thank ....}

\bibliography{ref.bib}

\begin{thebibliography}{36}
\providecommand{\natexlab}[1]{#1}
\providecommand{\url}[1]{\texttt{#1}}
\expandafter\ifx\csname urlstyle\endcsname\relax
  \providecommand{\doi}[1]{doi: #1}\else
  \providecommand{\doi}{doi: \begingroup \urlstyle{rm}\Url}\fi

\bibitem[Alon et~al.(2015)Alon, Cesa-Bianchi, Dekel, and Koren]{alon2015online}
Noga Alon, Nicolò Cesa-Bianchi, Ofer Dekel, and Tomer Koren.
\newblock Online learning with feedback graphs: Beyond bandits.
\newblock In \emph{Proceedings of The 28th Conference on Learning Theory},
  volume~40 of \emph{Proceedings of Machine Learning Research}, pages 23--35.
  PMLR, 2015.

\bibitem[Auer et~al.(2002)Auer, Cesa-Bianchi, Freund, and
  Schapire]{auer2002nonstochastic}
Peter Auer, Nicolo Cesa-Bianchi, Yoav Freund, and Robert~E Schapire.
\newblock The nonstochastic multiarmed bandit problem.
\newblock \emph{SIAM Journal on Computing}, 32\penalty0 (1):\penalty0 48--77,
  2002.

\bibitem[Bart{\'o}k(2013)]{bartok13near}
G{\'a}bor Bart{\'o}k.
\newblock A near-optimal algorithm for finite partial-monitoring games against
  adversarial opponents.
\newblock In \emph{Proceedings of the 26th Annual Conference on Learning
  Theory}, volume~30 of \emph{Proceedings of Machine Learning Research}, pages
  696--710. PMLR, 2013.

\bibitem[Bart{\'{o}}k et~al.(2012)Bart{\'{o}}k, Zolghadr, and
  Szepesv{\'{a}}ri]{Bartok12CBP}
G{\'{a}}bor Bart{\'{o}}k, Navid Zolghadr, and Csaba Szepesv{\'{a}}ri.
\newblock An adaptive algorithm for finite stochastic partial monitoring.
\newblock In \emph{the 29th International Conference on Machine Learning},
  pages 1--20, 2012.

\bibitem[Bartók et~al.(2011)Bartók, Pál, and Szepesvári]{Bartok11minimax}
Gábor Bartók, Dávid Pál, and Csaba Szepesvári.
\newblock Minimax regret of finite partial-monitoring games in stochastic
  environments.
\newblock In \emph{the 24th Annual Conference on Learning Theory}, volume~19,
  pages 133--154, 2011.

\bibitem[Bubeck and Slivkins(2012)]{bubeck2012best}
S{\'e}bastien Bubeck and Aleksandrs Slivkins.
\newblock The best of both worlds: Stochastic and adversarial bandits.
\newblock In \emph{Proceedings of the 25th Annual Conference on Learning
  Theory}, volume~23 of \emph{Proceedings of Machine Learning Research}, pages
  42.1--42.23. PMLR, 2012.

\bibitem[Cesa{-}Bianchi et~al.(2006)Cesa{-}Bianchi, Lugosi, and
  Stoltz]{CesaBianchi06regret}
Nicol{\`{o}} Cesa{-}Bianchi, G{\'{a}}bor Lugosi, and Gilles Stoltz.
\newblock Regret minimization under partial monitoring.
\newblock \emph{Mathematics of Operations Research}, 31\penalty0 (3):\penalty0
  562--580, 2006.

\bibitem[Erez and Koren(2021)]{erez2021best}
Liad Erez and Tomer Koren.
\newblock Towards best-of-all-worlds online learning with feedback graphs.
\newblock In \emph{Advances in Neural Information Processing Systems},
  volume~34, pages 28511--28521. Curran Associates, Inc., 2021.

\bibitem[Foster and Rakhlin(2012)]{foster12no}
Dean Foster and Alexander Rakhlin.
\newblock No internal regret via neighborhood watch.
\newblock In \emph{Proceedings of the Fifteenth International Conference on
  Artificial Intelligence and Statistics}, volume~22 of \emph{Proceedings of
  Machine Learning Research}, pages 382--390. PMLR, 2012.

\bibitem[Gaillard et~al.(2014)Gaillard, Stoltz, and van
  Erven]{gaillard2014second}
Pierre Gaillard, Gilles Stoltz, and Tim van Erven.
\newblock A second-order bound with excess losses.
\newblock In \emph{Proceedings of The 27th Conference on Learning Theory},
  volume~35 of \emph{Proceedings of Machine Learning Research}, pages 176--196.
  PMLR, 2014.

\bibitem[Honda and Takemura(2011)]{Honda11dmed}
Junya Honda and Akimichi Takemura.
\newblock An asymptotically optimal policy for finite support models in the
  multiarmed bandit problem.
\newblock \emph{Machine Learning}, 85\penalty0 (3):\penalty0 361--391, 2011.
\newblock ISSN 1573-0565.
\newblock \doi{10.1007/s10994-011-5257-4}.

\bibitem[Huang et~al.(2022)Huang, Dai, and Huang]{huang22adaptive}
Jiatai Huang, Yan Dai, and Longbo Huang.
\newblock Adaptive best-of-both-worlds algorithm for heavy-tailed multi-armed
  bandits.
\newblock In \emph{Proceedings of the 39th International Conference on Machine
  Learning}, volume 162 of \emph{Proceedings of Machine Learning Research},
  pages 9173--9200. PMLR, 2022.

\bibitem[Ito(2021)]{ito2021hybrid}
Shinji Ito.
\newblock Hybrid regret bounds for combinatorial semi-bandits and adversarial
  linear bandits.
\newblock In \emph{Advances in Neural Information Processing Systems},
  volume~34, pages 2654--2667. Curran Associates, Inc., 2021.

\bibitem[Ito et~al.(2022{\natexlab{a}})Ito, Tsuchiya, and Honda]{ito2022nearly}
Shinji Ito, Taira Tsuchiya, and Junya Honda.
\newblock Nearly optimal best-of-both-worlds algorithms for online learning
  with feedback graphs.
\newblock \emph{arXiv preprint arXiv:2206.00873}, 2022{\natexlab{a}}.

\bibitem[Ito et~al.(2022{\natexlab{b}})Ito, Tsuchiya, and
  Honda]{ito22adversarially}
Shinji Ito, Taira Tsuchiya, and Junya Honda.
\newblock Adversarially robust multi-armed bandit algorithm with
  variance-dependent regret bounds.
\newblock In \emph{Proceedings of Thirty Fifth Conference on Learning Theory},
  volume 178 of \emph{Proceedings of Machine Learning Research}, pages
  1421--1422. PMLR, 2022{\natexlab{b}}.

\bibitem[Jin and Luo(2020)]{jin2020simultaneously}
Tiancheng Jin and Haipeng Luo.
\newblock Simultaneously learning stochastic and adversarial episodic {MDP}s
  with known transition.
\newblock In \emph{Advances in Neural Information Processing Systems},
  volume~33, pages 16557--16566. Curran Associates, Inc., 2020.

\bibitem[Komiyama et~al.(2015)Komiyama, Honda, and Nakagawa]{Komiyama15PMDEMD}
Junpei Komiyama, Junya Honda, and Hiroshi Nakagawa.
\newblock Regret lower bound and optimal algorithm in finite stochastic partial
  monitoring.
\newblock In \emph{Advances in Neural Information Processing Systems 28}, pages
  1792--1800. Curran Associates, Inc., 2015.

\bibitem[Kong et~al.(2022)Kong, Zhou, and Li]{kong22simultaneously}
Fang Kong, Yichi Zhou, and Shuai Li.
\newblock Simultaneously learning stochastic and adversarial bandits with
  general graph feedback.
\newblock In \emph{Proceedings of the 39th International Conference on Machine
  Learning}, volume 162 of \emph{Proceedings of Machine Learning Research},
  pages 11473--11482. PMLR, 2022.

\bibitem[Lattimore and Szepesv{\'a}ri(2019{\natexlab{a}})]{lattimore19cleaning}
Tor Lattimore and Csaba Szepesv{\'a}ri.
\newblock Cleaning up the neighborhood: A full classification for adversarial
  partial monitoring.
\newblock In \emph{Proceedings of the 30th International Conference on
  Algorithmic Learning Theory}, volume~98 of \emph{Proceedings of Machine
  Learning Research}, pages 529--556. PMLR, 2019{\natexlab{a}}.

\bibitem[Lattimore and
  Szepesv{\'a}ri(2019{\natexlab{b}})]{lattimore2019information}
Tor Lattimore and Csaba Szepesv{\'a}ri.
\newblock An information-theoretic approach to minimax regret in partial
  monitoring.
\newblock In \emph{Proceedings of the Thirty-Second Conference on Learning
  Theory}, volume~99 of \emph{Proceedings of Machine Learning Research}, pages
  2111--2139. PMLR, 2019{\natexlab{b}}.

\bibitem[Lattimore and Szepesv{\'a}ri(2020{\natexlab{a}})]{lattimore2020book}
Tor Lattimore and Csaba Szepesv{\'a}ri.
\newblock \emph{Bandit algorithms}.
\newblock Cambridge University Press, 2020{\natexlab{a}}.

\bibitem[Lattimore and
  Szepesv{\'a}ri(2020{\natexlab{b}})]{lattimore20exploration}
Tor Lattimore and Csaba Szepesv{\'a}ri.
\newblock Exploration by optimisation in partial monitoring.
\newblock In \emph{Proceedings of Thirty Third Conference on Learning Theory},
  volume 125 of \emph{Proceedings of Machine Learning Research}, pages
  2488--2515. PMLR, 2020{\natexlab{b}}.

\bibitem[Lee et~al.(2021)Lee, Luo, Wei, Zhang, and Zhang]{lee2021achieving}
Chung-Wei Lee, Haipeng Luo, Chen-Yu Wei, Mengxiao Zhang, and Xiaojin Zhang.
\newblock Achieving near instance-optimality and minimax-optimality in
  stochastic and adversarial linear bandits simultaneously.
\newblock In \emph{Proceedings of the 38th International Conference on Machine
  Learning}, volume 139 of \emph{Proceedings of Machine Learning Research},
  pages 6142--6151. PMLR, 2021.

\bibitem[Luo and Schapire(2015)]{luo2015achieving}
Haipeng Luo and Robert~E. Schapire.
\newblock Achieving all with no parameters: {AdaNormalHedge}.
\newblock In Peter Grünwald, Elad Hazan, and Satyen Kale, editors,
  \emph{Proceedings of The 28th Conference on Learning Theory}, volume~40 of
  \emph{Proceedings of Machine Learning Research}, pages 1286--1304. PMLR,
  2015.

\bibitem[Lykouris et~al.(2018)Lykouris, Mirrokni, and
  Paes~Leme]{lykouris2018stochastic}
Thodoris Lykouris, Vahab Mirrokni, and Renato Paes~Leme.
\newblock Stochastic bandits robust to adversarial corruptions.
\newblock In \emph{Proceedings of the 50th Annual ACM SIGACT Symposium on
  Theory of Computing}, pages 114--122, 2018.

\bibitem[McMahan(2011)]{mcmahan2011follow}
Brendan McMahan.
\newblock Follow-the-regularized-leader and mirror descent: Equivalence
  theorems and {L1} regularization.
\newblock In \emph{Proceedings of the Fourteenth International Conference on
  Artificial Intelligence and Statistics}, volume~15 of \emph{Proceedings of
  Machine Learning Research}, pages 525--533. PMLR, 2011.

\bibitem[Piccolboni and Schindelhauer(2001)]{Piccolboni01FeedExp3}
Antonio Piccolboni and Christian Schindelhauer.
\newblock Discrete prediction games with arbitrary feedback and loss (extended
  abstract).
\newblock In \emph{Computational Learning Theory}, pages 208--223, 2001.

\bibitem[Rouyer et~al.(2022)Rouyer, van~der Hoeven, Cesa-Bianchi, and
  Seldin]{rouyer2022near}
Chlo{\'e} Rouyer, Dirk van~der Hoeven, Nicol{\`o} Cesa-Bianchi, and Yevgeny
  Seldin.
\newblock A near-optimal best-of-both-worlds algorithm for online learning with
  feedback graphs.
\newblock \emph{arXiv preprint arXiv:2206.00557}, 2022.

\bibitem[Rustichini(1999)]{Rustichini99general}
Aldo Rustichini.
\newblock Minimizing regret: The general case.
\newblock \emph{Games and Economic Behavior}, 29\penalty0 (1):\penalty0
  224--243, 1999.

\bibitem[Saha and Gaillard(2022)]{saha22versatile}
Aadirupa Saha and Pierre Gaillard.
\newblock Versatile dueling bandits: Best-of-both world analyses for learning
  from relative preferences.
\newblock In \emph{Proceedings of the 39th International Conference on Machine
  Learning}, volume 162 of \emph{Proceedings of Machine Learning Research},
  pages 19011--19026. PMLR, 2022.

\bibitem[Thompson(1933)]{Thompson1933likelihood}
William~R Thompson.
\newblock On the likelihood that one unknown probability exceeds another in
  view of the evidence of two samples.
\newblock \emph{Biometrika}, 25\penalty0 (3-4):\penalty0 285--294, 12 1933.

\bibitem[Tsuchiya et~al.(2020)Tsuchiya, Honda, and
  Sugiyama]{Tsuchiya20analysis}
Taira Tsuchiya, Junya Honda, and Masashi Sugiyama.
\newblock Analysis and design of {T}hompson sampling for stochastic partial
  monitoring.
\newblock In \emph{Advances in Neural Information Processing Systems},
  volume~33, pages 8861--8871. Curran Associates, Inc., 2020.

\bibitem[Vanchinathan et~al.(2014)Vanchinathan, Bart\'{o}k, and
  Krause]{Vanchinathan14BPM}
Hastagiri~P Vanchinathan, G\'{a}bor Bart\'{o}k, and Andreas Krause.
\newblock Efficient partial monitoring with prior information.
\newblock In \emph{Advances in Neural Information Processing Systems 27}, pages
  1691--1699. Curran Associates, Inc., 2014.

\bibitem[Wei and Luo(2018)]{wei2018more}
Chen-Yu Wei and Haipeng Luo.
\newblock More adaptive algorithms for adversarial bandits.
\newblock In \emph{Proceedings of the 31st Conference On Learning Theory},
  volume~75 of \emph{Proceedings of Machine Learning Research}, pages
  1263--1291. PMLR, 2018.

\bibitem[Zimmert and Seldin(2021)]{zimmert2021tsallis}
Julian Zimmert and Yevgeny Seldin.
\newblock Tsallis-{INF}: An optimal algorithm for stochastic and adversarial
  bandits.
\newblock \emph{Journal of Machine Learning Research}, 22\penalty0
  (28):\penalty0 1--49, 2021.

\bibitem[Zimmert et~al.(2019)Zimmert, Luo, and Wei]{zimmert2019beating}
Julian Zimmert, Haipeng Luo, and Chen-Yu Wei.
\newblock Beating stochastic and adversarial semi-bandits optimally and
  simultaneously.
\newblock In \emph{Proceedings of the 36th International Conference on Machine
  Learning}, volume~97 of \emph{Proceedings of Machine Learning Research},
  pages 7683--7692. PMLR, 2019.

\end{thebibliography}

\appendix

\newpage
\appendix

\section{Intermediate Regimes between Stochastic and Adversarial Regimes}\label{sec:intermediate}
This section details the discussion on intermediate regimes between stochastic and adversarial regimes given in Section~\ref{sec:background}.
This section first defines the \textit{stochastically constrained adversarial regime} in PM, and then shows that the stochastic regime, adversarial regime, stochastically constrained adversarial regime, and stochastic regime with adversarial corruptions are indeed adversarial regimes with a self-bounding constraint defined in Definition~\ref{def:ARSBC}.

The stochastically constrained adversarial regime was initially considered by~\citet{wei2018more} and also discussed in~\cite{zimmert2021tsallis} in the context of the multi-armed bandit problem.
We say that the environment is the stochastically constrained adversarial regime if for any $a \neq a^*$ there exists $\tilde{\Delta}_{a,a^*} > 0$ such that
$\E_{x_t \sim \nu^*}[\calL_{ax_t} - \calL_{a^*x_t} | x_1, \dots, x_{t-1}] \geq \tilde{\Delta}_{a,a^*}$.

Next, we show that the stochastic regime, adversarial regime, stochastically constrained adversarial regime, and stochastic regime with adversarial corruptions are indeed included in the adversarial regime with a self-bounding constraint.
We first consider the stochastic regime.
Indeed, if outcomes $(x_t)_t$ follow a distribution $\nu^*$ independently for $t=1,2,\dots, T$, we have 
$R_T 
= \max_{a^* \in [k]} \E [ \sum_{t=1}^{T} ( \calL_{A_t x_t} - \calL_{a^* x_t} ) ]
= \E [ \sum_{t=1}^{T} \Delta_{A_t} ]
$,
where we define $\Delta \in [0,1]^k$ by $\Delta_a = \E_{x \sim \nu^*}[\calL_{a x} - \calL_{a^* x}]$.
This implies that the stochastic regime is in the adversarial regime with a $(\Delta, 0, T)$ self-bounding constraint.
We next consider the stochastic regime with adversarial corruptions.
In fact, using the definition of the corruption level $C$, we have
\begin{align}
  R_T 
  &=   
  \Expect{\sumT \prx{\lossmat_{\at x_t} - \lossmat_{a^* x_t}}}
  \nn
  &= 
  \Expect{\sumT \prx{\lossmat_{\at x'_t} - \lossmat_{a^* x'_t}}}
  +
  \Expect{\sumT \prx{\lossmat_{\at x_t} - \lossmat_{\at x'_t}}}  
  +
  \Expect{\sumT \prx{\lossmat_{a^* x'_t} - \lossmat_{a^* x_t}}}  
  \nn
  &\geq
  \Expect{\sumT \Delta_{A_t}} - 2C \com 
  \n 
\end{align}
which implies that the stochastic regime with adversarial corruption with corruption levels $C$ is an adversarial regime with a $(\Delta, 2C, T)$ self-bounding constraint.
It is also easy to see that adversarial regimes are the adversarial regime with a $(\Delta, 2T, T)$ self-bounding constraint,
and the stochastically constrained adversarial regime are the adversarial regime with a $(\Delta, 0, T)$ self-bounding constraint by defining $\Delta \in [0,1]^k$ by $\Delta_a = \tilde{\Delta}_{a,a^*}$. % for $a^* = \argmin_{a\in[k]} \tilde{\Delta}_{a,1}$.

\section{Omitted Proofs}\label{sec:proof}
\subsection{Proof of Lemma~\ref{lem:selfQ}}
\begin{proof}
Note that the environment is the adversarial regime with a self-bounding constraint with $\Delta \in [0,1]^k$
such that $\Delta_a \geq \Deltamin$ for all $a \in [k] \setminus \{ a^* \}$.
% Then for any $i^* \in [K]$,
Hence, the regret is then bounded as
\begin{align}
  \nonumber
  R_T
  &
  \geq
  \E \left[
    \sum_{t=1}^T \Delta_{A_t}
  \right]
  -
  C
  =
  \E \left[
    \sum_{t=1}^T \sumak \pta \Delta_a
  \right]
  -
  C
  \\
  &
  \geq
  \E \left[
    \sum_{t=1}^T \sumak c\, \qta \Delta_a
  \right]
  -
  C
  \geq
  c\,\Deltamin \bar{Q}(a^*) 
  -
  C
  \com 
%  \geq
%  \frac{\Deltamin}{2}
%  \bar{Q}
%  -C 
  \n 
\end{align}
where the first inequality follows from Definition~\ref{def:ARSBC},
the equality follows from $A_t \sim p_t$,
the second inequality follows from the definition of $p_t$ given in~\eqref{eq:def_q},
and the last inequality follows from the assumption $\pta \geq c \, \qta$ for all $t \in [T], a \in [k]$ and the definition of $\bar{Q}(a^*)$ given in \eqref{eq:defQ}.
This completes the proof of Lemma~\ref{lem:selfQ}.
\end{proof}

\subsection{Proof of Lemma~\ref{lem:optprime_bound_local}}
Before proving Lemma~\ref{lem:optprime_bound_local}, we review the definition and property of the water transfer operator $W_{\nu}$ introduced by~\citet{lattimore2019information}.
We refer to $\mathscr{T} \subset [k] \times [k]$ representing the edges of a directed tree with vertices $[k]$ as \textit{in-tree} with vertex set $[k]$ and define $\calE = \{(a,b) \in [k] \times [k] : a \mbox{ and } b \mbox{ are neighbors} \}$.
\begin{lemma}[\citealp{lattimore2019information}]\label{lem:wto}
Assume that partial monitoring game $\calG$ is non-degenerate and locally observable and let $\nu \in \calP_d$. 
Then there exists a function $W_\nu : \calP_k \to \calP_k$ such that the following hold for all $q \in \calP_k$:
(a) $\left(W_\nu(q) - q\right)^\top \calL \nu \leq 0$;\
(b) $W_\nu(q)_a \geq q_a/k$ for all $a \in [k]$;\ and
(c) there exists an in-tree $\mathscr{T} \subset \calE$ over $[k]$ such that $W_\nu(q)_a \leq W_\nu(q)_b$ for all $(a, b) \in \mathscr{T}$.
\end{lemma}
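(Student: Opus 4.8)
The plan is to construct $W_\nu$ explicitly as a mass-transport (``water transport'') operator that pushes probability mass along a carefully chosen spanning tree of the neighbourhood graph towards an action that is optimal under $\nu$, and then to read off properties (a)--(c) from the construction. Fix an optimal action $a^* \in \argmin_{a \in [k]} \ell_a^\top \nu$, so that $\nu \in \calC_{a^*}$, and take this to be the root.

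The first and conceptually central step is to build an in-tree $\mathscr{T} \subset \calE$ with root $a^*$ such that every edge is \emph{loss-decreasing towards the root}: for each edge $(a,b) \in \mathscr{T}$ (child $a$, parent $b$) one has $\ell_b^\top \nu \le \ell_a^\top \nu$. To see such a tree exists, I would show that every non-optimal action has a neighbour with strictly smaller loss under $\nu$, i.e. the function $a \mapsto \ell_a^\top \nu$ has no spurious local minima on the neighbourhood graph. For this, fix $u_0 \in \calC_a$ and follow the segment $u(s) = (1-s)u_0 + s\nu$ from $\calC_a$ to $\calC_{a^*}$; by non-degeneracy a generic such segment crosses only shared facets of neighbouring full-dimensional cells, and at a crossing from cell $b$ to the next cell $b'$ (the one closer to $\nu$) a short computation using the crossing identity $(\ell_{b'} - \ell_b)^\top u(s_0) = 0$ together with $(\ell_{b'}-\ell_b)^\top(\nu - u_0) < 0$ yields $(\ell_{b'} - \ell_b)^\top \nu < 0$. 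Hence the cells traversed give a strictly loss-decreasing neighbour path from $a$ to $a^*$, and choosing for each $a \ne a^*$ such a strictly better neighbour as its parent produces an acyclic spanning in-tree $\mathscr{T} \subset \calE$ rooted at $a^*$.

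With $\mathscr{T}$ in hand, I would define $W_\nu(q)$ by a water-filling procedure along the tree: give every action a floor of $q_a/k$, reserving a base mass $\sum_a q_a/k = 1/k$, and distribute the remaining $1 - 1/k$ by transporting mass only \emph{upward} (child to parent) along the edges of $\mathscr{T}$ until the resulting vector $p = W_\nu(q)$ is non-decreasing towards the root, never letting a node fall below its floor. Property (b) then holds by construction, since no node is drained below $q_a/k$. Property (c) holds because the resulting profile is, by design, monotone non-decreasing along every edge of $\mathscr{T}$. Property (a) follows because the net flow on each edge is directed from the higher-loss child to the weakly lower-loss parent, so writing $W_\nu(q) - q$ as a sum of these upward edge-flows gives $(W_\nu(q)-q)^\top \calL \nu = \sum_{(a,b)\in\mathscr{T}} (\text{flow } a \to b)\,(\ell_b - \ell_a)^\top \nu \le 0$.

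The main obstacle I anticipate is twofold. First, establishing the loss-decreasing spanning tree rigorously requires handling the genericity and boundary cases of the segment argument (e.g. segments passing through lower-dimensional faces of the cell decomposition), which is exactly where non-degeneracy of the game enters. Second, and more delicate, is showing that the water-filling can meet all three demands simultaneously: that purely upward transport (needed for (a)) suffices to enforce the monotonicity (c) while respecting the floor (b). I would settle this by exhibiting the transport as an explicit recursion that processes nodes from the leaves to the root, at each node pooling mass in excess of its parent's level upward, and by checking that termination yields a feasible $p \in \calP_k$ with all three properties and the sharp constant $1/k$.
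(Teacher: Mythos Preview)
The paper does not give its own proof of this lemma; it is stated with attribution to \citet{lattimore2019information} and then used as a black box in the proof of Lemma~\ref{lem:optprime_bound_local}. There is therefore no proof in the paper to compare your proposal against.

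For what it is worth, your sketch follows the same two-stage construction as the original source: first build a loss-monotone in-tree rooted at an action optimal under $\nu$ via the cell-crossing segment argument (this is exactly where non-degeneracy is used, as you note), then transport mass from leaves toward the root. Your diagnosis of the two delicate points is accurate. One caution on the second step: the description ``push upward until monotone while never going below the floor $q_a/k$'' is a specification rather than a construction, and it is not immediate that it terminates in a valid probability vector achieving the sharp constant $1/k$ in (b); the original reference gives an explicit one-pass transport rule along the tree for which (b) and (c) fall out by direct inspection, so at that point you would need to be more concrete than a recursive ``pool excess upward'' description.
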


Using this, we prove the generalized version of Proposition 8 of~\citet{lattimore20exploration},
where the proof follows a quite similar argument as their proof therein.
\begin{proofof}{Lemma~\ref{lem:optprime_bound_local}}
We define the set of functions that satisfy~\eqref{eq:Gdiff_Ldiff} by
\begin{align}
  \calH_{\circ}
  =
  \left\{
    G : 
    (e_b - e_c)^\top \sumak G(a, \Phi_{ax})
    = 
    \calL_{bx} - \calL_{cx} \mbox{ for all } b, c \in \Pi \mbox{ and } x \in [d]
  \right\}
  \per 
  \n 
\end{align}
Take any $q \in \calP_k$.
By Sion's minimax theorem, we have
\begin{align}
  \opt'_q(\eta)
  &\leq 
  \min_{G \in \calH^{\circ},\, p \in \calP'_k(q)} 
  \max_{\nu \in \calP_d} 
  \left[
  \frac{1}{\eta} (p - q)^\top \calL \nu
  + 
  \frac{1}{\eta^2} \sum_{x=1}^d 
  \nu_x 
  \sumak 
  p_a 
  \innerprod{q}{\xi\left( \frac{\eta G(a, \fbmat_{ax})}{p_a} \right)}
  \right]
  \nn 
  &= 
  \max_{\nu \in \calP_d} 
  \min_{G \in \calH^{\circ},\, p \in \calP'_k(q)} 
  \left[
  \frac{1}{\eta} (p - q)^\top \calL \nu 
  + 
  \frac{1}{\eta^2} \sum_{x=1}^d \nu_x 
  \sumak 
  p_a 
  \innerprod{q}{\xi\left( \frac{\eta G(a, \fbmat_{ax})}{p_a} \right)}
  \right] 
  \com 
  \n 
\end{align}
where the first inequality follows since we added the constraint that $G \in \calH^{\circ}$, which makes the bias term zero.
Take any $\nu \in \calP_d$ and let $\mathscr{T}$ be the in-tree over $[k]$.
Using these variables, we define the action selection probability vector $p \in \calP'_k(q)$ by 
\begin{align}
  p = (1 - \gamma) u + \frac{\gamma}{k} \onemat
  \com
  \quad 
  \mbox{where}
  \quad 
  u = W_\nu(q)
  \com 
  \quad
  \mbox{and}
  \quad
  \gamma 
  = 
  \frac{\eta m k^2}{2}
  \per 
  \n 
\end{align}
Here, $W_\nu : \calP_k \to \calP_k$ is the water operator.
It is worth noting that from the assumption that $\eta \leq 1/(mk^2)$, 
we have $\gamma \leq 1/2$ and $p_a \geq u_a/2 = W_\nu(q)_a /2 \geq q_a/(2k)$,
where the last inequality follows from Part (b) of Lemma~\ref{lem:wto},
and this indeed implies $p \in \calP'_k(q)$.

We take $G \in \calH^{\circ}$ defined in~\eqref{eq:G_0},
where we recall that 
$
G(a, \sigma)_b = \sum_{e \in \mathrm{path}_\calT(b)} w_e(a, \sigma) . 
$
By Lemma 20 of~\citet{lattimore20exploration} and the assumption that $\calG$ is non-degenerate, $w_e$ can be chosen so that $\nrm{w_e}_\infty \leq m/2$.
Since paths in $\calT$ have length at most $k$, we have $\nrm{G}_\infty \leq km/2$.
From the above definitions, for any $x \in [d]$ we have
\begin{align}
  \frac{\eta G(a, \Phi_{ax})}{p_a} \geq -\frac{\eta m k^2}{2\gamma} = -1 
  \per
  \n 
\end{align}
Hence, using
%~\eqref{eq:xi_bound},
Parts (b) and (c) of Lemma~\ref{lem:wto}, we have
\begin{align}
  \frac{1}{\eta^2} 
\sumak 
  p_a 
  \innerprod{q}{\xi\left( \frac{\eta G(a, \fbmat_{ax})}{p_a} \right)}
  &\leq
%  \sumak \frac{\nrm{G(a, \Phi_{ax})}^2_{\diag(q)}}{p_a} \nn 
  \sumak \frac{1}{p_a} \sum_{b=1}^k q_b \, (G(a, \Phi_{ax})_b)^2 \nn 
  &\leq 
  2\sumak \frac{1}{u_a} \sum_{b=1}^k q_b \, (G(a, \Phi_{ax})_b)^2  \nn 
  &= 
  2\sum_{b=1}^k \sumak \frac{q_b}{u_a} \left(\sum_{e \in \mathrm{path}_{\calT}(b)} w_e(a, \Phi_{ax}) \right)^2 \nn 
  &\leq
  \frac{m^2}{2}\sum_{b=1}^k \sumak \frac{q_b}{u_a} \left(\sum_{e  \in \mathrm{path}_{\calT}(b)} \ind{a \in e}\right)^2 \nn 
  &\leq
  2 m^2 k^3 \com 
  \n 
\end{align}
where the first inequality follows from % from~\eqref{eq:xi_bound},
%Note that $\xi$ satisfies 
\begin{align}\label{eq:xi_bound}
  \xi(x) = \exp(-x) + x - 1 \leq x^2 \mbox{ for } x \geq -1 
  \com
\end{align}
the second inequality follows since $p_a \geq u_a/2$,
the third inequality follows since $\nrm{w_e}_\infty \leq m/2$,
and 
the last inequality follows from Part (b) of Lemma~\ref{lem:wto} to implying that $q_b \leq k u_b$ and Part (c) implying that $u_a \geq u_b$ for $a \in \mathrm{path}_{\calT}(b)$.
Finally,
\begin{align}
  \frac{1}{\eta} (p - q)^\top \calL \nu 
  &= 
  \frac{1}{\eta}(u - q)^\top \calL \nu 
  + 
  \frac{\gamma}{\eta} \left(\frac1k \onemat - u \right)^\top \calL \nu
  \leq 
  \frac{\gamma}{\eta} \left(\frac1k \onemat - u \right)^\top \calL \nu
  \leq  
  m k^2 
  \com 
  \n 
\end{align}
where the first inequality follows from Part (a) of Lemma~\ref{lem:wto}.
Summing up the above arguments, we have $\opt'_q(\eta) \leq 3 m^2 k^3$,
which completes the proof of Lemma~\ref{lem:optprime_bound_local}.
\end{proofof}

\subsection{Proof of Lemma~\ref{lem:ftrl_bound_with_opt}}
We first analyze the stability term in~\eqref{eq:lemFTRL} for $\psi_t$ defined in \eqref{eq:defpsilocal}.
\begin{lemma}
  \label{lem:boundBregLocal}
  If $\psi_t$ is given by \eqref{eq:defpsilocal},
  it holds for any $\ell \in \R^k$ and
  $p, q \in \calP_k$ that
  \begin{align}
  \linner \ell, p - q \rinner 
  &
  -
  D_t ( q , p )
  \leq
  \beta_t
  \sumak
  \pa
  \xi \left(
    \frac{\ell_a}{\beta_t}
  \right)
  \com
  \n 
  \end{align}
  where we recall that $\xi(x) = \exp(-x) + x - 1$.
\end{lemma}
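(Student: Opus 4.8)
\textbf{Proof plan for Lemma~\ref{lem:boundBregLocal}.}
The plan is to compute the left-hand side explicitly using the closed form of the Shannon-entropy regularizer and then invoke convexity of $\xi$. First I would write out the Bregman divergence for $\psi_t(p) = \frac{1}{\eta_t}\sum_a p_a\log p_a = -\frac{1}{\beta_t}H(p)$ (recall $\eta_t = 1/\beta_t$): since $\nabla\psi_t(p)_a = \frac{1}{\beta_t}(\log p_a + 1)$, the divergence unfolds to $D_t(q,p) = \frac{1}{\beta_t}\sum_a q_a\log(q_a/p_a) = \frac{1}{\beta_t}\KL{q}{p}$, i.e.\ a rescaled KL divergence. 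So the target inequality becomes $\linner \ell, p-q\rinner - \frac{1}{\beta_t}\KL{q}{p} \le \beta_t\sum_a p_a\,\xi(\ell_a/\beta_t)$.

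The key step is then to maximize the left-hand side over $q\in\calP_k$ for fixed $p$ and $\ell$. This is the standard Fenchel-type / variational computation: by the Donsker--Varadhan (Gibbs) variational formula, $\sup_{q\in\calP_k}\big[\linner -\ell, q\rinner - \frac{1}{\beta_t}\KL{q}{p}\big] = \frac{1}{\beta_t}\log\big(\sum_a p_a e^{-\beta_t\ell_a/\beta_t}\big)$ — wait, more carefully, with the $1/\beta_t$ scaling the maximizer is $q_a \propto p_a e^{-\beta_t \ell_a}$ and the optimal value is $\frac{1}{\beta_t}\log\sum_a p_a e^{-\beta_t\ell_a}$. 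Adding back $\linner\ell,p\rinner$, the left-hand side is bounded by $\linner\ell,p\rinner + \frac{1}{\beta_t}\log\sum_a p_a e^{-\beta_t\ell_a}$. Next I would use $\log x \le x-1$ to get $\frac{1}{\beta_t}\log\sum_a p_a e^{-\beta_t\ell_a} \le \frac{1}{\beta_t}\big(\sum_a p_a e^{-\beta_t\ell_a} - 1\big) = \frac{1}{\beta_t}\sum_a p_a(e^{-\beta_t\ell_a} - 1)$. Combining, the left-hand side is at most $\sum_a p_a\ell_a + \frac{1}{\beta_t}\sum_a p_a(e^{-\beta_t\ell_a}-1) = \frac{1}{\beta_t}\sum_a p_a\big(e^{-\beta_t\ell_a} + \beta_t\ell_a - 1\big) = \beta_t\sum_a p_a\,\xi(\ell_a/\beta_t)$, where the last equality uses $\xi(\ell_a/\beta_t) = e^{-\ell_a} + \ell_a/\beta_t - 1$... let me recheck the scaling: $\xi(\ell_a/\beta_t) = e^{-\ell_a/\beta_t} + \ell_a/\beta_t - 1$, so $\beta_t\sum_a p_a\xi(\ell_a/\beta_t) = \sum_a p_a(\beta_t e^{-\ell_a/\beta_t} + \ell_a - \beta_t)$, which matches $\frac{1}{\beta_t}\sum_a p_a(e^{-\beta_t\ell_a}+\beta_t\ell_a-1)$ only after the substitution $\ell_a \mapsto \ell_a$ — so I should be careful to track whether the exponent carries $\beta_t$ or $1/\beta_t$, and adjust the variational step accordingly (the maximizer of $\linner-\ell,q\rinner - \frac1{\beta_t}\KL{q}{p}$ has exponent $-\beta_t\ell_a$, giving $\frac1{\beta_t}\log\sum_a p_a e^{-\beta_t\ell_a}$; this does \emph{not} directly produce $\xi(\ell_a/\beta_t)$, so the correct route is instead to first rescale, or to directly bound $-\KL{q}{p}$ differently).

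The main obstacle I anticipate is precisely this bookkeeping of the $\beta_t$ scaling: the clean identity relating the log-partition bound to $\xi$ holds when the regularizer's strong-convexity constant and the exponent's scale are reciprocal, and one must be slightly careful not to lose a factor. A cleaner alternative I would actually pursue is to avoid the full variational optimization: write $\linner\ell,p-q\rinner - D_t(q,p) = \linner\ell, p-q\rinner - \frac{1}{\beta_t}\KL{q}{p}$ and directly bound $-\linner\ell,q\rinner$ using the inequality $\linner -\ell, q\rinner \le \frac{1}{\beta_t}\KL{q}{p} + \frac{1}{\beta_t}\log\sum_a p_a e^{-\beta_t\ell_a}$ — but this is the same computation. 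The genuinely robust approach is: for each coordinate use that $p\mapsto\sum_a p_a\xi(\ell_a/\beta_t)$ upper-bounds via the pointwise inequality after noting $D_t(q,p)\ge 0$ and that the whole expression is maximized at an interior point; so I would just verify the one-line inequality $\sum_a(p_a-q_a)\ell_a - \frac1{\beta_t}\sum_a q_a\log\frac{q_a}{p_a} \le \sum_a p_a\big(\beta_t e^{-\ell_a/\beta_t} + \ell_a - \beta_t\big)$ by substituting $q_a = p_a e^{-\ell_a/\beta_t}/Z$ with $Z = \sum_b p_b e^{-\ell_b/\beta_t}$ (the exact maximizer with the correct scaling), at which point the KL term telescopes against $\sum_a(p_a-q_a)\ell_a$ and what remains is exactly $\frac{1}{\beta_t}(\,$stuff$\,)\le\frac1{\beta_t}(Z-1)$ by $\log Z\le Z-1$, yielding the claim. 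The rest is routine algebra.
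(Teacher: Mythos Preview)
Your overall strategy is sound, but the ``scaling obstacle'' you wrestle with is self-inflicted: since $\eta_t = 1/\beta_t$, the regularizer is $\psi_t(p) = \beta_t\sum_a p_a\log p_a$, so $D_t(q,p) = \beta_t\,\KL{q}{p}$, not $\tfrac{1}{\beta_t}\KL{q}{p}$ as you wrote. With the correct factor the Donsker--Varadhan step immediately gives the maximizer $q_a \propto p_a e^{-\ell_a/\beta_t}$ and optimal value $\langle\ell,p\rangle + \beta_t\log Z$ with $Z=\sum_a p_a e^{-\ell_a/\beta_t}$; then $\log Z\le Z-1$ yields exactly $\beta_t\sum_a p_a\,\xi(\ell_a/\beta_t)$, and all the back-and-forth disappears. (You do land on the right maximizer at the very end, but your final prefactor is still $\tfrac1{\beta_t}$ rather than $\beta_t$.)

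The paper takes a somewhat different route: it decomposes the left side coordinate-wise as $\sum_a\bigl(\ell_a(p_a-q_a) - \beta_t\, d(q_a,p_a)\bigr)$ with the scalar divergence $d(y,x)=y\log(y/x)+x-y$, and then maximizes each summand over $q_a\in\R$ \emph{unconstrained}. The optimizer is again $q_a^*=p_a e^{-\ell_a/\beta_t}$, and the optimal value is exactly $\beta_t p_a\,\xi(\ell_a/\beta_t)$ --- no log-partition function and no $\log Z\le Z-1$ step. Relaxing the simplex constraint in the paper's argument plays the same role as your $\log Z\le Z-1$ inequality (both discard the normalizer), so the two approaches reach the identical bound. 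Your route via the Gibbs variational formula is perfectly valid and arguably more canonical; the paper's is slightly more elementary in that it never forms the log-sum-exp.
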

\begin{proof}
  For any $x, y \in (0, 1)$,
  we let $\dent(y, x) \ge 0$ be the Bregman divergence over $(0, 1)$ induced by $\psi(x) = x \log x$, \ie
  \begin{align}
%    \label{eq:defd1}
    \dent(y, x)
    &
    =
    y \log y - x \log x - ( \log x + 1 )(y - x) 
    =
    y \log \frac{y}{x}
    +
    x - y
    \per
    \n 
  \end{align}
  Using this, the Bregman divergence induced by $\psi_t(p) = ({1}/{\eta_t}) \sumak p_a \log(p_a) = \beta_t \sumak p_a \log(p_a)$ in~\eqref{eq:defpsilocal} can be written as
  \begin{align}
    D_t(q, p)
    &
    =
    \psi_t(p) - \psi_t(q) - \innerprod{\nabla \psi_t(q)}{p - q}
    =
    \beta_{t}
    \sumak
      \dent(\qa , \pa)
    \per
    \n 
  \end{align}
  From this, we have
  \begin{align}
%    &
    \linner \ell, p - q \rinner
    - D_t(q, p)
    \le
    \sumak
    \left(
      \ell_a (\pa - \qa)
      -
      \beta_{t} 
      \dent(\qa, \pa) 
    \right)
    \per
    \label{eq:stab_decompose}
  \end{align}
  We show
  \begin{align}
    \ell_a (\pa - \qa)
    -
    \beta_{t}
    \dent(\qa, \pa)
    \leq
    \beta_t
    \pa
    \xi \left( 
      \frac{\ell_a}{ \beta_t }
    \right)
    \per
    \label{eq:stab_bound_sub}
  \end{align}
  As 
  $
  \ell_a ( \pa - \qa )
  -
  \beta_{t}
  \dent(\qa, \pa)
  $
  is concave in $q$,
  its maximum subject to $q \in \R$ is attained when
  the derivative of it is equal to zero, \ie
  \begin{align}
    \frac{\partial}{\partial \qa}
    \left(
    \ell_a ( \pa - \qa )
    -
    \beta_{t}
    \dent(\qa, \pa)
    \right)
    =
    -
    \ell_a
    -
    \beta_t \left(
      \log \qa
      -
      \log \pa 
    \right)
    =
    0
    \n 
    \per
  \end{align}
  This implies that the maximum is attained when
  $\qa = q^*_{a} \coloneqq \pa \exp\left( - {\ell_a}/{\beta_t} \right)$.
  Hence,
  we obtain~\eqref{eq:stab_bound_sub} by
  \begin{align}
    &
    \ell_a ( \pa - \qa )
    -
    \beta_{t}
    \dent(\qa, \pa)  
    \le
    \ell_a ( \pa - q^*_a )
    -
    \beta_{t}
    \dent(q^*_a, \pa)  
    \nn
    &
    =
    \ell_a ( p_a - q^*_a )
    -
    \beta_t
    \left(
    q^*_a \log q^*_a
    -
    p_a \log p_a
    -
    (\log p_a + 1)
    (q^*_a - p_a)
    \right)
    \nn
    &
    =
    \ell_a p_a 
    -
    \beta_t
    \left(
      q^*_a
      \log p_a
    -
    p_a \log p_a
    -
    (\log p_a + 1)
    (q^*_a - p_a)
    \right)
    \nn
    &
    =
    \ell_a p_a 
    +
    \beta_t
    (q^*_a - p_a)
    =
    \beta_t
    p_a \left( \exp\left( - \frac{\ell_a}{\beta_t} \right) + \frac{ \ell_a}{\beta_t} - 1 \right)
    \nn
    &
    =
    \beta_t
    p_a
    \xi \left(
      \frac{\ell_a}{\beta_t}
    \right)
    \com
    \n 
  \end{align}
where the second equality follows from $\log q^*_a = \log p_a - {\ell_a}/{\beta_t}$,
and the fourth equality follows from $q^*_a = p_a \exp \left( - {\ell_a}/{\beta_t} \right)$.
Combining~\eqref{eq:stab_decompose} and~\eqref{eq:stab_bound_sub} completes the proof.  
\end{proof}

\begin{proofof}{Lemma~\ref{lem:ftrl_bound_with_opt}}
Let $a^* = \argmin_{a \in [k]} \E \brk[\big]{\sumT \lossmat_{a x_t}} \in \Pi$ be the optimal action in hindsight.
% , where ties are broken so that $a^* \in \Pi$.
We have
\begin{align}\label{eq:decompose_local}
    R_T
    &= 
    \Expect{\sumT (\lossmat_{A_t x_t} - \lossmat_{a^* x_t})}  
%    \nn
%    &
    = 
    \Expect{\sumT \sum_{b=1}^k \ptb (\lossmat_{b x_t} - \lossmat_{a^* x_t})}  \nn
    &= 
    \Expect{
    \sumT \sum_{b=1}^k (\ptb - \qtb) (\lossmat_{b x_t} - \lossmat_{a^* x_t})
    +
    \sumT \sum_{b=1}^k \qtb (\lossmat_{b x_t} - \lossmat_{a^* x_t})
%    +
%    \sumT \sum_{b=1}^k \qtb (\lossmat_{b x_t} - \lossmat_{a^* x_t})
    }
    \per
\end{align}
The first term in~\eqref{eq:decompose_local} is equal to
$\E \brk[\big]{ \sumT (\pt - \qt)^\top \calL e_{x_t} }$.
The second term in~\eqref{eq:decompose_local} can be bounded as
\begin{align}\label{eq:bias_qregret}
    \Expect{
    \sum_{b=1}^k \qtb (\lossmat_{b x_t} - \lossmat_{a^* x_t})
    }
    &=
    \Expect{
    \sum_{b=1}^k \qt^\top \lossmat e_{x_t} - \lossmat_{a^* x_t}
    }  
    \nn
    &=
    \Expect{
    \sum_{b=1}^k 
    \qt^\top \calL e_{x_t}
    - 
    \qt^\top \sumak G_t (a, \fbmat_{a x_t})
    +
    \sumak G_t (a, \fbmat_{a x_t})_{a^*}
    - 
    \lossmat_{a^* x_t}
    }
    \nn
    &\quad
    +
    \Expect{
    \qt^\top \sumak G_t (a, \fbmat_{a x_t})
    -
    \sumak G_t (a, \fbmat_{a x_t})_{a^*}
    }
    \nn
    &\le
    \Expect{
    \bias_{\qt}(G; x_t)
    }
    +
    \Expect{
    \qt^\top \hat{y}_t - \hat{y}_{ta^*}
    }
    \com
\end{align}
where in the last inequality we used the definition in~\eqref{eq:defbias} and Lemma~\ref{lem:Gdiff_Ldiff} with $a^* \in \Pi$ and $\qta = 0$ for $a\not\in\Pi$.
The sum over $t \in [T]$ of the last term in~\eqref{eq:bias_qregret} can be bounded using \eqref{eq:lemFTRL} and the definition of the regularizer~\eqref{eq:defpsilocal} as
\begin{align}\label{eq:ftrl_local}
  &
  \Expect{\sumT \sum_{b=1}^k \qtb (\hat{y}_{tb} - \hat{y}_{ta^*})}
  \nn 
  &\leq
  \Expect{
  \sumT
  \prx{
      \frac{1}{\eta_{t+1}}
      -
      \frac{1}{\eta_t}
  }
  H(q_{t+1})
  +
  \frac{H(q_1)}{\eta_1}
  + 
  \sumT
  \prn[\Big]{
  \innerprod{\qt - q_{t+1}}{\hat{y}_t}
  -
  D_t(q_{t+1}, \qt)
  }
  }
  \nn
  &\leq 
  \Expect{
  \sumT
  \prx{
      \frac{1}{\eta_{t+1}}
      -
      \frac{1}{\eta_t}
  }
  H(q_{t+1})
  +
  \frac{H(q_1)}{\eta_1}
  + 
  \sumT \frac{\innerprod{\qt}{\xi(\eta_t \hat{y}_t)}}{\eta_t} 
  }
  \com
\end{align}
where in the last inequality we used the following inequality obtained by Lemma~\ref{lem:boundBregLocal}:
\begin{align}% \label{eq:stab_bound_instance_local}
  \linner \hat{y}_t, \qt - q_{t+1} \rinner 
  &
  -
  D_t (q_{t+1} , \qt)
  \leq
  \beta_t
  \sumak
  \qta 
  \xi \left(
    \frac{\hat{y}_{ta}}{\beta_t}
  \right)
  =
  \frac{\innerprod{\qt}{\xi(\eta_t \hat y_t)}}{\eta_t}
  \per
  \n 
\end{align}
Using the definition of the optimization problem~\eqref{eq:optprime} and $V'_t = \max\{0, \opt'_{\qt}(\eta_t)\}$, we have
\begin{align}\label{eq:equality_local}
  (\pt - \qt)^\top \calL e_{x_t}
  +
  \bias_{\qt}(G; x_t)
  +
  \frac{\innerprod{\qt}{\xi(\eta_t \hat{y}_t)}}{\eta_t} 
  \le 
  \eta_t V'_t
  \per
\end{align}
Summing up the arguments in \eqref{eq:decompose_local}, \eqref{eq:bias_qregret}, \eqref{eq:ftrl_local}, and \eqref{eq:equality_local}, we have
\begin{align}
  R_T
  &\leq 
  \Expect{
  \sumT
  \prx{
      \frac{1}{\eta_{t+1}}
      -
      \frac{1}{\eta_t}
  }
  H(q_{t+1})
  +
  \frac{H(q_1)}{\eta_1}
  + 
  \sumT \eta_t V'_t
  }
  \com 
  \n 
\end{align}
which completes the proof.
\end{proofof}

\subsection{Proof of Lemma~\ref{lem:entropy_bound}}
\begin{proof} % {Lemma~\ref{lem:entropy_bound}}
For any $q \in \calP(\Pi)$ and $a^* \in \Pi$,
we have
\begin{align}
  H(p)
  &
  =
  \sum_{a\in\Pi} \qa \log \frac{1}{\qa}
  =
  \sum_{a\in\Pi \setminus \{ a^* \}} \qa \log \frac{1}{\qa}
  +
  q_{a^*} \log \left( 1 + \frac{1-q_{a^*}}{q_{a^*}} \right)
  \nonumber
  \\
  &
  \le 
  (\kpi - 1) \sum_{a\in\Pi \setminus \{ a^* \}} \frac{1}{\kpi - 1} \, \qa \log \frac{1}{\qa}
  +
  q_{a^*} \frac{1-q_{a^*}}{q_{a^*}}
  \nn
  &
  \leq
  (\kpi-1)
  \cdot
  \frac{\sum_{a\in\Pi \setminus \{ a^* \}} \qa}{\kpi-1} \log 
  \frac{\kpi-1}{\sum_{a\in\Pi \setminus \{ a^* \}} \qa}
  +
  q_{a^*} \frac{1-q_{a^*}}{q_{a^*}}
  \nonumber
  \\
  &
  =
  (1 - q_{a^*})
  \left(
    \log \frac{\kpi-1}{1 - q_{a^*}}
    +
    1
  \right)
  \leq 
  (1 - q_{a^*})
%  \left(
    \log \frac{\e \kpi}{1 - q_{a^*}}
%  \right)
  \com
\label{eq:boundH}
\end{align}
where the first inequality follows from $\log ( 1 + x ) \leq x$ for $x \geq 0$,
the last inequality follows from Jensen's inequality, 
and the last equality follows from $\sum_{a\in\Pi} \qa = 1$.
Using~\eqref{eq:boundH}, for any $a^* \in [k]$ we have
\begin{align}
  \sum_{t=1}^T a_t
  = 
  \sum_{t=1}^T H( q_t )
  &
  \leq
  \sum_{t=1}^T
  (1 - q_{ta^*})
%  \left(
    \log \frac{\e \kpi}{1 - q_{ta^*}}
%  \right)
  \nn
  &=
  T \sum_{t=1}^T \frac1T
  (1 - q_{t,a^*}) \log \frac{\e \kpi}{1 - q_{t,a^*}}  
  \nn
  &\le 
  T 
  \prx{\sum_{t=1}^T \frac1T (1 - q_{t,a^*})}
  \log \frac{\e \kpi}{\sum_{t=1}^T \frac1T \prx{1 - q_{t,a^*}}}  
  \nn
  &=
  T \frac{Q(a^*)}{T} \log \frac{\e \kpi T}{Q(a^*)}
%  \nn
%  &
%  \leq
%  Q(a^*)
%  \left(
%    \log \frac{(k-1)T}{Q(a^*)}
%    +
%    1
%  \right)
  =
  Q(a^*)
  \left(
    \log \frac{\e \kpi T}{Q(a^*)}
  \right)
  \com 
  \n 
\end{align}
where in the second inequality we used Jensen's inequality since $f(x) = x \log(1/x)$ is concave,
and in the third inequality we used the definition of $Q(a^*)$ in~\eqref{eq:defQ}.
%\end{proofof}
\end{proof}

\subsection{Proof of Theorem~\ref{thm:locally_obs}}\label{subsec:proof_of_theorem_locally}
%\begin{proofof}{Theorem~\ref{thm:locally_obs}}
\begin{proof}
We prove this theorem by bounding the RHS of Lemma~\ref{lem:ftrl_bound_with_opt}.
\paragraph{[Bounding the penalty term]}
Let $t_0 = \min\{t \in [T] : \beta'_t \geq B\}$.
Then, the definition of the learning rate~\eqref{eq:defeta} gives that
\allowdisplaybreaks
\begin{align}
  &
  \sumT
  \prx{
      \frac{1}{\eta_{t+1}}
      -
      \frac{1}{\eta_t}
  }
  H(q_{t+1})
  =  
  \sumT
  \prx{
      \beta_{t+1}
      -
      \beta_t
  }
  H(q_{t+1})
  \nn
  &=
  \sum_{t=1}^{t_0 - 2}
  \prx{
      \beta_{t+1}
      -
      \beta_t
  }
  H(q_{t+1})
  +
  \prx{
      \beta_{t_0}
      -
      \beta_{t_0 - 1}
  }
  H(q_{t+1})
  +
  \sum_{t=t_0}^{T}
  \prx{
      \beta_{t+1}
      -
      \beta_t
  }
  H(q_{t+1})
  \nn
  &
  \le 
  0
  +
  \prx{
      \beta'_{t_0}
      -
      \beta'_{t_0 - 1}
  }
  H(q_{t+1})
  +
  \sum_{t=t_0}^{T}
  \prx{
      \beta'_{t+1}
      -
      \beta'_t
  }
  H(q_{t+1})
  \nn 
  &\leq
  \sumT
  \prx{
      \beta'_{t+1}
      -
      \beta'_t
  }
  H(q_{t+1})
  \com
  \n 
\end{align}
where
in the first inequality we used 
the fact that $\beta'_{t+1}$ is non-decreasing,
$\beta_{t+1} = \beta_t$ for $t \le t_0 - 1$,
$\beta'_t \le \beta_t$,
and 
$\beta'_t = \beta_t$ for $t \ge t_0$.
Using this inequality, we have
\allowdisplaybreaks
\begin{align}\label{eq:penalty_etadiffsum_bound}
  &
  \sumT
  \prx{
    \frac{1}{\eta_{t+1}}
    -
    \frac{1}{\eta_t}
  }
  H(q_{t+1})
%  &
  \leq  
  \sumT
  \prx{
    \beta'_{t+1}
    -
    \beta'_t
  }
  H(q_{t+1})
  \nn
  &
  =
  % {c_1 }
  \sum_{t=1}^T
  \frac{ c_1 }{
    \sqrt{1 + (\log \kpi)^{-1} \sum_{s=1}^{t} H(q_s) }
  }
  \cdot
  H(q_{t+1})
  \nn
  &=
  {2 c_1 \sqrt{\log \kpi}}
  \sum_{t=1}^T
  \frac{ H(q_{t+1}) }{
    \sqrt{\log \kpi + \sum_{s=1}^{t} H(q_s) } + 
    \sqrt{\log \kpi + \sum_{s=1}^{t} H(q_s) } 
  }
  \nn
  &
  \leq
  {2 c_1 \sqrt{\log \kpi}}
  \sum_{t=1}^T
  \frac{ H(q_{t+1}) }{
    \sqrt{\sum_{s=1}^{t+1} H(q_s) } + 
    \sqrt{\sum_{s=1}^{t} H(q_s) } 
  }
  \nn
  &
  =
  {2 c_1 \sqrt{\log \kpi}}
  \sum_{t=1}^T
  \left(
    \sqrt{\sum_{s=1}^{t+1} H(q_s) } -
    \sqrt{\sum_{s=1}^{t} H(q_s) } 
  \right)
  % \\
  % &
  \nn
  &
  =
  {2 c_1 \sqrt{\log \kpi}}
  \left(
    \sqrt{\sum_{s=1}^{T+1} H(q_s) } -
    \sqrt{H(q_{1})} 
  \right)
  \nn
  &
  \leq 
  {2 c_1 \sqrt{\log \kpi}}
  \left(
    \sqrt{\sum_{s=2}^{T+1} H(q_s) } 
%      -
%      \sqrt{H(q_{1})} 
  \right)
%  \nn
%  &
  \le
  {2 c_1 \sqrt{\log \kpi}}
  \sqrt{\sum_{t=1}^{T} H(q_{t})}
  \com
\end{align}
where 
the second inequality follows from $0 \le H(q_{t+1}) \le \log \kpi$,
the third inequality follows from the inequality $\sqrt{a+b} - \sqrt{b} \le \sqrt{a}$ that holds for $a, b \ge 0$,
and the last inequality follows since $H(q_{T+1}) \leq H(q_{1})$.
% ==
\paragraph{[Bounding the sum of the transformation and part of stability term]}
%Next, we bound $\sumT \eta_t$.
Using the definition of $\beta'_t$ in~\eqref{eq:defeta}, we can bound $\beta'_t$ as
\begin{align}
  \beta'_t
  =
  c_1
  +
  \sum_{u=1}^{t-1}
  \frac{ c_1} {\sqrt{ 1 + (\log \kpi)^{-1} \sum_{s=1}^{u} H(q_s) }}
  \geq
  \frac{ c_1 t } {\sqrt{ 1 + (\log \kpi)^{-1} \sum_{s=1}^{t} H(q_s) } }  
  \per
  \n 
\end{align}
Using this inequality, we have
\begin{align}
  \sumT \eta_t 
  \leq 
  \sum_{t=1}^T
  \frac{1}{\beta'_t}
  \leq
  \sum_{t=1}^T
  \frac{1}{ c_1 t } 
  \sqrt{ 1 + \frac{1}{\log \kpi} \sum_{s=1}^{t} H(q_s) } 
%    &
  \leq
  \frac{1 + \log T}{ c_1 } 
  \sqrt{ 1 + \frac{1}{\log \kpi} \sum_{t=1}^{T} H(\qt) } 
  \per
  \label{eq:sum_eta_bound}
\end{align}
Further, we have
\begin{align}\label{eq:sum_etaV_bound}
  \sumT \eta_t V'_t
  \leq 
  \max_{s \in [T]} V'_s \sumT \eta_t
  =
  \prx{
  \max_{s \in [T]} 
  \max\brx{0,\, \opt'_*(\eta_s)}
  }
  \sumT \eta_t
  \le
  3 m^2 k^3
  \sumT \eta_t
  \com
\end{align}
where in the last inequality we used Lemma~\ref{lem:optprime_bound_local} with $\eta_t \le 1/(2 m k^2)$.

\paragraph{[Summing up the above arguments with a self-bounding technique]}
By bounding the RHS of Lemma~\ref{lem:ftrl_bound_with_opt} using \eqref{eq:penalty_etadiffsum_bound}, \eqref{eq:sum_eta_bound}, and \eqref{eq:sum_etaV_bound}, we have
\begin{align}\label{eq:entropy_dep_bound_local}
  R_T 
  &\le
  3 m^2 k^3 \Expect{\frac{1 + \log T}{c_1} \sqrt{1 + (\log \kpi)^{-1} \sumT H(q_t)}}
  +
  2 c_1 \sqrt{\log \kpi} \, \Expect{\sqrt{\sumT H(q_t)}}
  +
  \frac{\log \kpi}{\eta_1}
  \nn
  &=
  O\prx{
    m k^{3/2} \sqrt{\log (T) \sumT H(q_t)} 
    +
    m k^{3/2} \sqrt{\log (T) \log \kpi}
  }
  +
  2mk^2 \log \kpi
  \com
\end{align}
where we set
%\begin{align}
$
  c_1 = \Theta\prx{
  m k^{3/2} \sqrt{\frac{\log T}{\log \kpi}}
  }
$.
%  \per
%\end{align}

The desired bound is obtained for the adversarial regime, since $\sumT H(q_t) \le T \log \kpi$.
We consider the stochastic regime in the following.
If $Q(a^*) \leq \e$, Lemma~\ref{lem:entropy_bound} implies $\sumT H(q_t) \leq \e \log(\kpi T)$ since $\kpi T \geq \e$, and otherwise we have $\sumT H(q_t) \leq Q(a^*) \log(\kpi T)$.
In the former case, we can trivially obtain the desired bound immediately from~\eqref{eq:entropy_dep_bound_local}.
For the latter case, using the inequality $\sumT H(q_t) \le Q(a^*) \log(\kpi T)$,~\eqref{eq:sum_etaV_bound}, and Lemma~\ref{lem:selfQ} with $c = 1/(2k)$, % and Lemma~\ref{lem:entropy_bound}, 
we have for any $\lambda > 0$ that 
\begin{align}
  &
  R_T 
  = 
  (1 + \lambda) R_T - \lambda R_T
  \leq
  \Expect{
  (1 + \lambda)
  O\prx{ m k^{3/2} \sqrt{\log(T) \log(\kpi T) Q(a^*)} }
  -
  \frac{\lambda \Deltamin}{2k} Q(a^*)
  }
  + \lambda C
  \nn
  &\leq 
  % \frac{4 m^2 k^3 \log(T) \cdot \log(\kpi T)}{2 \Deltamin}
  O\prx{
    \frac{(1 + \lambda)^2 m^2 k^4 \log(T) \log(\kpi T)}{\lambda\Deltamin}
  }
  + \lambda C
  \nn 
  &=
  O\prx{
    \frac{m^2 k^4 \log(T) \log(\kpi T)}{\Deltamin}
    +
    \lambda \left( \frac{m^2 k^4 \log(T) \log(\kpi T)}{\Deltamin} + C \right)
    +
    \frac1\lambda \frac{m^2 k^4 \log(T) \log(\kpi T)}{\Deltamin}
  }
  \com
  \label{eq:regert_selfbound_local}
\end{align}
where the second inequality follows from
$a \sqrt{x} - bx/2 \le a^2/(2b)$, which holds for any $a, b, x \geq 0$.
Taking
\begin{align}
%$
  \lambda
  = 
  O\left(
  \sqrt{
    m^2 k^4 \log(T) \log(\kpi T)
    \Big/\Big.
    \left( \frac{m^2 k^4 \log(T) \log(\kpi T)}{\Deltamin} + C \right)
  }
  \right)
  \n 
%$
\end{align}
completes the proof.
%\end{proofof}
\end{proof}

\subsection{Proof of Lemma~\ref{lem:FTRL}}
\begin{proof}
Let $a^* = \argmin_{a \in [k]} \E \brk[\big]{\sumT \lossmat_{a x_t}}$ be the optimal action in hindsight, where ties are broken so that $a^* \in \Pi$.
Note that since action $a$ with $\dim(\calC_a) < d - 1$ cannot be uniquely optimal, one can see that we can take action $b \in \Pi$ instead of such $a$ with the same loss.
We have
\begin{align}
  R_T
  &= 
  \Expect{\sumT \prx{\lossmat_{\at, x_t} - \lossmat_{a^*,x_t}}}
  =
  \Expect{\sumT \innerprod{\pt - e_{a^*}}{\lossmat e_{x_t}}}
  \nn
  &=
  \Expect{
  \sumT \innerprod{\qt - e_{a^*}}{\lossmat e_{x_t}}
  + 
  \sumT \gamma_t \innerprod{\frac1k \onemat - \qt}{\lossmat e_{x_t}}
  }
  \nn
  &
  \le
  \Expect{
  \sumT \innerprod{\qt - e_{a^*}}{\lossmat e_{x_t}}
  + 
  \sumT \gamma_t 
  }
  % \nn
  % &
  =
  \Expect{
  \sumT \sumak \qta \prx{\lossmat_{a x_t} - \lossmat_{a^* x_t}}
  +
  \sumT \gamma_t 
  }
  \nn
  &
  =
  \Expect{
  \sumT \sumak \qta \prx{\hat y_{t a} - \hat y_{t a^*}}
  +
  \sumT \gamma_t 
  }
  =
  \Expect{
  \sumT \innerprod{q_t - e_{a^*}}{\hat y_t}
  +
  \sumT \gamma_t 
  }
  \com
  \n 
\end{align}
where the inequality follows from the boundedness of $\lossmat$, 
the fourth equality follows since $a^* \in \Pi$, $\qta = 0$ for $a\not\in\Pi$, and Lemma~\ref{lem:Gdiff_Ldiff},
and the fifth equality follows from the definitions of $\hat{y}$ and $\qta = 0$ for $a \not\in \Pi$.
Combining the above inequality and \eqref{eq:lemFTRL} completes the proof.
\end{proof}

%\subsection{Proof of Lemma~\ref{lem:decompose_global}}
\subsection{Proof of Lemma~\ref{lem:stability_bound_global}}

\begin{proof} % {Lemma~\ref{lem:decompose_global}}
We first bound the stability term.
Using Lemma~\ref{lem:boundBregLocal},
for any $a'\in\calA$ it holds that
\begin{align} % \label{eq:stab_bound_instance_global}
  \linner \hat{y}_t, \qt - q_{t+1} \rinner 
  -
  D_t (q_{t+1} , \qt)
  &=
  \linner \hat{y}_t - \hat{y}_{t a'} \onemat, \qt - q_{t+1} \rinner 
  -
  D_t (q_{t+1} , \qt)
  \nn 
  &\leq
  \beta_t
  \sumak 
%  \min \left\{
  \qta 
  \xi \left(
    \frac{\hat{y}_{ta} - \hat{y}_{ta'}}{\beta_t}
  \right)
%  ,
%  (1 - \qta)
%  \xi \left(
%    -
%    \frac{\hat{y}_{ta}}{\beta_t}
%  \right)
%  \right\}
  \per
  \n 
\end{align}
We evaluate the RHS of this inequality.
As we define $p_t$ by \eqref{eq:p_global},
we have $\pta \geq \gamma_t / k$ for any $a \in [k]$.
We first show that $|({\hat{y}_{ta} - \hat{y}_{ta'}}) / {\beta_t}| \le 1$ for all $a, a' \in [k]$.
Let $\tau = \nrm{G}_\infty$.
Recall that $c_{\calG} = \max\{1, k \tau\}$.
Then we have
\begin{align}
  \frac{\hat y_t}{{\beta}_t}
  =
  \frac{G(a, \Phi_{ax})}{{\beta}_t \, \ptAt}
  \ge
  - \frac{\tau}{{\beta}_t \, \ptAt} \onemat
  \ge
  - \frac12 \onemat
  \com
  \n 
\end{align}
where the inequalities here are element-wise,
the first inequality follows from the definition of $\tau$,
and in the last inequality we used
$\pta \ge {\gamma_t}/{k} \ge {c_{\calG}}/{(2\beta_t k)} \ge {\tau}/(2\beta_t)$ for all $a\in[k]$.
In a similar manner we have
\begin{align}
    \frac{\hat y_t}{{\beta}_t}
    =
    \frac{G(a, \Phi_{ax})}{{\beta}_t \,\ptAt}
    \le
    \frac{\tau}{{\beta}_t \, \ptAt} \onemat
    \le
    \frac12 \onemat
    \per
    \n 
\end{align}
These arguments conclude that 
$|({\hat{y}_{ta} - \hat{y}_{ta'}}) / {\beta_t}| \leq |{\hat{y}_{ta}}/{\beta_t}| + |\hat{y}_{ta'} / {\beta_t}| \leq 1$ 
for all $a, a' \in [k]$.
Hence, we have % for any $a' \in [k]$ that
\begin{align}
  \linner \hat{y}_t, \qt - q_{t+1} \rinner 
  -
  D_t (q_{t+1} , \qt)
%  &=
%  \linner \hat{y}_t - \hat{y}_{ta'} \onemat, \qt - q_{t+1} \rinner 
%  -
%  D_t (q_{t+1} , \qt)
%  \nn 
  &\leq
  \min_{a' \in [k]}
  \beta_t
  \sumak 
  \qta 
  \left(
    \frac{\hat{y}_{ta} - \hat{y}_{ta'}}{\beta_t}
  \right)^2
  \nn
  &=
  \frac{1}{\beta_t}
  \min_{a' \in [k]}
  \sumak 
  \qta 
  \left(
    {\hat{y}_{ta} - \hat{y}_{ta'}}
  \right)^2
  \nn
  &
  =
  \frac{1}{\beta_t}
  \min_{a' \in [k]}
  \sum_{a \neq a'}
  \qta 
  \left(
    {\hat{y}_{ta} - \hat{y}_{ta'}}
  \right)^2
  \com 
  \label{eq:stab_bound_global_1}
\end{align}
where the inequality follows from~\eqref{eq:xi_bound}.
Now, for any $a\in\calA$ we have 
\begin{align}
  \Expect{\hat{y}_{ta}^2}
  =
  \Expect{\prx{\frac{G(A_t, \Phi_{A_t x_t})}{\ptAt}}^2}
  \leq
  \Expect{\sumak \pta \frac{\nrm{G}_\infty^2}{\pta^2}}
  \leq
  \sumak \frac{k \nrm{G}_\infty^2}{\gamma_t}
%  =
%  \frac{k^2 \nrm{G}_\infty^2}{\gamma_t}
  =
  \frac{\cG^2}{\gamma_t}
  \com 
  \label{eq:yhat_sq_bound_global}
\end{align}
where the last inequality follows from $\pta \geq \gamma_t / k$.
Hence, using~\eqref{eq:yhat_sq_bound_global} it holds that 
% for any $a' \in \calA$ that
\begin{align}
  \Expect{
    \frac{1}{\beta_t}
    \min_{a' \in [k]}
    \sum_{a \neq a'}
    \qta 
    \left(
      {\hat{y}_{ta} - \hat{y}_{ta'}}
    \right)^2
  }
  &
  \leq 
  \Expect{
    \frac{2}{\beta_t}
    \min_{a' \in [k]}
    \sum_{a \neq a'}
    \qta 
    \frac{\cG^2}{\gamma_t}
  }
  \nn
  &
  =
  \Expect{
    \frac{2 \min_{a' \in [k]}(1 - q_{ta'}) \cG^2}{\beta_t \gamma_t}
  }
  =
  \Expect{
    \frac{2 \cG^2 b_t}{\beta_t \gamma_t}
  }
  \per 
  \label{eq:stab_bound_global_2}
\end{align}
Combining~\eqref{eq:stab_bound_global_1} and~\eqref{eq:stab_bound_global_2} yields 
\begin{align}
  \Expect{
    \linner \hat{y}_t, \qt - q_{t+1} \rinner 
    -
    D_t (q_{t+1} , \qt)
  }
  \leq 
  \Expect{
    \frac{2 \cG^2 b_t}{\beta_t \gamma_t}
  }
  \com 
  \n 
\end{align}
which completes the proof.
\begin{comment} % === penalty term === 
We next consider the penalty term.
We have
\begin{align}
  &
  \sum_{t=1}^T
  \left( \psi_t(q_{t+1}) - \psi_{t+1}(q_{t+1}) \right)
  +
  \psi_{T+1} (e_{a^*})
  -
  \psi_1(q_1)
  \nn
  &
  =
%  \sum_{a \in [k]} 
%  \left(
  \sum_{t=1}^T
  \left( \beta_t - \beta_{t+1} \right) (- H(q_{t+1}))
%  \right)
  +
  \beta_1 
%  \sumak 
  H(q_{1})
%  \nn
%  &=
%  \sum_{a \in \Pi} \left(
%  \sum_{t=1}^T
%  \left( \beta_t - \beta_{t+1} \right) h(q_{t+1,a})
%  \right)
%  -
%  \beta_1 
%  \sum_{a \in \Pi}
%  h(q_{1,a})
%  \nn
%  &
%  \nonumber
  =
  \sum_{t=1}^T
  \left( \beta_{t+1} - \beta_t \right) a_{t+1}
  +
  \beta_1 a_1
  \com
\end{align}
%where the first equality follows from $\psi(e_{a^*}) = 0$,
%the second equality follows from $h(q_{t+1, a}) = 0$ for $a \not\in \Pi$,
where we recall that $a_t$ and $b_t$ are defined in \eqref{eq:defatbtglobal}.
Combining the above arguments with Lemma~\ref{lem:FTRL} and~\eqref{eq:stab_bound_instance_global}, we have
\begin{align}
  R_T
  =
  O \left(
    \sum_{t=1}^T
    \left(
      \gamma_t
      +
      \frac{c_{\calG}^2 b_t}{\beta_t \gamma_t}
      +
      (\beta_{t+1} - \beta_t) a_{t+1}
    \right)
    +
    \beta_1 a_1
  \right)
  \com
\end{align}
which completes the proof of Lemma~\ref{lem:decompose_global}.
\end{comment}
\end{proof}
%\end{proofof}

\subsection{Proof of Proposition~\ref{prop:global}}
\begin{proof}
Note that the penalty term can be rewritten as 
\begin{align}
  &
  \sum_{t=1}^T
  \left( \psi_t(q_{t+1}) - \psi_{t+1}(q_{t+1}) \right)
  +
  \psi_{T+1} (e_{a^*})
  -
  \psi_1(q_1)
  \nn
  &
  =
  \sum_{t=1}^T
  \left( \beta_t - \beta_{t+1} \right) (- H(q_{t+1}))
  +
  \beta_1 
  H(q_{1})
  =
  \sum_{t=1}^T
  \left( \beta_{t+1} - \beta_t \right) a_{t+1}
  +
  \beta_1 a_1
  \com
  \n 
\end{align}
%where the first equality follows from $\psi(e_{a^*}) = 0$,
%the second equality follows from $h(q_{t+1, a}) = 0$ for $a \not\in \Pi$,
where we recall that the definition of $a_t$ in \eqref{eq:defatbtglobal}.
Combining this with Lemmas~\ref{lem:FTRL} and~\ref{lem:stability_bound_global}, we have
\begin{align}
  R_T
  \leq 
%  O \left(
    \underbrace{
    \sumT 
    \left(
      \gamma'_t + \frac{\cG}{2 \beta_t}
    \right)
    }_{\text{transformation term}}
    +
    \underbrace{
    \sumT
    \frac{2 c_{\calG}^2 b_t}{\beta_t \gamma_t}
    }_{\text{stability term}}
    +
    \underbrace{
    \sumT
    \left(
      (\beta_{t+1} - \beta_t) a_{t+1}
    \right)
    +
    \beta_1 a_1
    }_{\text{penalty term}}
%  \right)
  \label{eq:decompose_global}
  \com
\end{align}
where the first, second, and remaining terms correspond to the transformation, stability, and penalty terms, respectively.
We bound each term of the RHS in~\eqref{eq:decompose_global} in the following.

Note that
$b_t \leq 1$ and 
\begin{align}
  b_t = 1 - \max_{a\in[k]} \qta 
  \leq 
  - \max_{a\in[k]} \qta  \log\prx{\max_{a'\in[k]} q_{t,a'} }
  \leq 
  - \sum_{a\in[k]} \qta \log \qta
  = 
  a_t 
  \leq 
  \log \kpi 
  \com 
  \label{eq:misc_atbt}
\end{align}
where the first inequality follows from the inequality $1 - x \leq - x \log x$ for $x > 0$.
We define $z_t = \frac{a_{t+1} b_t}{\gamma'_t}$
and $Z_t = \sum_{s=1}^t z_s$.
\paragraph{[Bounding the penalty term]}
%Then,
From the definition of $\gamma'_t$, we can bound $z_t$ from below as
\begin{align}
  z_t
  =
  \frac{a_{t+1}b_t}{\gamma'_t}
  =
  \frac{4 a_{t+1}}{c_1}
  \left(
  c_1
  +
  B_t^{1/3}
  \right)
  \geq
  4 a_{t+1}
  \geq
  4 b_{t+1}
  \com 
  \label{eq:zb}
\end{align}
where the second inequality follows from $b_t \leq a_t$ in~\eqref{eq:misc_atbt}.
Further, we can bound $z_t$ from above as
\begin{align}
  z_t
  =
  \frac{4 a_{t+1}}{c_1} 
  \left(
    c_1 +
    B_t^{1/3}
  \right)
  \leq
  4
  \left(
    c_1 +
    B_t^{1/3}
  \right)
  \leq
  4 \brx{
  c_1
  +
  \left( 
    b_1
    +
    \sum_{s=1}^{t-1} z_s
  \right)^{1/3}
  }
  \leq
  8
  \left(
  c_1
  +
  Z_{t-1}
  % \sum_{s=1}^{t-1} \zeta_s
  \right)
  \com
\label{eq:zZ}
\end{align}
where the first inequality follows from
$a_{t+1} \leq \log \kpi$ and $c_1 \geq \log \kpi$,
and the second inequality follows from
$B_t = b_1 + \sum_{s=1}^{t-1} b_{s+1} \leq b_1 + \sum_{s=1}^{t-1} z_s$,
and the last inequality follows from $b_1 \leq 1 \leq c_1$.
From this, since $\beta_t$ satisfies $\beta_{t+1} - \beta_t = \frac{z_t}{a_{t+1}} \frac{c_2}{(c_1 + Z_{t-1})^{1/2}}$, we can bound the penalty term in~\eqref{eq:decompose_global} as
\begin{align}
  \nonumber
  &
  \sum_{t=1}^T 
  ( \beta_{t+1} - \beta_t ) a_{t+1}
  =
  c_2
  \sum_{t=1}^T 
  \frac{z_{t}}{ \sqrt{ c_1  +  Z_{t-1} } }
  =
  5 c_2
  \sum_{t=1}^T 
  \frac{Z_{t} - Z_{t-1}}{ 4 \sqrt{ c_1  +  Z_{t-1} } + \sqrt{c_1 + Z_{t-1}} }
  \\
  &
%  \leq
  <
  5 c_2
  \sum_{t=1}^T 
  \frac{Z_{t} - Z_{t-1}}{ \sqrt{ c_1  +  Z_{t} } + \sqrt{c_1 + Z_{t-1}} }
  =
  5 c_2
  \sum_{t=1}^T 
  \left( \sqrt{ c_1  +  Z_{t} } - \sqrt{c_1 + Z_{t-1}} \right)
  % =
  \leq
  5
  c_2
  \sqrt{ 
    Z_T
  }
  \com
  \label{eq:BB1}
\end{align}
where the first equality follows from the definitions of $\beta_t$ and $z_t$,
and the first inequality follows since 
\begin{align}
  \sqrt{c_1 + Z_t} 
  \leq 
  \sqrt{c_1 + Z_{t-1}} + \sqrt{z_t} 
  <
  % (1 + \sqrt{8})
  4
  \sqrt{c_1 + Z_{t-1}}
  \com 
  \n 
\end{align}
where the last inequality follows from~\eqref{eq:zZ}.

\paragraph{[Bounding the stability term and transformation terms]}
We define $w_t = \frac{b_t}{\gamma'_t}$ and $W_t = \sum_{s=1}^t w_s$.
From the definition of $\gamma'_t$,
we have
\begin{align}
  w_t
  =
  \frac{b_t}{\gamma'_t}
  =
  4\prx{
    1
    +
    \frac{1}{c_1}
    B_t^{1/3}
  }
  \geq 
  4
  \per
  \label{eq:boundw}
\end{align}
Using $b_t \leq 1$, we can confirm that $w_t$ satisfies
\begin{align}
  w_{1} \leq 8 \com 
  \quad
  w_{t+1} = 
  4 \prx{
  1 +
  \frac{1}{c_1}
  B_{t+1}^{1/3}
  }
  \leq 
  \prx{
  1 +
  \frac{1}{c_1}
  (B_{t} + 1)^{1/3} 
  }
  \leq
  2 w_t
  \com 
  \quad
  w_{t} 
  \leq  
  4 ( 1 + t^{1/3} )
  \per
  \label{eq:boundw2}
\end{align}
% $w$
% where the last inequality follows from $c_1 \leq c_3$.
Then $\beta_t$ can be bounded as
\begin{align}
  \beta_t
  &
%  =
  \geq 
  c_2
  +
  c_2
  \sum_{s=1}^{t-1} \frac{w_s}{ \sqrt{ c_1 + Z_{s-1} } }
  \geq
  \frac{c_2}{\sqrt{
    c_1 + Z_t
  }}
  \left(
    1
    +
    \sum_{s=1}^{t-1} w_s
  \right)
  \nn
  &
  =
  \frac{c_2}{\sqrt{
    c_1 + Z_t
  }}
  \left(
    1
    +
    W_{t-1}
  \right)
  \label{eq:beta_lower}
  \\ 
  &
  \geq
  \frac{c_2 t}{\sqrt{
    c_1 + Z_t
  }} 
  \com
  \label{eq:beta_lower_2}
\end{align}
where the second inequality follows from \eqref{eq:boundw}.

Using the above inequalities, 
we can bound the stability term in~\eqref{eq:decompose_global} as 
\begin{align}
  \sum_{t=1}^T
  \frac{b_t}{\gamma_t \beta_t}
  \leq
  \sum_{t=1}^T
  \frac{b_t}{\gamma'_t \beta_t}
  &
  \leq
  \sum_{t=1}^T
  \frac{\sqrt{c_1 + Z_t}}{c_2}
  \frac{w_t}{1 + W_{t-1}}
  \leq
  \frac{\sqrt{c_1 + Z_T}}{c_2}
  \sum_{t=1}^T
  \frac{w_t}{1 + W_{t-1}}
  \nn 
  &
  \leq
  O \left(
    \frac{\sqrt{c_1 + Z_T}}{c_2}
    \log \left(
      1 + W_{T}
    \right)
  \right)
  \leq
  O \left(
    \frac{\sqrt{c_1 + Z_T}}{c_2}
    \log T
  \right)
  \com
  \label{eq:BB2}
\end{align}
where
the first inequality follows from~\eqref{eq:beta_lower},
the last inequality follows from \eqref{eq:boundw2},
and the fourth inequality can be shown by taking the sum of the following inequality:
\begin{align*}
  \log (1 + W_{t}) - \log (1  + W_{t-1} )
  =
  \log 
    \frac{1 + W_t}{ 1 + W_{t-1}}
  =
  \log \left( 
    1 + 
    \frac{w_t}{ 1 + W_{t-1}}
  \right)
  \geq
  \frac{1}{2}
  \cdot
  \frac{w_t}{ 1 + W_{t-1}}
  \com
\end{align*}
where the inequality follows from the fact that 
$\log(1+x) \geq \frac{1}{2} x$ holds for any $x \in [0, 2]$
and that \eqref{eq:boundw2} implies
$ \frac{w_t}{ 1 + W_{t-1}} \leq \frac{w_t}{ 1 + w_t/2} \leq 2$ for all $t \in [T]$.

Using~\eqref{eq:beta_lower_2}, we can bound the second part of the transformation term in~\eqref{eq:decompose_global} as
\begin{align}
  \sum_{t=1}^T
  \frac{1}{\beta_t}
  \leq
  \sum_{t=1}^T
  \frac{ \sqrt{c_1 + Z_t}}{c_2  t}
  \leq
  \frac{ \sqrt{c_1 + Z_T}}{c_2 }
  \sum_{t=1}^T
  \frac{1}{t}
  =
  O \left(
    \frac{\sqrt{c_1 + Z_T}}{c_2 }
    \log T
  \right)
  \per
  \label{eq:BB3}
\end{align}
In addition, from the definition of $\gamma'_t$,
we can bound the remaining part of the transformation term in~\eqref{eq:decompose_global} as
\begin{align}
  \sum_{t=1}^T \gamma'_{t}
  =
  \frac{c_1}{4}
  \sum_{t=1}^T
  \frac{b_t}{c_1 + B_t^{1/3} }
  \leq
  \frac{3 c_1}{8}
  \sum_{t=1}^T
  \left(
    B_{t}^{2/3} - B_{t-1}^{2/3}
  \right)
  \leq
  \frac{3 c_1}{8}
  B_{T}^{2/3} 
  \com
  \label{eq:BB4}
\end{align}
where the first inequality follows from
$y^{2/3} - x^{2/3} \geq \frac{2}{3} ( y - x ) y^{-1/3} $,
which holds for any $y \geq x > 0$.
Combining \eqref{eq:BB1}, \eqref{eq:BB2}, \eqref{eq:BB3}, and \eqref{eq:BB4},
we can bound the right-hand side of~\eqref{eq:decompose_global} as 
\begin{align}
  &
  \sum_{t=1}^T
  \left(
    \gamma_t 
    + 
    \frac{2 \cG^2 b_t}{\gamma_t \beta_t}
    +
    (\beta_{t+1} - \beta_t) a_{t+1}
  \right)
  + \beta_1 a_1
  \nn
  &
  =
  \sum_{t=1}^T
  \left(
    \gamma'_t 
    +
    \frac{\cG}{2 \beta_t}
    +
    \frac{2 \cG^2 b_t}{\gamma_t \beta_t}
    +
    (\beta_{t+1} - \beta_t) a_{t+1}
  \right)
  + \beta_1 a_1
  \nn
  &
  =
  O\left(
    c_1 B_T^{2/3}
    +
    \left(
      \frac{ \cG^2 \log T }{c_2}
      +
      c_2
    \right)
    \sqrt{
      c_1
      +
      Z_T
    }
    + \beta_1 a_1
  \right)
  \nn
  &
  =
  O\left(
    c_1 B_T^{2/3}
    +
    \left(
      \frac{ \cG^2 \log T }{c_2}
      +
      c_2
    \right)
    \sqrt{
      c_1
      +
      \sum_{t=1}^T \frac{a_{t+1} }{c_1}
      \left(
        c_1
        +
        B_t^{1/3}
      \right)
    }
    + \beta_1 a_1
  \right)
  \nn
  &
  =
  O\left(
    c_1 B_T^{2/3}
    +
    \frac{1}{\sqrt{c_1}}
    \left(
      \frac{ \cG^2 \log T }{c_2}
      +
      c_2
    \right)
    \sqrt{
      c_1 ^2
      +
      \left(
        \log \kpi
        +
        A_T
      \right)
      \left(
        c_1
        +
        B_T^{1/3}
      \right)
    }
    + \beta_1 \log \kpi
  \right)
  \com
  \n 
\end{align}
where in the third inequality we used~\eqref{eq:zb}
and in the last equality we used $a_{T+1} = O(\log \kpi)$.
\end{proof}

\subsection{Proof of Theorem~\ref{thm:globally_obs}}\label{subsec:proof_of_theorem_globally}
%\begin{proofof}{Theorem~\ref{thm:globally_obs}}
\begin{proof}
%\begin{proof}{Theorem~\ref{thm:globally_obs}}
We define $c_1$ and $c_2$ by 
\begin{align}\label{eq:defc1c2}
  c_1 =
  \Theta \prx{
  	\prn[\big]{\cG^2 \log(T) \log(\kpi T)}^{1/3}
  }
  \quad
  \mbox{and}
  \quad
    c_2 =
    \Theta \prx{
      \sqrt{\cG^2 \log T}
    }
  \com
\end{align}
which implies that $\tilde{c} = c_1/\sqrt{\log(\kpi T)}$.
We have 
\begin{align}
  B_T
  =
  \sumT \left(1 - \max_{a\in\Pi} \qta\right)
  \leq 
  \sum_{t=1}^T
  \left(
    1 - q_{t,a^*}
  \right)
  =
  Q(a^*)
  \per 
  \label{eq:BT_bound}
\end{align}
We first consider the adversarial regime.
Since $A_T \leq T \log \kpi$ and $B_T \leq T$, using Proposition~\ref{prop:global} we have
\begin{align}
  R_T 
  &=
  O\prx{
      c_1 T^{2/3} + \tilde{c} \sqrt{c_1^2 + (\log \kpi + T \log \kpi) (c_1 + T^{1/3})}
      +
      \beta_1 \log \kpi
  }
  \nn
  &=
  O\prx{
    \prx{c_1 + \tilde{c} \sqrt{\log \kpi}}
    T^{2/3}
    +
    \sqrt{\frac{\log \kpi}{\log(\kpi T)}} c_1^{3/2} T^{1/2}
    +
    \frac{c_1^2}{\sqrt{\log(\kpi T)}}
    +
    \beta_1 \log \kpi
  }
  \per
  \label{eq:regret_bound_global_adv_1}
\end{align}
We next consider the adversarial regime with a self-bounding constraint.
When ${Q}(a^*) \le c_1^3$ we can show that the obtained bound is smaller than the desired bound as follows.
When $Q(a^*) \le \e \le c_1^3$, using Lemma~\ref{lem:entropy_bound} and~\eqref{eq:BT_bound}, we have $A_T \le \e\log(\kpi T)$ and $B_T \le \e$.
Hence, from Proposition~\ref{prop:global}, we have
\begin{align}
  R_T
  =
  O \prx{
    c_1 + \tilde{c} \sqrt{c_1^2 + \log(\kpi T) c_1}
    +
    \beta_1 \log \kpi
  }
  =
  O \prx{
    \frac{c_1^2}{\sqrt{\log(\kpi T)}}
    +
    \beta_1 \log \kpi
  }
  =
  O \prx{
    c_1^3
  }
  \per
  \n 
\end{align}
When $\e < Q(a^*) \le c_1^3$, % using Lemma~\ref{lem:boundATBT} 
using Lemma~\ref{lem:entropy_bound} and~\eqref{eq:BT_bound}
we have $A_T \le c_1^3 \log(\kpi T)$ and $B_T \le c_1^3$.
Hence, from Proposition~\ref{prop:global}, we have
\begin{align}
  R_T
  &=
  O \prx{
    c_1^3 + \tilde{c} \sqrt{c_1^2 + \prx{\log\kpi + c_1^3  \log(\kpi T)} c_1}
    +
    \beta_1 \log \kpi
  }
  \nn  
  &=
  O \prx{
    {\cG^2 \log(T) \log(\kpi T)}
  }
  =
  O \prx{
    c_1^3
  }
  \per
  \n 
\end{align}
Hence, we only need to consider the case of $Q(a^*) > c_1^3$ in the following.
Since $Q(a^*) \ge \e$ we have $A_T \le Q(a^*) \log(\kpi T)$.
Using Proposition~\ref{prop:global} with this inequality, Lemma~\ref{lem:entropy_bound}, and~\eqref{eq:BT_bound}, we have
\begin{align}
  R_T
  &
  =
  O \left(
    \E \left[
    c_1 {Q}(a^*)^{2/3}
    +
    \tilde{c} 
    \sqrt{
    c_1^2 
    + 
    \prn[\big]{\log \kpi + {Q}(a^*)\log(\kpi T)}
    \left(c_1 + {Q}(a^*)^{1/3} \right) }
    \right]
    +
    \beta_1 \log \kpi
  \right)
  \nn
  &
  \le 
  O \left(
    \E \left[
    c_1 {Q}(a^*)^{2/3}
    +
    \tilde{c} 
    \sqrt{ {Q}(a^*) \log(\kpi T) {Q}(a^*)^{1/3} }
    \right]
  \right)
  \nn 
  &\leq
  O \left(
    \left(
    c_1 
    +
    \tilde{c} 
    \sqrt{\log(\kpi T)}
    \right)
    \bar{Q}(a^*)^{2/3}
  \right)
  \com
  \label{eq:regret_bound_global_ARSBC_1}
\end{align}
where the first inequality follows from $Q(a^*) > c_1^3$,
and the second inequality follows from Jensen's inequality.
Hence, by~\eqref{eq:regret_bound_global_adv_1} and~\eqref{eq:regret_bound_global_ARSBC_1},
there exists 
$
%\begin{align}
  \hat{c} = O \left( c_1 + \tilde{c}\sqrt{ \log(\kpi T)} \right)
%  \per
%\end{align}
$ satisfying
and $R_T \le \hat{c} \, \bar{Q}(a^*)^{2/3}$ for the adversarial regime with a self-bounding constraint
and $R_T \le \hat{c} \, T^{2/3}$ for the adversarial regime.

Now, by recalling the definitions of $c_1$ and $c_2$ in~\eqref{eq:defc1c2}, we have
\begin{align}
  \hat{c} 
  &= 
  O\prx{
  \prn[\big]{\cG^2 \log(T) \log(\kpi T)}^{1/3}
  +
  \frac{1}{\sqrt{c_1}} \left( \frac{\cG^2 \log T}{c_2} + {c_2} \right) \sqrt{\log(\kpi T)}
  }
  \nn
  &=
  O \prx{
  	(\cG^2 \log(T) \log(\kpi T))^{1/3}
  }
  \com
  \label{eq:final_c_hat}
\end{align}
which gives the desired bounds for the adversarial regime.

%For the stochastic regime, 
For the adversarial regime with a self-bounding constraint, 
from the above inequality $R_T \le \hat{c} \, \bar{Q}(a^*)^{2/3}$ and Lemma~\ref{lem:selfQ} with $c = 1/2 \leq 1 - \gamma_t$, we have for any $\lambda \in (0, 1]$ that
\begin{align}
  R_T
  &=
  (1 + \lambda) R_T - \lambda R_T
%  \nn
%  &
  \le
  (1 + \lambda) \hat{c} \cdot \bar{Q}(a^*)^{2/3}
  -
  \frac\lambda2 \Deltamin \bar{Q}(a^*)
  +
  \lambda C
%  =
%  \left( \frac{2^3 \hat{c}^3}{\Deltamin^2 } \right)^{1/3}
%  \left( \Deltamin \bar{Q}(a^*) \right)^{2/3}
%  -
%  \frac12 \Deltamin \bar{Q}(a^*)
  \nn
  &
  \leq 
  O \left(
  \frac{(1 + \lambda)^3 \hat{c}^3}{ \lambda^2 \Deltamin^2 }
  \right)
  + \lambda C
%  \nn 
%  &
  =
  O \left(
    \left(1 + \frac{1}{\lambda^2} \right) \frac{\hat{c}^3}{\Deltamin^2}
  \right) 
  + \lambda C
  \com
  \label{eq:regret_bound_global_1}
\end{align}
where the first inequality follows from the inequality
%$a x^{2/3} - (x/2) \le 16 a^3/27$ for $a > 0$.
$a x^{2/3} - b(x/2) \leq 16a^3/(27b^2)$ for $a, b > 0$,
and the last equality follows since $\lambda \in (0,1]$.
Combining~\eqref{eq:final_c_hat} and~\eqref{eq:regret_bound_global_1},
and taking $\lambda = O\left( \frac{\cG^2 \log(T) \log(\kpi T)}{C \Deltamin^2} \right)$, we have the desired result for the adversarial regime with a self-bounding constraint.
%\end{proofof}
\end{proof}

\section{Regret Bounds when the Optimization Problem is Not Exactly Solved}\label{sec:eps_opt_bound}
This section discusses the regret bound when the optimization problem~\eqref{eq:optprime} is not exactly solved, on which a similar discussion is given in~\citet[Chapter 37]{lattimore2020book}.
We say that the optimization problem~\eqref{eq:optprime} can be solved with precision $\ep \geq 0$, if 
we can obtain $G \in \calH$ and $p \in \calP'_k(q)$ such that 
\begin{align}
  \max_{x \in [d]} \Bigg[
  \frac{(p-q)^\top \lossmat e_x + \bias_q(G ; x)}{\eta} 
  + 
  \frac{1}{\eta^2} \sumak 
  p_a 
  \innerprod{q}{\xi\left( \frac{\eta G(a, \fbmat_{ax})}{p_a} \right)}
  \Bigg] 
  \le
  \opt'_q(\eta) + \epsilon
  \per
  \n 
\end{align}
Then if we run Algorithm~\ref{alg:MixedExp3Locally} solving~\eqref{eq:optprime} with precision $\ep$,
one can see that we can obtain the following regret bounds.
For the adversarial regime with a $(\Delta, C, T)$ self-bounding constraint, we have
\begin{align}
  R_T
  =
  O\prx{
    \frac{\prn[\big]{m k^2 + {\ep^2}/{(m k)}}^2 \log(T) \log(\kpi T)}{\Deltamin}
    +
    \sqrt{\frac{C \prn[\big]{m k^2 + {\ep^2}/{(m k)}}^2 \log(T) \log(\kpi T)}{\Deltamin}
    }
  }
  \com
  \n 
\end{align}
and for the adversarial regime, we have
\begin{align}
  R_T
  =
  O\prx{
    m k^{3/2} \sqrt{T \log(T) \log \kpi}
    +
    \epsilon \, \frac{\sqrt{T \log(\kpi) \log(T)}}{mk^{3/2}}
  }
  \per
  \n 
\end{align}

Here, we give an overview of the analysis.
Considering that the optimization problem in~\eqref{eq:optprime} can be solved with precision $\ep \geq 0$,
the RHS of \eqref{eq:equality_local} can be replaced with $3m^2 k^3 + \ep$.
%  Then the RHS in~\eqref{eq:sum_etaV_bound} can be replaced with $(3m^2k^3 + \ep) \sumT \eta_t$. 
Then a similar analysis as the proof of Theorem~\ref{thm:locally_obs} leads to 
\begin{align}
  R_T
  \leq 
  O\prx{
    \prx{mk^{3/2} + \frac{\ep}{mk^{3/2}}}
    \sqrt{\log(T) \sumT H(q_t)}
  }  
  \per 
  \n 
\end{align}
Using $\sumT H(q_t) \leq T \log \kpi$ gives the bound for the adversarial regime.
Replacing $m^2 k^{4}$ with $\prx{mk^2 + \frac{\ep}{mk}}^2$ in~\eqref{eq:regert_selfbound_local} and appropriately choose $\lambda$ (note that we can take $\lambda$ depending on $\ep$), we obtain the desired bound for the adversarial regime with a self-bounding constraint.
%\end{proofsketchof}

\end{document}